%% file: 0_main.tex
\pdfoutput=1
\documentclass{article}

\PassOptionsToPackage{numbers, compress}{natbib}

\usepackage[main, final]{neurips_2026}

\usepackage{annotate-equations}
\usepackage[utf8]{inputenc} 
\usepackage[T1]{fontenc}    
\usepackage{hyperref}       
\usepackage{url}            
\usepackage{booktabs}       
\usepackage{caption}        
\usepackage{amsfonts}       
\usepackage{nicefrac}       
\usepackage{microtype}      
\usepackage{xcolor}         
\usepackage{enumitem}
\usepackage{amsthm}
\usepackage{amsmath}
\usepackage{bm}
\usepackage{amssymb}
\usepackage{mathtools}
\usepackage[ruled, linesnumbered]{algorithm2e}
\usepackage{makecell}
\usepackage{multirow}
\usepackage{subfigure} 
\usepackage{booktabs}
\usepackage{wrapfig}
\usepackage{extarrows}
\usepackage{colortbl}
\usepackage{tikz}

\usepackage{CJKutf8}

\newcommand{\ie}{\textit{i.e.}}
\newcommand{\eg}{\textit{e.g.}}

\definecolor{mygray}{RGB}{230,230,230}
\definecolor{myblue}{RGB}{225,244,252}
\definecolor{lightgreen}{RGB}{76,175,80}

\theoremstyle{definition}

\newtheorem{theorem}{Theorem}[section]
\newtheorem{lemma}[theorem]{Lemma}

\newtheorem{definition}[theorem]{Definition}

\newtheorem{proposition}[theorem]{Proposition}

\newcommand{\ourmethod}{{EMC}\xspace}

\usepackage{etoc}
\etocdepthtag.toc{mtchapter}
\etocsettagdepth{mtchapter}{subsection}
\etocsettagdepth{mtappendix}{none}

\usepackage{pifont}

\definecolor{myblue}{RGB}{225,244,252}

\title{Efficient Memory Crystallization for Graph Learning under Non-Stationary Distribution Shifts}

\author{%
  {Yue Hou$^{1,2}$, Ruomei Liu$^{1}$, Yingke Su$^{2}$, Junran Wu$^{1}$\thanks{Corresponding author}, Ke Xu$^{1}$,}\\
  $^{1}$State Key Laboratory of Complex \& Critical Software Environment, Beihang University\\
  $^{2}$Shen Yuan Honors College, Beihang University\\
  \texttt{\{hou\_yue, rmliu, suyingke, wu\_junran, kexu\}@buaa.edu.cn}
}

\begin{document}
\begin{CJK}{UTF8}{gbsn}

\maketitle

\begin{abstract}
Deep graph learning models deployed in real-world systems often need to cope with non-stationary environments, where the underlying graph distribution drifts continually over time. Prevailing solutions rely on training auxiliary generative modules to synthesize memory graphs for cross-domain adaptation, which incurs substantial computational overhead and scales poorly under prolonged distribution shifts. We argue that a more economical path exists: rather than generating memory, one can crystallize it. To this end, we propose Efficient Memory Crystallization (EMC), a training-free test-time framework that distills each incoming graph domain into a compact, semantically faithful memory through a closed-form solution to a memory-oriented distribution-matching objective, thereby eliminating redundant domain information under continual covariate shifts. To preserve both generalizability and adaptability as the model traverses a long sequence of target domains, EMC further models inter-domain dependencies through state-evolving memories and admits a theoretically grounded, tighter generalization error bound than direct adaptation. Extensive experiments demonstrate the superior performance of EMC over state-of-the-art baselines on graphs under non-stationary distribution shifts, while reducing average runtime by 87.4\% and GPU memory consumption by 92.4\% relative to the recent competitor, making continual graph adaptation practical at scale.
\end{abstract}

\input{chapter/1-introduction}
\input{chapter/3-method}
\input{chapter/4-experiment}
\input{chapter/2-related}
\input{chapter/conclusion}

\section*{Acknowledgements}
\vspace{-2.0mm}
This work has been supported by CCSE project (CCSE-2024ZX-09), by NSFC (Grant No. U25B2070), and by the Academic Excellence Foundation of BUAA for PhD Students.

{\small{
\bibliographystyle{plain}
\bibliography{reference.bib}
}}


\newpage
\appendix



\etocdepthtag.toc{mtappendix}
	\etocsettagdepth{mtchapter}{none}
	\etocsettagdepth{mtappendix}{subsection}
	\renewcommand{\contentsname}{Technical Appendices and Supplementary Material}
	\tableofcontents

\input{appendix/appendix}

\clearpage

\newpage
\section*{NeurIPS Paper Checklist}

\begin{enumerate}

\item {\bf Claims}
    \item[] Question: Do the main claims made in the abstract and introduction accurately reflect the paper's contributions and scope?
    \item[] Answer: \answerYes{} 
    \item[] Justification: Abstract and introduction accurately reflect the paper's contributions and scope.
    \item[] Guidelines:
    \begin{itemize}
        \item The answer NA means that the abstract and introduction do not include the claims made in the paper.
        \item The abstract and/or introduction should clearly state the claims made, including the contributions made in the paper and important assumptions and limitations. A No or NA answer to this question will not be perceived well by the reviewers. 
        \item The claims made should match theoretical and experimental results, and reflect how much the results can be expected to generalize to other settings. 
        \item It is fine to include aspirational goals as motivation as long as it is clear that these goals are not attained by the paper. 
    \end{itemize}

\item {\bf Limitations}
    \item[] Question: Does the paper discuss the limitations of the work performed by the authors?
    \item[] Answer: \answerYes{} 
    \item[] Justification: The limitations regarding the future research direction of the work are discussed in the conclusion section.
    \item[] Guidelines:
    \begin{itemize}
        \item The answer NA means that the paper has no limitation while the answer No means that the paper has limitations, but those are not discussed in the paper. 
        \item The authors are encouraged to create a separate "Limitations" section in their paper.
        \item The paper should point out any strong assumptions and how robust the results are to violations of these assumptions (e.g., independence assumptions, noiseless settings, model well-specification, asymptotic approximations only holding locally). The authors should reflect on how these assumptions might be violated in practice and what the implications would be.
        \item The authors should reflect on the scope of the claims made, e.g., if the approach was only tested on a few datasets or with a few runs. In general, empirical results often depend on implicit assumptions, which should be articulated.
        \item The authors should reflect on the factors that influence the performance of the approach. For example, a facial recognition algorithm may perform poorly when image resolution is low or images are taken in low lighting. Or a speech-to-text system might not be used reliably to provide closed captions for online lectures because it fails to handle technical jargon.
        \item The authors should discuss the computational efficiency of the proposed algorithms and how they scale with dataset size.
        \item If applicable, the authors should discuss possible limitations of their approach to address problems of privacy and fairness.
        \item While the authors might fear that complete honesty about limitations might be used by reviewers as grounds for rejection, a worse outcome might be that reviewers discover limitations that aren't acknowledged in the paper. The authors should use their best judgment and recognize that individual actions in favor of transparency play an important role in developing norms that preserve the integrity of the community. Reviewers will be specifically instructed to not penalize honesty concerning limitations.
    \end{itemize}

\item {\bf Theory assumptions and proofs}
    \item[] Question: For each theoretical result, does the paper provide the full set of assumptions and a complete (and correct) proof?
    \item[] Answer: \answerYes{} 
    \item[] Justification: The full set of assumptions and proofs is provided in the paper.
    \item[] Guidelines:
    \begin{itemize}
        \item The answer NA means that the paper does not include theoretical results. 
        \item All the theorems, formulas, and proofs in the paper should be numbered and cross-referenced.
        \item All assumptions should be clearly stated or referenced in the statement of any theorems.
        \item The proofs can either appear in the main paper or the supplemental material, but if they appear in the supplemental material, the authors are encouraged to provide a short proof sketch to provide intuition. 
        \item Inversely, any informal proof provided in the core of the paper should be complemented by formal proofs provided in appendix or supplemental material.
        \item Theorems and Lemmas that the proof relies upon should be properly referenced. 
    \end{itemize}

    \item {\bf Experimental result reproducibility}
    \item[] Question: Does the paper fully disclose all the information needed to reproduce the main experimental results of the paper to the extent that it affects the main claims and/or conclusions of the paper (regardless of whether the code and data are provided or not)?
    \item[] Answer: \answerYes{} 
    \item[] Justification: Our paper fully discloses all the information needed to reproduce the main experimental results.
    \item[] Guidelines:
    \begin{itemize}
        \item The answer NA means that the paper does not include experiments.
        \item If the paper includes experiments, a No answer to this question will not be perceived well by the reviewers: Making the paper reproducible is important, regardless of whether the code and data are provided or not.
        \item If the contribution is a dataset and/or model, the authors should describe the steps taken to make their results reproducible or verifiable. 
        \item Depending on the contribution, reproducibility can be accomplished in various ways. For example, if the contribution is a novel architecture, describing the architecture fully might suffice, or if the contribution is a specific model and empirical evaluation, it may be necessary to either make it possible for others to replicate the model with the same dataset, or provide access to the model. In general. releasing code and data is often one good way to accomplish this, but reproducibility can also be provided via detailed instructions for how to replicate the results, access to a hosted model (e.g., in the case of a large language model), releasing of a model checkpoint, or other means that are appropriate to the research performed.
        \item While NeurIPS does not require releasing code, the conference does require all submissions to provide some reasonable avenue for reproducibility, which may depend on the nature of the contribution. For example
        \begin{enumerate}
            \item If the contribution is primarily a new algorithm, the paper should make it clear how to reproduce that algorithm.
            \item If the contribution is primarily a new model architecture, the paper should describe the architecture clearly and fully.
            \item If the contribution is a new model (e.g., a large language model), then there should either be a way to access this model for reproducing the results or a way to reproduce the model (e.g., with an open-source dataset or instructions for how to construct the dataset).
            \item We recognize that reproducibility may be tricky in some cases, in which case authors are welcome to describe the particular way they provide for reproducibility. In the case of closed-source models, it may be that access to the model is limited in some way (e.g., to registered users), but it should be possible for other researchers to have some path to reproducing or verifying the results.
        \end{enumerate}
    \end{itemize}

\item {\bf Open access to data and code}
    \item[] Question: Does the paper provide open access to the data and code, with sufficient instructions to faithfully reproduce the main experimental results, as described in supplemental material?
    \item[] Answer: \answerYes{} 
    \item[] Justification: The adopted datasets are public datasets. The code of \ourmethod is available.
    \item[] Guidelines:
    \begin{itemize}
        \item The answer NA means that paper does not include experiments requiring code.
        \item Please see the NeurIPS code and data submission guidelines (\url{https://nips.cc/public/guides/CodeSubmissionPolicy}) for more details.
        \item While we encourage the release of code and data, we understand that this might not be possible, so “No” is an acceptable answer. Papers cannot be rejected simply for not including code, unless this is central to the contribution (e.g., for a new open-source benchmark).
        \item The instructions should contain the exact command and environment needed to run to reproduce the results. See the NeurIPS code and data submission guidelines (\url{https://nips.cc/public/guides/CodeSubmissionPolicy}) for more details.
        \item The authors should provide instructions on data access and preparation, including how to access the raw data, preprocessed data, intermediate data, and generated data, etc.
        \item The authors should provide scripts to reproduce all experimental results for the new proposed method and baselines. If only a subset of experiments are reproducible, they should state which ones are omitted from the script and why.
        \item At submission time, to preserve anonymity, the authors should release anonymized versions (if applicable).
        \item Providing as much information as possible in supplemental material (appended to the paper) is recommended, but including URLs to data and code is permitted.
    \end{itemize}

\item {\bf Experimental setting/details}
    \item[] Question: Does the paper specify all the training and test details (e.g., data splits, hyperparameters, how they were chosen, type of optimizer, etc.) necessary to understand the results?
    \item[] Answer: \answerYes{} 
    \item[] Justification: Our paper specifies all the training and testing details.
    \item[] Guidelines:
    \begin{itemize}
        \item The answer NA means that the paper does not include experiments.
        \item The experimental setting should be presented in the core of the paper to a level of detail that is necessary to appreciate the results and make sense of them.
        \item The full details can be provided either with the code, in appendix, or as supplemental material.
    \end{itemize}

\item {\bf Experiment statistical significance}
    \item[] Question: Does the paper report error bars suitably and correctly defined or other appropriate information about the statistical significance of the experiments?
    \item[] Answer: \answerYes{} 
    \item[] Justification: Our paper shows average results over 5 runs with different random seeds and reports the variance for our metrics.
    \item[] Guidelines:
    \begin{itemize}
        \item The answer NA means that the paper does not include experiments.
        \item The authors should answer "Yes" if the results are accompanied by error bars, confidence intervals, or statistical significance tests, at least for the experiments that support the main claims of the paper.
        \item The factors of variability that the error bars are capturing should be clearly stated (for example, train/test split, initialization, random drawing of some parameter, or overall run with given experimental conditions).
        \item The method for calculating the error bars should be explained (closed form formula, call to a library function, bootstrap, etc.)
        \item The assumptions made should be given (e.g., Normally distributed errors).
        \item It should be clear whether the error bar is the standard deviation or the standard error of the mean.
        \item It is OK to report 1-sigma error bars, but one should state it. The authors should preferably report a 2-sigma error bar than state that they have a 96\% CI, if the hypothesis of Normality of errors is not verified.
        \item For asymmetric distributions, the authors should be careful not to show in tables or figures symmetric error bars that would yield results that are out of range (e.g. negative error rates).
        \item If error bars are reported in tables or plots, The authors should explain in the text how they were calculated and reference the corresponding figures or tables in the text.
    \end{itemize}

\item {\bf Experiments compute resources}
    \item[] Question: For each experiment, does the paper provide sufficient information on the computer resources (type of compute workers, memory, time of execution) needed to reproduce the experiments?
    \item[] Answer: \answerYes{} 
    \item[] Justification: Our paper provides sufficient information on the computer resources.
    \item[] Guidelines:
    \begin{itemize}
        \item The answer NA means that the paper does not include experiments.
        \item The paper should indicate the type of compute workers CPU or GPU, internal cluster, or cloud provider, including relevant memory and storage.
        \item The paper should provide the amount of compute required for each of the individual experimental runs as well as estimate the total compute. 
        \item The paper should disclose whether the full research project required more compute than the experiments reported in the paper (e.g., preliminary or failed experiments that didn't make it into the paper). 
    \end{itemize}
    
\item {\bf Code of ethics}
    \item[] Question: Does the research conducted in the paper conform, in every respect, with the NeurIPS Code of Ethics \url{https://neurips.cc/public/EthicsGuidelines}?
    \item[] Answer: \answerYes{} 
    \item[] Justification: Our research complies with the NeurIPS Code of Ethics.
    \item[] Guidelines:
    \begin{itemize}
        \item The answer NA means that the authors have not reviewed the NeurIPS Code of Ethics.
        \item If the authors answer No, they should explain the special circumstances that require a deviation from the Code of Ethics.
        \item The authors should make sure to preserve anonymity (e.g., if there is a special consideration due to laws or regulations in their jurisdiction).
    \end{itemize}

\item {\bf Broader impacts}
    \item[] Question: Does the paper discuss both potential positive societal impacts and negative societal impacts of the work performed?
    \item[] Answer: \answerYes{} 
    \item[] Justification: The societal impacts of the work are discussed in Appendix \ref{Impacts}.
    \item[] Guidelines:
    \begin{itemize}
        \item The answer NA means that there is no societal impact of the work performed.
        \item If the authors answer NA or No, they should explain why their work has no societal impact or why the paper does not address societal impact.
        \item Examples of negative societal impacts include potential malicious or unintended uses (e.g., disinformation, generating fake profiles, surveillance), fairness considerations (e.g., deployment of technologies that could make decisions that unfairly impact specific groups), privacy considerations, and security considerations.
        \item The conference expects that many papers will be foundational research and not tied to particular applications, let alone deployments. However, if there is a direct path to any negative applications, the authors should point it out. For example, it is legitimate to point out that an improvement in the quality of generative models could be used to generate deepfakes for disinformation. On the other hand, it is not needed to point out that a generic algorithm for optimizing neural networks could enable people to train models that generate Deepfakes faster.
        \item The authors should consider possible harms that could arise when the technology is being used as intended and functioning correctly, harms that could arise when the technology is being used as intended but gives incorrect results, and harms following from (intentional or unintentional) misuse of the technology.
        \item If there are negative societal impacts, the authors could also discuss possible mitigation strategies (e.g., gated release of models, providing defenses in addition to attacks, mechanisms for monitoring misuse, mechanisms to monitor how a system learns from feedback over time, improving the efficiency and accessibility of ML).
    \end{itemize}
    
\item {\bf Safeguards}
    \item[] Question: Does the paper describe safeguards that have been put in place for responsible release of data or models that have a high risk for misuse (e.g., pretrained language models, image generators, or scraped datasets)?
    \item[] Answer: \answerNA{} 
    \item[] Justification: The paper poses no such risk.
    \item[] Guidelines:
    \begin{itemize}
        \item The answer NA means that the paper poses no such risks.
        \item Released models that have a high risk for misuse or dual-use should be released with necessary safeguards to allow for controlled use of the model, for example by requiring that users adhere to usage guidelines or restrictions to access the model or implementing safety filters. 
        \item Datasets that have been scraped from the Internet could pose safety risks. The authors should describe how they avoided releasing unsafe images.
        \item We recognize that providing effective safeguards is challenging, and many papers do not require this, but we encourage authors to take this into account and make a best faith effort.
    \end{itemize}

\item {\bf Licenses for existing assets}
    \item[] Question: Are the creators or original owners of assets (e.g., code, data, models), used in the paper, properly credited and are the license and terms of use explicitly mentioned and properly respected?
    \item[] Answer: \answerYes{} 
    \item[] Justification: The adopted datasets and compared baselines are properly credited.
    \item[] Guidelines:
    \begin{itemize}
        \item The answer NA means that the paper does not use existing assets.
        \item The authors should cite the original paper that produced the code package or dataset.
        \item The authors should state which version of the asset is used and, if possible, include a URL.
        \item The name of the license (e.g., CC-BY 4.0) should be included for each asset.
        \item For scraped data from a particular source (e.g., website), the copyright and terms of service of that source should be provided.
        \item If assets are released, the license, copyright information, and terms of use in the package should be provided. For popular datasets, \url{paperswithcode.com/datasets} has curated licenses for some datasets. Their licensing guide can help determine the license of a dataset.
        \item For existing datasets that are re-packaged, both the original license and the license of the derived asset (if it has changed) should be provided.
        \item If this information is not available online, the authors are encouraged to reach out to the asset's creators.
    \end{itemize}

\item {\bf New assets}
    \item[] Question: Are new assets introduced in the paper well documented and is the documentation provided alongside the assets?
    \item[] Answer: \answerNA{} 
    \item[] Justification: Our paper does not release new assets.
    \item[] Guidelines:
    \begin{itemize}
        \item The answer NA means that the paper does not release new assets.
        \item Researchers should communicate the details of the dataset/code/model as part of their submissions via structured templates. This includes details about training, license, limitations, etc. 
        \item The paper should discuss whether and how consent was obtained from people whose asset is used.
        \item At submission time, remember to anonymize your assets (if applicable). You can either create an anonymized URL or include an anonymized zip file.
    \end{itemize}

\item {\bf Crowdsourcing and research with human subjects}
    \item[] Question: For crowdsourcing experiments and research with human subjects, does the paper include the full text of instructions given to participants and screenshots, if applicable, as well as details about compensation (if any)? 
    \item[] Answer: \answerNA{} 
    \item[] Justification: Our paper does not involve crowdsourcing nor research with human subjects.
    \item[] Guidelines:
    \begin{itemize}
        \item The answer NA means that the paper does not involve crowdsourcing nor research with human subjects.
        \item Including this information in the supplemental material is fine, but if the main contribution of the paper involves human subjects, then as much detail as possible should be included in the main paper. 
        \item According to the NeurIPS Code of Ethics, workers involved in data collection, curation, or other labor should be paid at least the minimum wage in the country of the data collector. 
    \end{itemize}

\item {\bf Institutional review board (IRB) approvals or equivalent for research with human subjects}
    \item[] Question: Does the paper describe potential risks incurred by study participants, whether such risks were disclosed to the subjects, and whether Institutional Review Board (IRB) approvals (or an equivalent approval/review based on the requirements of your country or institution) were obtained?
    \item[] Answer: \answerNA{} 
    \item[] Justification: Our paper does not involve crowdsourcing nor research with human subjects.
    \item[] Guidelines:
    \begin{itemize}
        \item The answer NA means that the paper does not involve crowdsourcing nor research with human subjects.
        \item Depending on the country in which research is conducted, IRB approval (or equivalent) may be required for any human subjects research. If you obtained IRB approval, you should clearly state this in the paper. 
        \item We recognize that the procedures for this may vary significantly between institutions and locations, and we expect authors to adhere to the NeurIPS Code of Ethics and the guidelines for their institution. 
        \item For initial submissions, do not include any information that would break anonymity (if applicable), such as the institution conducting the review.
    \end{itemize}

\item {\bf Declaration of LLM usage}
    \item[] Question: Does the paper describe the usage of LLMs if it is an important, original, or non-standard component of the core methods in this research? Note that if the LLM is used only for writing, editing, or formatting purposes and does not impact the core methodology, scientific rigorousness, or originality of the research, declaration is not required.
    \item[] Answer: \answerNA{} 
    \item[] Justification: The core method development in this research does not involve LLMs as any important, original, or non-standard components.
    \item[] Guidelines:
    \begin{itemize}
        \item The answer NA means that the core method development in this research does not involve LLMs as any important, original, or non-standard components.
        \item Please refer to our LLM policy (\url{https://neurips.cc/Conferences/2025/LLM}) for what should or should not be described.
    \end{itemize}
 
\end{enumerate}

\end{CJK}
\end{document}

%% file: chapter/1-introduction.tex
\section{Introduction}

Graph-structured data is ubiquitous in real-world applications such as social network analysis, recommender systems, and epidemiological modeling.
To effectively learn from such data, Graph Neural Networks (GNNs)~\cite{gcn_kipf2017semi,gin_xu2019how} have emerged as powerful models that integrate node features and structural information through neighborhood aggregation.
While GNNs have demonstrated impressive performance under the assumption that training and testing data follow the same distribution, this assumption is often violated in practice due to distribution shift, that is, a mismatch between the training (source) and testing (target) distributions which poses a major challenge to the generalization ability of deep models.
To bridge this gap, Graph Domain Adaptation (GDA) methods aim to transfer knowledge from the source domain to the target domain by mitigating the distributional discrepancy.

In real-world scenarios, the source data is often inaccessible during test time due to privacy or business constraints, making the adaptation problem more challenging.
Moreover, the GNN model frequently operates in non-stationary environments, where the target graph distribution drifts continually over time.
Transferring knowledge from the source domain to entirely unseen target domains can introduce inductive bias, which exacerbates catastrophic forgetting and degrades the model's generalization ability to future domains, especially when distribution shifts arrive in a continual stream rather than a single jump, as illustrated in Figure~\ref{fig:toy}.
This necessitates addressing the problem of \emph{graph learning under non-stationary distribution shifts at test time}.

\begin{wrapfigure}{r}{0.5\textwidth}
    \centering
    \vspace{-1em}
    \includegraphics[width=0.49\textwidth]{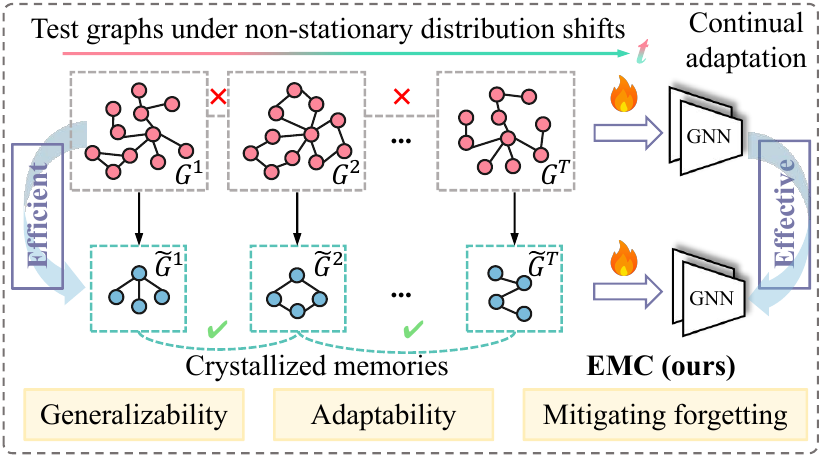}
    \vspace{-0.2cm}
    \caption{A general paradigm of test-time adaptation under non-stationary distribution shifts. Compared with the first row, which directly adapts the model on the test data without modeling \textbf{inter-domain relations} or \textbf{intra-domain redundancy}, the second row illustrates our \ourmethod, which crystallizes each domain into a compact memory and explicitly considers these factors to maintain both generalizability and adaptability, thereby ensuring efficiency and effectiveness.}
    \label{fig:toy}
    \vspace{-0.8em}
\end{wrapfigure}

Recently, there has been a growing research trend focusing on single-step domain adaptation, typically employing techniques such as Maximum Mean Discrepancy~\cite{dziugaite2015training} and adversarial learning~\cite{dan2024tfgda,qiao2023semi,zhang2018collaborative}.
However, these approaches inevitably fall short when the model is exposed to distribution shifts that arrive continually across time or domains.
As graph data evolves, models encounter new domains that require ongoing adaptation, while simultaneously losing the ability to perform well on previously seen graphs, which leads to catastrophic forgetting.
To mitigate catastrophic forgetting, {Continual Learning}~\cite{wang2024comprehensive,CoTTA,HPNs,zhang2022sparsified,zhang2023ricci} has emerged as a promising direction.
These methods typically adopt {replay mechanisms}, where the model periodically revisits selected or generated historical data to reinforce prior knowledge.
Such strategies help balance training cost with long-term knowledge retention.
However, most existing approaches rely on labeled data to guide memory selection and replay, making them ill-suited for the test-time setting where labels are unavailable.
Although GCAL~\cite{qiao2025gcal} attempts to improve cross-domain adaptation by training an auxiliary generative module to synthesize memory graphs, its dependence on a separately trained generator introduces substantial computational overhead and limits its practicality under prolonged drift.
Furthermore, as shown in Figure~\ref{fig:toy}, current methods under test-time settings with continual drift often ignore the inter-domain relationships and the redundancy within each domain that is irrelevant to the underlying semantic information.
We summarize that a desirable framework for adapting graph models under non-stationary distribution shifts at test time should pursue three core objectives:
ensuring the model's \textbf{generalizability}, maintaining its \textbf{adaptability} to novel domains, and \textbf{mitigating forgetting} throughout the continual adaptation process, all in a memory- and computation-efficient manner.

\begin{wrapfigure}{r}{0.5\textwidth}
    \centering
    \vspace{-1em}
    \includegraphics[width=0.49\textwidth]{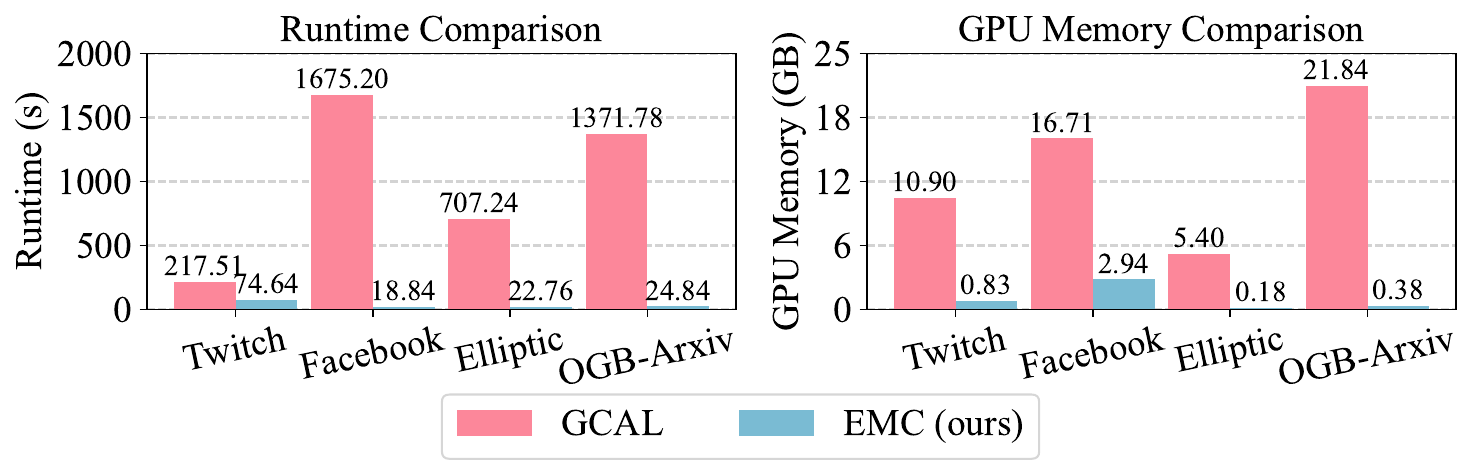}
    \vspace{-0.2cm}
    \caption{Runtime and GPU memory cost comparison between baseline GCAL and \ourmethod.}
    \label{fig:compare-bar}
    \vspace{-0.8em}
\end{wrapfigure}
In this paper, we argue that a more economical path exists: rather than \emph{generating} memory through a separately trained module, one can \emph{crystallize} it directly from each incoming graph in closed form. We instantiate this idea as \textbf{\ourmethod} (\textbf{E}fficient \textbf{M}emory \textbf{C}rystallization), a test-time framework for graph learning under non-stationary distribution shifts. Specifically, we introduce a memory- and computation-efficient procedure that crystallizes each incoming graph domain into a compact, semantically faithful memory through a closed-form solution to a memory-oriented distribution-matching objective, eliminating redundant domain information under continual covariate shifts.
To preserve both generalizability and adaptability as the model traverses a long sequence of target domains, we further model inter-domain dependencies through state-evolving memories.
We theoretically establish that the generalization error bounds with crystallized memories (from source to target domain) and with state-evolving memories (during continual adaptation on target domains) are both tighter than directly adapting the model on the target domain without such mechanisms, ensuring effectiveness.
Extensive experiments on real-world datasets demonstrate the superiority of \ourmethod over state-of-the-art (SOTA) baselines.
A comparison of runtime and GPU memory cost between \ourmethod and the recent baseline GCAL~\cite{qiao2025gcal} across four benchmarks under continual drift is shown in Figure~\ref{fig:compare-bar}.
Notably, \ourmethod achieves substantial efficiency gains, requiring on average \textbf{87.4\% less runtime} and \textbf{92.4\% less GPU memory} across all datasets.
The contributions of this work are as follows:

\begin{itemize} [leftmargin=*,noitemsep,topsep=1.5pt]
  \item We propose an efficient and effective framework, \ourmethod, for graph learning under non-stationary distribution shifts at test time. To the best of our knowledge, this is the first approach to integrate closed-form memory crystallization with state-evolving adaptation {without requiring any additional model training at test time}.

  \item We introduce a memory-oriented distribution-matching objective with a closed-form solution for memory construction. We further model inter-domain dependencies through state-evolving memories and derive tighter generalization error bounds.

  \item Extensive experiments validate the effectiveness and efficiency of \ourmethod, showing superior performance compared to existing SOTA methods, while reducing average runtime by 87.4\% and GPU memory consumption by 92.4\% relative to the recent competitor.
\end{itemize}

%% file: chapter/3-method.tex
\vspace{-3.0mm}
\section{Preliminaries} \label{sec:problem}
\vspace{-2.0mm}
\paragraph{Problem Setup.}
We consider the task of test-time adaptation on graphs under non-stationary distribution shifts.
Given a GNN model $f(G^{(s)};\mathbf{\Theta}_0) \rightarrow \mathcal{Y}$ that is pre-trained on one or multiple source graphs $G^{(s)}$ drawn from the source distribution $\mathbb{P}_\mathcal{S}(G, Y)$, where $\mathbf{\Theta}_0$ denotes the parameters of the pre-trained GNN encoder and $\{Y\}\in \mathcal{Y}$ is the label set.
The target data is represented as a sequence of graphs $\mathcal{T} = \{G^t\}_{t=1}^T$ containing $T$ test samples drawn from the marginalized target distribution. Each graph $G^{(t)} = \{\mathbf{A}^{(t)}, \mathbf{X}^{(t)}\} \in \mathbb{P}_\mathcal{T}(G)$ corresponds to the $t$-th target domain, where $\mathbf{A}^{(t)} \in \mathbb{R}^{N_t \times N_t}$ is the adjacency matrix with $N_t$ nodes, and $\mathbf{X}^{(t)} \in \mathbb{R}^{N_t \times d}$ is the node feature with $d$-dimensional attributes. We use $G^t$ to specifically denote the graph from the $t$-th target domain in the arrival order, and $G^{(t)}$ as a generic notation when the index $(t)$ may refer to any past or current domain in $\{1,\ldots,T\}$.
With the covariate shift assumption, i.e., $\mathbb{P}_\mathcal{S}(G) \neq \mathbb{P}_\mathcal{T}(G)$ and $\mathbb{P}_\mathcal{S}(Y | G) = \mathbb{P}_\mathcal{T}(Y | G)$, we aim to continually adapt the model parameters from $\mathbf{\Theta}_{t-1}$ to $\mathbf{\Theta}_t$ to maintain the generalizability and adaptability\footnote{A detailed discussion on the concepts and distinctions between generalizability and adaptability is in Appendix~\ref{app:setting}.} on target samples from $\mathbb{P}_\mathcal{T}(G)$, while mitigating the risk of catastrophic forgetting on previously encountered domains.
We summarize the frequently used notations in Appendix~\ref{sec:notation}.

\begin{figure*}
    \centering
    \includegraphics[width=\linewidth]{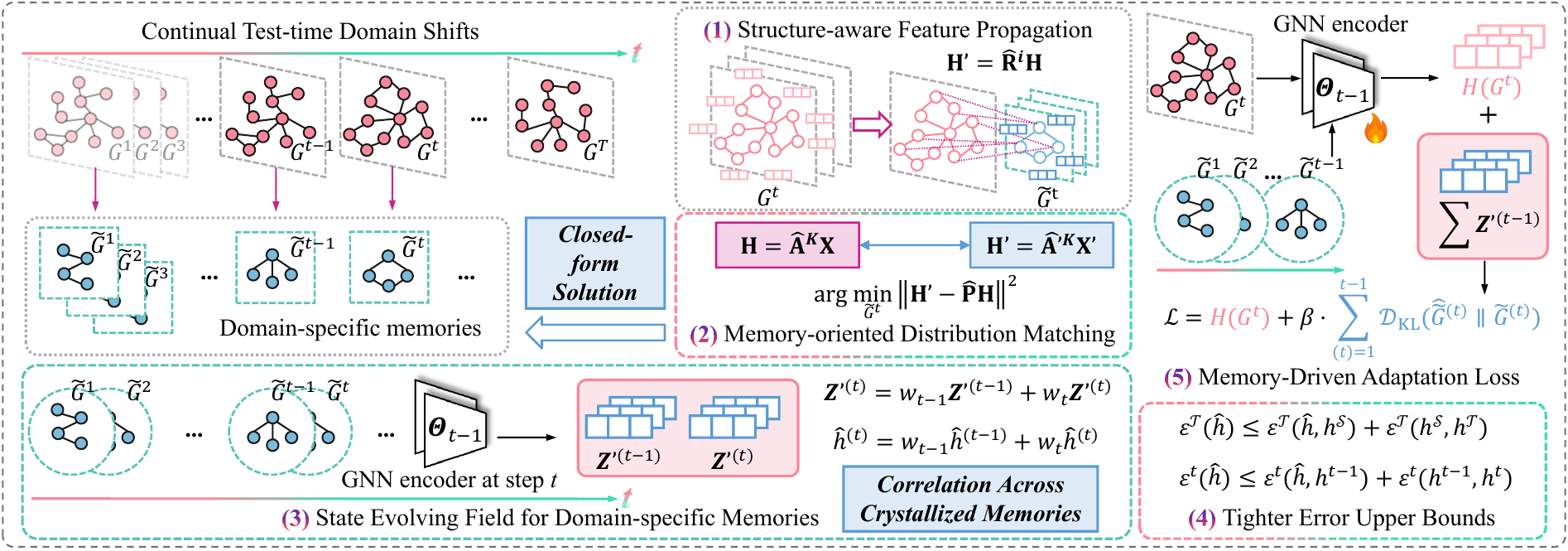}
    \caption{The framework of \ourmethod.
(1) Multi-hop feature propagation on the input graph $G^t$.
(2) A closed-form solution to the memory-oriented distribution-matching objective crystallizes the domain-specific memory $\tilde{G}^t$.
(3) The state evolving field models the correlation across consecutive domains.
(4) Tighter generalization error upper bounds guarantee effectiveness.
(5) The adaptation loss updates the parameters $\mathbf{\Theta}_{t-1}\!\to\!\mathbf{\Theta}_t$.}
    \label{fig:framework}
    \vspace{-1.5em}
\end{figure*}

\vspace{-3.0mm}
\section{Methodology}
\vspace{-2.0mm}
In this section, we elaborate on the proposed efficient test-time framework \ourmethod with its overall pipeline shown in Figure~\ref{fig:framework}.
To efficiently extract low-dimensional representations with reduced redundancy within each domain, we first reformulate a memory-oriented distribution-matching objective and derive a closed-form solution that \emph{crystallizes} each incoming domain into a compact, semantically faithful memory.
To enable continual adaptation by capturing inter-domain relationships, we then propose state-evolving memories, which yield a tighter generalization error bound.
\vspace{-2.0mm}
\subsection{Efficient Memory Crystallization} \label{sec:proto}
\vspace{-2.0mm}
The original input graphs exhibit covariate shifts across domains and redundancies within each sample, which are unrelated to the underlying semantic information.
To retain only the representative characteristics of each domain, we crystallize a domain-specific memory as a low-rank surrogate graph that approximates the original graph in the embedding space.

\vspace{-2.0mm}
\paragraph{Structure-aware Feature Propagation.}
Given an input graph $G^{(t)}=\{\bf A,\bf X\}$, the non-parametric feature propagation module is deployed to smooth the node features at the $K$-th order as
${\bf H} = {\hat{\bf A}}^K {\bf X}$,
where ${\hat{\bf{A}}}={\tilde{\bf{D}}}^{\frac{1}{2}}{\tilde{\bf{A}}}{\tilde{\bf{D}}}^{\frac{1}{2}}$ is the normalized adjacency matrix with self-loops, and ${\tilde{\bf{D}}}$ is the degree matrix for ${\tilde{\bf{A}}=\bf{A}+\bf{I}}$.
This propagation scheme can be replaced with other alternatives, such as SGC~\cite{wu2019simplifying} and personalized PageRank (PPR)~\cite{page1999pagerank}.
Higher-order propagation produces increasingly smooth features that capture more global structural information.

To preserve semantic structure, we apply class-wise node partitioning,
which divides the $|C_i|$ nodes assigned to class $i$ into $|C'_i|$ non-overlapping sub-classes $\{S^i_1, ..., S^i_{|C'_i|}\}$.
At test time, the class assignment is guided by the model's predicted semantics, which serve as \emph{semantic anchors} for grouping nodes with similar semantic content rather than as supervisory targets.
Each node $k$ further carries a confidence score $r_k$ that down-weights uncertain contributions during aggregation; a detailed discussion of this design is deferred to Appendix~\ref{app:semantic_anchor}.
The node mapping function of this partition is defined as $\pi^i:\{1, \dots, |C_i|\} \to \{1,\dots, |C'_i|\}$, and the class-wise aggregation matrix ${\bf{R}}^i \in \mathbb{R}^{|C'_i|\times |C_i|}$ is formulated as
\begin{equation}
{{\bf R}}^i_{j,k} = \begin{cases}
r_k /{\tau} & \text{if } \pi^i(k) = j, \\
0 & \text{otherwise},
\end{cases}
\label{eq_agg2}
\end{equation}
where $\tau$ is a temperature parameter.
Aggregated embeddings for class $i$ are obtained as ${\bf H}'_i = \hat{\bf R}^i {\bf H}_i$, where $\hat{\bf R}^i$ is the row-normalized version of ${\bf R}^i$ and ${\bf H}_i$ collects the propagated features of nodes in class $i$.
The full aggregation matrix is constructed by stacking all class-wise aggregation matrices block-diagonally as $\hat{\bf P}=\text{diag}(\hat{\bf R}^1, ..., \hat{\bf R}^{C})$, where $C$ is the total number of classes.

\vspace{-2.0mm}
\paragraph{Memory-oriented Distribution Matching.}
We denote the memory crystallized from $G^{(t)}$ as $\tilde{G}^{(t)} = \{ {\bf A}', {\bf X}' \}$.
To ensure that the memory faithfully represents the original graph, we reformulate a memory-oriented distribution-matching objective between the representations of the original graph and its memory.

\begin{proposition}
\label{prop:4.1}
After multi-hop propagation, let ${\bf Z} = {\bf H} \mathbf{\Psi} = {\hat{\bf A}}^{K} {\bf X} \mathbf{\Psi}$ and ${\bf Z}' = {\bf H}' \mathbf{\Psi} = {\hat{\bf A}}'^{K} {\bf X}' \mathbf{\Psi}$ denote the representations of the original graph $G^{(t)}$ and its memory $\tilde{G}^{(t)}$ respectively, where $\mathbf{\Psi}$ is the learnable linear weight of the GNN encoder and $\Phi$ is the distribution over its parameter configurations.
To directly align the distribution of the input domain with that of its memory, the discrepancy between the propagated features should be minimized as:
\begin{equation}
\min_{\tilde{G}^{(t)}} {\mathop{\mathbb{E}}_{{\mathbf{\Psi} } \sim \Phi}} \left \| {\bf P}'{\bf{H}}'{\mathbf{\Psi}}- {\bf P}{\bf{H}}{\mathbf{\Psi}} \right \|^{2}.
\end{equation}
Assuming that nodes in $\tilde{G}^{(t)}$ are organized in ascending order, ${\bf P}'$ degenerates into the identity matrix ${\bf I}$. Then, the distribution matching objective can be upper-bounded by:
\begin{equation}
 \small
{\mathop{\mathbb{E}}_{{\mathbf{\Psi}} \sim \Phi}}  \left \| {\bf{H}}'{\mathbf{\Psi}}- \hat{\bf P}{\bf{H}}{\mathbf{\Psi}} \right \|^{2}  \leq {\mathop{\mathbb{E}}_{{\mathbf{\Psi} } \sim \Phi}}  \left \| {\bf{H}}'- \hat{\bf P}{\bf{H}} \right \|^{2} \left \| {\mathbf{\Psi}}  \right \|^{2}.
\end{equation}
\end{proposition}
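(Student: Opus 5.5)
The argument has two parts. First, the ordering assumption makes $\mathbf{P}'$ the identity. Second, the matrix inside the norm factors as a residual times $\mathbf{\Psi}$, and submultiplicativity of the norm gives the bound pointwise in $\mathbf{\Psi}$. Taking expectations over $\Phi$ then preserves the inequality.

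\emph{Reduction of $\mathbf{P}'$.} The memory $\tilde{G}^{(t)}$ has one node per sub-class $S^i_j$, listed class by class in ascending order. Its own class-wise partition therefore maps each memory node to a singleton sub-class: $\pi^i$ is the identity on the memory nodes of class $i$. By \eqref{eq_agg2}, every row of $\mathbf{R}^i$ then has exactly one nonzero entry $r_k/\tau$. After row normalization this entry becomes $1$, so each block $\hat{\mathbf{R}}^i$ equals $\mathbf{I}$ and $\mathbf{P}' = \mathrm{diag}(\mathbf{I},\dots,\mathbf{I}) = \mathbf{I}$. On the original graph's side, $\mathbf{P}$ is taken to be the aggregation matrix $\hat{\mathbf{P}}$ already built for $G^{(t)}$. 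The objective is thus $\mathbb{E}_{\mathbf{\Psi}\sim\Phi}\|\mathbf{H}'\mathbf{\Psi}-\hat{\mathbf{P}}\mathbf{H}\mathbf{\Psi}\|^2$, which is the left-hand side of the claimed inequality.

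\emph{Factorization and submultiplicativity.} By linearity,
\[
\mathbf{H}'\mathbf{\Psi}-\hat{\mathbf{P}}\mathbf{H}\mathbf{\Psi}=(\mathbf{H}'-\hat{\mathbf{P}}\mathbf{H})\mathbf{\Psi}.
\]
Whether $\|\cdot\|$ is the Frobenius norm or the spectral norm, it is submultiplicative. Hence, for every fixed $\mathbf{\Psi}$,
\[
\|(\mathbf{H}'-\hat{\mathbf{P}}\mathbf{H})\mathbf{\Psi}\| \le \|\mathbf{H}'-\hat{\mathbf{P}}\mathbf{H}\|\,\|\mathbf{\Psi}\|.
\]
For the Frobenius norm this follows from Cauchy--Schwarz applied entrywise to the product, or equivalently from $\|AB\|_F\le\|A\|_2\|B\|_F\le\|A\|_F\|B\|_F$. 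Both sides are nonnegative, so squaring preserves the inequality.

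\emph{Expectation.} The pointwise inequality holds for every $\mathbf{\Psi}$ in the support of $\Phi$. Monotonicity of expectation then gives
\[
\mathbb{E}_{\mathbf{\Psi}\sim\Phi}\|\mathbf{H}'\mathbf{\Psi}-\hat{\mathbf{P}}\mathbf{H}\mathbf{\Psi}\|^2 \le \mathbb{E}_{\mathbf{\Psi}\sim\Phi}\|\mathbf{H}'-\hat{\mathbf{P}}\mathbf{H}\|^2\|\mathbf{\Psi}\|^2.
\]
The first factor on the right does not depend on $\mathbf{\Psi}$, so the bound equals $\|\mathbf{H}'-\hat{\mathbf{P}}\mathbf{H}\|^2\,\mathbb{E}_{\Phi}\|\mathbf{\Psi}\|^2$. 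This requires $\mathbb{E}_\Phi\|\mathbf{\Psi}\|^2<\infty$, which I will state as a mild assumption on $\Phi$. In this form the bound is minimized by driving $\mathbf{H}'$ toward $\hat{\mathbf{P}}\mathbf{H}$, independently of $\Phi$.

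The main obstacle is the first step. It requires pinning down precisely what ``organized in ascending order'' means, so that $\mathbf{P}'$ reduces to the identity and $\mathbf{P}$ is identified with $\hat{\mathbf{P}}$. I will state this interpretation explicitly as part of the setup rather than derive it. The inequality itself is routine once the norm is fixed as Frobenius.
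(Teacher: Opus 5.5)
Your proposal is correct and follows the paper's proof essentially step for step. The paper also sets $\mathbf{P}'=\mathbf{I}$ by the ordering convention, replaces the source-side aggregation with $\hat{\mathbf{P}}$, writes the residual as $\mathbf{M}=\mathbf{H}'-\hat{\mathbf{P}}\mathbf{H}$, applies submultiplicativity $\|\mathbf{M}\mathbf{\Psi}\|\le\|\mathbf{M}\|\,\|\mathbf{\Psi}\|$, squares, and takes expectations, pulling $\|\mathbf{M}\|^2$ out because it does not depend on $\mathbf{\Psi}$. Your singleton-subclass justification of $\mathbf{P}'=\mathbf{I}$ is more explicit than the paper's bare assertion. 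Also, $\mathbb{E}_{\Phi}\|\mathbf{\Psi}\|^2<\infty$ is not needed for the inequality itself, which holds trivially when the right-hand side is infinite; finiteness only matters for the bound to be informative.
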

\vspace{-1em}
Proof for Proposition~\ref{prop:4.1} is provided in Appendix~\ref{ap-dis-match}. The expectation $\mathbb{E}_{\Psi\sim\Phi}$ is a technical device that yields a bound holding uniformly over the encoder parameter space; it is not a random projection.

\noindent \textbf{Remark.}
The reformulated objective aligns the essential semantic information between the input graph and its crystallized memory.
Since $\mathbf{\Psi}$ is independent of both $\tilde{G}^{(t)}$ and $\hat{\bf P}$, we can minimize the upper bound by:
\begin{equation}
\arg\min_{\tilde{G}^{(t)}, \hat{\bf P}} \left\| {\bf H}' - \hat{\bf P} {\bf H} \right\|^{2}.
\end{equation}
\vspace{-1em}

\vspace{-2.0mm}
\paragraph{Redundancy-eliminated Memory Construction.}
According to Proposition~\ref{prop:4.1}, we expect a symmetric encoding procedure between the original ${G}^{(t)}$ and its crystallized memory $\tilde{G}^{(t)}$.
Therefore, our objective is to construct $\bf{A}'$ and ${\bf X}'$ satisfying ${\hat{\bf{A}}}'^{K}{\bf{X}'} = {\bf{H}'}$.
To this end, we utilize the pre-defined graph structure and obtain ${\bf X}'$ in closed form.
Specifically, we construct the structure via a similarity-based thresholding scheme:
\vspace{0.5em}
\begin{equation}
\label{eq_adj}
{\bf A}'_{i,j} =
\begin{cases}
1 & \text{if } \cos({\bf H}'_i, {\bf H}'_j) > \tau, \\
0 & \text{otherwise},
\end{cases}
\end{equation}
where $\cos(\cdot, \cdot)$ measures the cosine similarity and $\tau$ is the hyper-parameter for graph sparsification.
To ensure that the reconstructed features change smoothly between connected nodes, we introduce the Dirichlet energy constraint~\cite{kalofolias2016learn} into the feature reconstruction loss to quantify the smoothness of graph signals. The optimization objective for ${\bf X}'$ is then formulated as:
\begin{equation}
\arg\min_{{\bf X}'} \left\| {\hat{\bf A}}'^K {\bf X}' - {\bf H}' \right\|^2 + \alpha \operatorname{tr}({\bf X}'^\top {\bf L}' {\bf X}'),
\label{eq_feat}
\end{equation}
where ${\bf L}' = {\bf D}' - {\bf A}'$ is the Laplacian matrix, $\textbf{D}'$ is the degree matrix, and $\alpha$ controls the smoothness strength.

\begin{proposition}
\label{prop-4.2}
Assume that $\tilde{G}^{(t)} = \{ {\bf A}', {\bf X}' \}$ is the crystallized memory of an input graph; the closed-form solution of Eq.~\eqref{eq_feat} takes the form ${\bf X}'=({\bf Q}^{\top}{\bf Q}+\alpha {\bf L}')^{-1}{\bf Q}^{\top}{\bf H}'$, where ${\bf Q} = \hat{\bf{A}}'^{K}$.
\end{proposition}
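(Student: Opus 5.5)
The argument treats Eq.~\eqref{eq_feat} as an unconstrained quadratic program in ${\bf X}'$ and solves its first-order optimality condition. Write ${\bf Q}=\hat{\bf A}'^K$ and take the norm to be the Frobenius norm, so the objective becomes
\begin{equation*}
J({\bf X}') = \operatorname{tr}\big(({\bf Q}{\bf X}'-{\bf H}')^\top({\bf Q}{\bf X}'-{\bf H}')\big) + \alpha\operatorname{tr}({\bf X}'^\top{\bf L}'{\bf X}').
\end{equation*}
Expand the first trace into $\operatorname{tr}({\bf X}'^\top{\bf Q}^\top{\bf Q}{\bf X}') - 2\operatorname{tr}({\bf X}'^\top{\bf Q}^\top{\bf H}') + \operatorname{tr}({\bf H}'^\top{\bf H}')$. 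Then apply the standard identities $\nabla_{\bf X}\operatorname{tr}({\bf X}^\top{\bf M}{\bf X}) = ({\bf M}+{\bf M}^\top){\bf X}$ and $\nabla_{\bf X}\operatorname{tr}({\bf X}^\top{\bf B}) = {\bf B}$. These give
\begin{equation*}
\nabla J = 2{\bf Q}^\top{\bf Q}{\bf X}' - 2{\bf Q}^\top{\bf H}' + 2\alpha{\bf L}'{\bf X}',
\end{equation*}
where the last term uses the symmetry of ${\bf L}'={\bf D}'-{\bf A}'$. The symmetry of ${\bf L}'$ holds because the thresholding rule in Eq.~\eqref{eq_adj} is symmetric in $i,j$, since cosine similarity is symmetric.

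Setting the gradient to zero yields the normal equations $({\bf Q}^\top{\bf Q}+\alpha{\bf L}'){\bf X}' = {\bf Q}^\top{\bf H}'$. Inverting the coefficient matrix gives the stated formula.

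The main obstacle is justifying that this stationary point is the unique global minimizer, i.e.\ that ${\bf M}={\bf Q}^\top{\bf Q}+\alpha{\bf L}'$ is invertible and positive definite. Positive semidefiniteness follows directly. ${\bf Q}^\top{\bf Q}\succeq 0$ trivially, and ${\bf L}'\succeq 0$ because ${\bf x}^\top{\bf L}'{\bf x}=\tfrac12\sum_{i,j}{\bf A}'_{ij}(x_i-x_j)^2\ge 0$. Hence, for $\alpha\ge 0$, $J$ is convex and every stationary point is a global minimizer.

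For strict definiteness, the plan is to show that ${\bf Q}$ itself is nonsingular. The matrix $\hat{\bf A}'$ is built from $\tilde{\bf A}'={\bf A}'+{\bf I}$ with symmetric degree normalization, so it is similar to a row-stochastic matrix and has spectrum in $(-1,1]$. It is nonsingular whenever $\hat{\bf A}'$ has no zero eigenvalue; this is automatic, for instance, when $\tilde{\bf A}'$ is strictly diagonally dominant, or it can simply be assumed. In that case ${\bf Q}^\top{\bf Q}\succ 0$, so ${\bf M}\succ 0$.

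If nonsingularity of ${\bf Q}$ cannot be guaranteed, I would fall back on the null spaces. A vector ${\bf x}$ with ${\bf M}{\bf x}=0$ must satisfy ${\bf Q}{\bf x}=0$ and ${\bf L}'{\bf x}=0$, which makes ${\bf x}$ constant on each connected component of ${\bf A}'$. Since $\hat{\bf A}'$ acts on a component-wise constant vector by a positive diagonal scaling, ${\bf Q}{\bf x}\neq 0$ unless ${\bf x}=0$. This forces ${\bf M}\succ 0$, so the inverse exists and the minimizer is unique. This completes the argument.
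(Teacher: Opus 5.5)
Your main line is the paper's proof: the Frobenius-norm gradient, the normal equations $({\bf Q}^\top{\bf Q}+\alpha{\bf L}'){\bf X}'={\bf Q}^\top{\bf H}'$, and convexity from ${\bf Q}^\top{\bf Q}\succeq 0$ and ${\bf L}'\succeq 0$. The paper stops there. It asserts a unique minimizer, and with it the existence of the inverse, from positive semidefiniteness alone, which does not follow. It also claims ${\bf Q}$ itself is PSD. That is not needed and fails in general for odd $K$: for the loop-free $3$-node path, $\hat{\bf A}'$ has eigenvalue $-1/6$. So your effort to prove ${\bf M}={\bf Q}^\top{\bf Q}+\alpha{\bf L}'\succ 0$ fills a real hole, and the null-space strategy is the right one.

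However, the last step of that strategy does not go through as written.
\begin{itemize}
\item \textbf{The case $K\ge 2$.} From ${\bf L}'{\bf x}=0$ you get ${\bf x}$ constant on each component. One application of $\hat{\bf A}'$ does act on such a vector as a positive diagonal scaling. But $\hat{\bf A}'{\bf x}$ is then no longer component-wise constant, so you cannot reuse the observation to conclude ${\bf Q}{\bf x}=\hat{\bf A}'^K{\bf x}\neq 0$.
\item \textbf{Repair.} Since $\hat{\bf A}'$ is symmetric, expanding ${\bf x}$ in an orthonormal eigenbasis gives $\ker\hat{\bf A}'^K=\ker\hat{\bf A}'$. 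Hence ${\bf Q}{\bf x}=0$ forces $\hat{\bf A}'{\bf x}=0$, and your one-step argument finishes. Alternatively, $\hat{\bf A}'$ is entrywise nonnegative, block-diagonal over components, with positive diagonal. It therefore maps a vector that is strictly positive (resp.\ negative) on a component to one with the same property, and you can induct on $K$.
\item \textbf{The role of $\alpha$.} The fallback needs $\alpha>0$, and you should say so. Splitting ${\bf x}^\top{\bf M}{\bf x}=\|{\bf Q}{\bf x}\|^2+\alpha\,{\bf x}^\top{\bf L}'{\bf x}=0$ into ${\bf Q}{\bf x}=0$ and ${\bf L}'{\bf x}=0$ uses it. At $\alpha=0$ the stated inverse exists only if ${\bf Q}$ is nonsingular. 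Your diagonal-dominance criterion guarantees that only for (nearly) edgeless memory graphs, so the fallback is what actually carries the proof.
\end{itemize}
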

\noindent \textit{Proof Sketch.} Taking the first- and second-order partial derivatives of the objective in Eq.~\eqref{eq_feat} with respect to ${\bf X}'$ shows that the objective is positive definite. Therefore, the closed-form solution that satisfies the first-order condition achieves the minimum.
A detailed proof is provided in Appendix~\ref{ap:prop-1}.
The closed-form solution operates on a small $N_t' \times N_t'$ system over the crystallized memory rather than the full $N_t \times N_t$ system on the original graph (with $N_t'$ set to 1\% of $N_t$ in our experiments), making memory construction lightweight and naturally scalable. Further empirical scaling analysis is provided in Appendix~\ref{ap:scalability}.

\vspace{-2.0mm}
\subsection{Memory-Driven Adaptation with Tighter Bound} \label{sec:adapt}
\vspace{-2.0mm}
To capture state transitions and evolutionary patterns across domains, we introduce the State Evolving Field mechanism over crystallized memories with derivation of a tighter generalization error bound.

\begin{proposition}
\label{prop:4-3}
Given any pair of sequences of samples from the target domain, $\mathcal{T} = \{G^t\}_{t=1}^T$ and the corresponding crystallized memories $\tilde{\mathcal{T}} = \{\tilde{G}^t\}_{t=1}^T$, following the joint distributions $\mathbb{P}_{G^{1:T}}$ and $\mathbb{Q}_{\tilde{G}^{1:T}}$ respectively, with each $G^t$ and $\tilde{G}^t$ following the marginal distributions $\mathbb{P}_{G^t}$ and $\mathbb{Q}_{\tilde{G}^t}$. Then the following relation holds:
\vspace{-1.0mm}
\begin{equation}
\mathcal{D}_{\mathrm{KL}} (\mathbb{P}_{G^{1:T}} \,\|\, \mathbb{Q}_{\tilde{G}^{1:T}})
\equiv \sum_{t=1}^T  \mathcal{D}_{\mathrm{KL}}(\mathbb{P}_{G^t}\,\|\,\mathbb{Q}_{\tilde G^t} )
\quad \text{iff}\quad \mathbb{P}_{G^{1:T}} = \prod_{t=1}^T \mathbb{P}_{G^t},\  \mathbb{Q}_{\tilde{G}^{1:T}} = \prod_{t=1}^T \mathbb{Q}_{\tilde{G}^t},
\end{equation}
where $\mathcal{D}_{\mathrm{KL}}$ denotes the Kullback--Leibler divergence and $\equiv$ indicates that the two expressions are identically equal.
\end{proposition}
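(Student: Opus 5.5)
The plan is to derive an exact decomposition of the joint KL divergence into the sum of marginal KL divergences plus a remainder term. Both directions of the equivalence then reduce to deciding when that remainder vanishes. Assuming densities exist and all quantities are finite, I will write the log-likelihood ratio as a telescoping sum:
\begin{equation*}
\log\frac{\mathbb{P}_{G^{1:T}}}{\mathbb{Q}_{\tilde G^{1:T}}}
=\log\frac{\mathbb{P}_{G^{1:T}}}{\prod_{t}\mathbb{P}_{G^t}}
+\sum_{t=1}^T\log\frac{\mathbb{P}_{G^t}}{\mathbb{Q}_{\tilde G^t}}
+\log\frac{\prod_{t}\mathbb{Q}_{\tilde G^t}}{\mathbb{Q}_{\tilde G^{1:T}}}.
\end{equation*}
Taking the expectation under $\mathbb{P}_{G^{1:T}}$, and using that each middle summand depends only on the $t$-th coordinate, gives
\begin{equation*}
\mathcal{D}_{\mathrm{KL}}(\mathbb{P}_{G^{1:T}}\|\mathbb{Q}_{\tilde G^{1:T}})=\sum_{t=1}^T\mathcal{D}_{\mathrm{KL}}(\mathbb{P}_{G^t}\|\mathbb{Q}_{\tilde G^t})+R,
\end{equation*}
where
\begin{equation*}
R=\mathcal{D}_{\mathrm{KL}}\Big(\mathbb{P}_{G^{1:T}}\,\Big\|\,\prod_t\mathbb{P}_{G^t}\Big)+\mathbb{E}_{\mathbb{P}_{G^{1:T}}}\Big[\log\frac{\prod_t\mathbb{Q}_{\tilde G^t}}{\mathbb{Q}_{\tilde G^{1:T}}}\Big].
\end{equation*}
The first term of $R$ is the total correlation (multi-information) of $\mathbb{P}$.

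For the ``if'' direction, both factorizations make each of the two terms of $R$ vanish identically, since the log-ratios are zero pointwise. This is just the tensorization of KL divergence over product measures.

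For the ``only if'' direction, I will make explicit how I read $\equiv$: the identity holds identically for the given memory model $\mathbb{Q}$ and for any target sequence law consistent with the given marginals. This includes the given $\mathbb{P}_{G^{1:T}}$ and the product law $\prod_t\mathbb{P}_{G^t}$. The first step is to evaluate $R$ under the product law. There the total-correlation term is zero, so $R$ reduces to a cross term. To turn that cross term into a divergence I would specialize further to marginals $\mathbb{P}_{G^t}=\mathbb{Q}_{\tilde G^t}$, which is admissible under the universal reading. Then $R=\mathcal{D}_{\mathrm{KL}}(\prod_t\mathbb{Q}_{\tilde G^t}\|\mathbb{Q}_{\tilde G^{1:T}})$, and Gibbs' inequality forces $\mathbb{Q}_{\tilde G^{1:T}}=\prod_t\mathbb{Q}_{\tilde G^t}$. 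Once $\mathbb{Q}$ is known to factorize, the cross term vanishes for every $\mathbb{P}$. The remainder for the actual $\mathbb{P}$ is then exactly its total correlation, and equality forces $\mathcal{D}_{\mathrm{KL}}(\mathbb{P}_{G^{1:T}}\|\prod_t\mathbb{P}_{G^t})=0$, i.e.\ $\mathbb{P}$ is a product.

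I expect the main obstacle to be this converse. For a single fixed pair, the cross term can be negative and could cancel a positive total correlation by coincidence, so a pointwise equality alone does not force factorization. The argument therefore hinges on committing to the ``identically equal'' interpretation and on choosing the two test laws above to decouple $\mathbb{P}$ from $\mathbb{Q}$. I will state that interpretation clearly before the converse. The remaining steps, namely the decomposition, the marginalization of the middle term, and Gibbs' inequality, are routine.
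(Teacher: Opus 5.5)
\textbf{Comparison with the paper.} Your decomposition, marginal KLs plus a remainder $R$ made of the total correlation of $\mathbb{P}_{G^{1:T}}$ and a cross term, is a different route from the paper's. The paper expands the joint divergence by the KL chain rule into $\mathcal{D}_{\mathrm{KL}}(\mathbb{P}_{G^1}\|\mathbb{Q}_{\tilde G^1})$ plus $\mathbb{P}$-averaged conditional divergences. For the ``if'' direction the two routes are equivalent, and yours is fine.

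You are also right that a single equality cannot force factorization. This is in fact a flaw in the paper's own converse. The paper asserts that equality of the sums makes every conditional term equal its marginal counterpart, and then appeals to strict convexity to conclude independence. Both steps fail, and so does the statement for a fixed pair. Take $T=2$ and let $\mathbb{P}_{G^{1:2}}=\mathbb{Q}_{\tilde G^{1:2}}$ put mass $\tfrac12$ on each of two perfectly correlated configurations. Then every term on both sides (joint, marginal and conditional) is $0$, yet neither law is a product. So no argument can prove the ``only if'' direction for a given pair.

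\textbf{The gap in your repair.} You prove the two directions for two different statements. Under your universal reading, the left-hand side is a property of a whole family of target laws, but your ``if'' argument only checks the given pair. To get the identity for every admissible $\mathbb{P}$, you would need every admissible $\mathbb{P}$ to factorize. Once $\mathbb{Q}_{\tilde G^{1:T}}$ is a product, $R$ is exactly the total correlation of $\mathbb{P}$, which is strictly positive for any dependent member.

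The family itself is also specified inconsistently. The laws ``consistent with the given marginals'' are couplings of $\{\mathbb{P}_{G^t}\}$. These do not contain $\prod_t\mathbb{Q}_{\tilde G^t}$ unless the marginals already agree, so your specialization to $\mathbb{P}_{G^t}=\mathbb{Q}_{\tilde G^t}$ is not available. Worse, over couplings of fixed marginals the hypothesis cannot hold in any nondegenerate case: $\pi\mapsto\mathcal{D}_{\mathrm{KL}}(\pi\|\mathbb{Q}_{\tilde G^{1:T}})$ is strictly convex, while the right-hand side does not depend on $\pi$.

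\textbf{How to close it.} State explicitly what you prove, assuming finite marginal divergences throughout. Either version below is correct.

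(i) If $\mathbb{Q}_{\tilde G^{1:T}}$ factorizes, the identity holds iff $\mathbb{P}_{G^{1:T}}$ factorizes, since $R$ reduces to the total correlation.

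(ii) Take nonempty classes $\mathcal{M}_P$ and $\mathcal{M}_Q$ such that, for every $\mathbb{Q}\in\mathcal{M}_Q$, the product of its marginals lies in $\mathcal{M}_P$. Then the identity holds for all pairs in $\mathcal{M}_P\times\mathcal{M}_Q$ iff every member of both classes factorizes. Your first test law gives $\mathcal{D}_{\mathrm{KL}}(\prod_t\mathbb{Q}_{\tilde G^t}\|\mathbb{Q}_{\tilde G^{1:T}})=0$ directly. Your second step then gives zero total correlation. Together these are exactly the forward direction of (ii), and tensorization gives the reverse.

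Pair whichever version you choose with the counterexample above, so it is clear that the proposition as literally stated is not provable.
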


\noindent \textbf{Remark.}
Proposition~\ref{prop:4-3} (proof in Appendix~\ref{ap:prop-3}) is a \emph{biconditional}: it does not assume that domains are independent. Rather, it characterizes precisely when summing per-domain KL divergences equals the joint KL divergence, namely when and only when the domains (and their memories) are independent. Since real test-time domains are generally correlated under continual drift, this implies that aligning marginal distributions alone is insufficient. This motivates us to model inter-domain relationships explicitly through the state evolving field below.

\vspace{-2.0mm}
\paragraph{State Evolving Field over Crystallized Memories.}
To model the temporal correlation across domains, we define a state evolving field over crystallized memories. At each time step $t$, the memory representation ${\bf Z}'^{(t)}$ evolves from its predecessor ${\bf Z}'^{(t-1)}$ through a convex combination:
\begin{equation} \label{eq-8}
    {\bf Z}'^{(t)}\leftarrow w_{t-1}{\bf Z}'^{(t-1)} + w_t {\bf Z}'^{(t)}, \text{where } w_{t-1}+w_{t}=1.
\end{equation}
Here $w_{t-1}$ and $w_t$ are fixed non-negative hyperparameters satisfying the convex-combination constraint and selected by grid search; an adaptive weighting strategy is a natural extension since Theorem~\ref{bound3} below holds for any valid choice.
During adaptation, $f(G^{t};\mathbf{\Theta}_{t-1}) = h\circ g$ is the composition of an encoder $h(\cdot)$ and a fixed classifier $g(\cdot)$, \ie, $\mathbf{Z}'^{(t)} = h(\tilde{G}^{t})$ and $\hat{{Y}}^{t} = g(\mathbf{Z}'^{(t)})$.
At each step $t$, the mixed memory ${\bf Z}'^{(t)} = w_{t-1} {\bf Z}'^{(t-1)} + w_t {\bf Z}'^{(t)}$ induces a corresponding hypothesis $\hat{h} = w_{t-1} h^{t-1} + w_t h^t$ in the decision space.

\begin{lemma}[Error Upper Bound for Domain Shift]
\label{bound1}
Let $\tilde{\mathbb{P}}_\mathcal{S}$ and $\tilde{\mathbb{P}}_\mathcal{T}$ denote the induced distributions over the feature space for $\mathbb{P}_\mathcal{S}$ and $\mathbb{P}_\mathcal{T}$ respectively. The following inequality holds for the risk $\epsilon_{0}^\mathcal{T}(\hat{h})$ with single-step adaptation on the target domain $\mathbb{P}_\mathcal{T}$:
\begin{equation}
\epsilon_{0}^\mathcal{T}(\hat{h}) \leq \min\{\epsilon^\mathcal{S}(h^\mathcal{S}, h^\mathcal{T}), \epsilon^\mathcal{T}(h^\mathcal{S}, h^\mathcal{T})\} + \epsilon^\mathcal{S}(\hat{h}) + d_{\mathcal{H}}(\tilde{\mathbb{P}}_\mathcal{S}, \tilde{\mathbb{P}}_\mathcal{T}),
\end{equation}
where $d_{\mathcal{H}}$ is the $\mathcal{H}$-divergence. 
\end{lemma}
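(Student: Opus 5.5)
This is a Ben-David-style domain adaptation bound. Throughout, the risk between two hypotheses on a domain $\mathcal{D}$ is $\epsilon^\mathcal{D}(h,h') = \mathbb{E}_{z\sim\tilde{\mathbb{P}}_\mathcal{D}}[\ell(h(z),h'(z))]$, where $\ell$ is a loss bounded in $[0,1]$ that satisfies the triangle inequality (for example, the $0$-$1$ loss). The shorthand $\epsilon^\mathcal{D}(h)=\epsilon^\mathcal{D}(h,h^\mathcal{D})$ denotes risk against the domain's labeling function $h^\mathcal{D}$. Under covariate shift these labeling functions coincide on the common support, but I keep $h^\mathcal{S}$ and $h^\mathcal{T}$ separate, as the statement does. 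The $\mathcal{H}$-divergence is taken in the form
\[
d_\mathcal{H}(\tilde{\mathbb{P}}_\mathcal{S},\tilde{\mathbb{P}}_\mathcal{T}) = \sup_{h,h'\in\mathcal{H}}\bigl|\epsilon^\mathcal{S}(h,h')-\epsilon^\mathcal{T}(h,h')\bigr|,
\]
so any change of domain for a pairwise disagreement costs at most $d_\mathcal{H}$. I will say explicitly that $h^\mathcal{S}$, $h^\mathcal{T}$ and $\hat h$ are assumed to lie in $\mathcal{H}$; otherwise one works with the symmetric-difference class $\mathcal{H}\Delta\mathcal{H}$.

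The proof proves two inequalities separately, one for each term inside the minimum, and then takes the smaller.

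\textbf{First branch (triangle inequality on the target domain).} Write
\[
\epsilon_0^\mathcal{T}(\hat h)=\epsilon^\mathcal{T}(\hat h,h^\mathcal{T})\le \epsilon^\mathcal{T}(\hat h,h^\mathcal{S})+\epsilon^\mathcal{T}(h^\mathcal{S},h^\mathcal{T}).
\]
Then move the first term to the source domain:
\[
\epsilon^\mathcal{T}(\hat h,h^\mathcal{S})\le \epsilon^\mathcal{S}(\hat h,h^\mathcal{S})+d_\mathcal{H}=\epsilon^\mathcal{S}(\hat h)+d_\mathcal{H}.
\]
Combining gives $\epsilon_0^\mathcal{T}(\hat h)\le \epsilon^\mathcal{T}(h^\mathcal{S},h^\mathcal{T})+\epsilon^\mathcal{S}(\hat h)+d_\mathcal{H}$.

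\textbf{Second branch (triangle inequality on the source domain).} First move the whole target risk to the source:
\[
\epsilon^\mathcal{T}(\hat h,h^\mathcal{T})\le \epsilon^\mathcal{S}(\hat h,h^\mathcal{T})+d_\mathcal{H}.
\]
Then apply the triangle inequality on $\mathcal{S}$:
\[
\epsilon^\mathcal{S}(\hat h,h^\mathcal{T})\le \epsilon^\mathcal{S}(\hat h,h^\mathcal{S})+\epsilon^\mathcal{S}(h^\mathcal{S},h^\mathcal{T}).
\]
This yields the same bound with $\epsilon^\mathcal{S}(h^\mathcal{S},h^\mathcal{T})$ in place of $\epsilon^\mathcal{T}(h^\mathcal{S},h^\mathcal{T})$.

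Since both bounds hold, $\epsilon_0^\mathcal{T}(\hat h)$ is at most their minimum, which is the claimed inequality.

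The main obstacle is justifying the transfer step $|\epsilon^\mathcal{S}(h,h')-\epsilon^\mathcal{T}(h,h')|\le d_\mathcal{H}$ for the particular pairs used: $(\hat h,h^\mathcal{S})$ in the first branch and $(\hat h,h^\mathcal{T})$ in the second. This needs both members of each pair in the class over which the supremum is taken. It also needs the normalization of $d_\mathcal{H}$ to match; if the paper's convention carries a factor of $2$ or $\tfrac12$, the constant must be absorbed into the definition. A second point is that $\hat h=w_{t-1}h^{t-1}+w_th^t$ is a convex combination. I would either assume $\mathcal{H}$ is convex, or, with a convex loss, note that $\hat h$ belongs to the convex hull over which the supremum may equivalently be taken.
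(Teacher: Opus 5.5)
Your proof is correct under the pairwise definition of $d_{\mathcal H}$ you adopt. Its two-branch triangle-inequality structure is the standard argument behind the bound of Zhao et al., which the paper cites rather than re-derives. The appendix only proves a domain-transfer inequality, and your first branch appears verbatim in the comparison inside the proof of Theorem~\ref{ap-bound2}.

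The real difference is how the transfer step is obtained. The paper keeps $d_{\mathcal H}$ as the single-hypothesis quantity $2\sup_{h\in\mathcal H}|\Pr_{\mathcal S}(h=1)-\Pr_{\mathcal T}(h=1)|$. It derives transfer from the layer-cake identity $\mathbb{E}|h-h'|=\int_0^1\Pr(|h-h'|>z)\,dz$, which turns a $[0,1]$-valued disagreement into probabilities of binary events $\{|h-h'|>z\}$. This is what lets the penalty be read as a divergence between binary classifiers even for soft hypotheses. The price is that the supremum lands on the threshold class $\hat{\mathcal H}$, and the closing step $d_{\hat{\mathcal H}}\le d_{\mathcal H}$ is asserted without justification; Zhao et al.\ keep the threshold class in their statement. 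You instead make transfer definitional, which is more elementary. By the same layer-cake estimate, your pairwise quantity is at most $\tfrac12 d_{\hat{\mathcal H}}$, so your version is never weaker than the cited one.

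Two refinements of your closing caveats. First, the mismatch with the paper's Definition is not a normalization issue. Take $\mathcal H=\{h_1,h_2\}$ with $h_1=\mathbf{1}_{\{a,b\}}$ and $h_2=\mathbf{1}_{\{a,c\}}$, $\tilde{\mathbb P}_{\mathcal S}$ uniform on $\{a,d\}$, and $\tilde{\mathbb P}_{\mathcal T}$ uniform on $\{b,c\}$.
\begin{itemize}
\item Both hypotheses have positive rate $\tfrac12$ under both domains, so the single-hypothesis $d_{\mathcal H}$ vanishes.
\item Yet $\hat h=h_1$ and $h^{\mathcal S}=h^{\mathcal T}=h_2$ give $\epsilon_0^{\mathcal T}(\hat h)=1$, against a right-hand side of $0$.
\item Here $d_{\hat{\mathcal H}}=2$, so this also breaks the paper's last step.
\end{itemize}
Hence the pairwise form, which for binary classes equals $\tfrac12 d_{\mathcal H\Delta\mathcal H}$, is what makes the lemma true, not merely a convenience.

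Second, convexity of the loss does not by itself let you move the supremum to the convex hull, since $\epsilon^{\mathcal S}(\cdot,h')-\epsilon^{\mathcal T}(\cdot,h')$ is a difference of convex functions. What works is linearity. For a binary labeling function $h'$ and $[0,1]$-valued $h_i$, $|\sum_i w_ih_i-h'|=\sum_i w_i|h_i-h'|$ pointwise. So under the paper's $\ell_1$ loss, the transfer inequality for $\hat h$ paired with $h^{\mathcal S}$ or $h^{\mathcal T}$ follows from the inequality for each component.
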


\begin{theorem} [Error Upper Bound with Crystallized Memories]
\label{bound2}
Let $\tilde{\mathbb{P}}_\mathcal{S}$ and $\tilde{\mathbb{P}}_\mathcal{T}$ denote the induced distributions over the feature space for $\mathbb{P}_\mathcal{S}$ and $\mathbb{P}_\mathcal{T}$ respectively. The risk $\epsilon^\mathcal{T}(\hat{h})$ with crystallized memories (with mixed weight $w_0 + w_1 = 1$, $w_0 \geq 0$, $w_1 \geq 0$, and $\hat{h} = w_0 \hat{h}^\mathcal{S} + w_1 \hat{h}^\mathcal{T}$) is upper bounded by:
\begin{equation}
\epsilon^\mathcal{T}(\hat{h}) \leq \epsilon^\mathcal{T}(\hat{h}, h^\mathcal{S}) + \epsilon^\mathcal{T}(h^\mathcal{S}, h^\mathcal{T}).
\end{equation}
\end{theorem}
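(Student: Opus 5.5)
The plan is to read Theorem~\ref{bound2} in the Ben-David style of risk disagreement. Here $\epsilon^\mathcal{T}(h,h') = \mathbb{E}_{x\sim\tilde{\mathbb{P}}_\mathcal{T}}[\ell(h(x),h'(x))]$ for a loss $\ell$ that is symmetric and satisfies the triangle inequality, such as the $0$-$1$ loss or the $L^1$ disagreement $|h(x)-h'(x)|$ on $[0,1]$-valued hypotheses. The target risk is $\epsilon^\mathcal{T}(\hat h)=\epsilon^\mathcal{T}(\hat h, f^\mathcal{T})$, where $f^\mathcal{T}$ is the true labeling function on the target. Under the covariate-shift assumption of Section~\ref{sec:problem}, the labeling function is shared across domains. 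I identify the ideal target hypothesis $h^\mathcal{T}$ with it, consistent with how Lemma~\ref{bound1} uses $h^\mathcal{S},h^\mathcal{T}$, so that $\epsilon^\mathcal{T}(\hat h)=\epsilon^\mathcal{T}(\hat h,h^\mathcal{T})$.

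The key steps, in order:

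\textbf{First}, apply the pointwise triangle inequality for $\ell$ with $h^\mathcal{S}$ as the intermediate hypothesis:
\[
\ell(\hat h(x),h^\mathcal{T}(x))\le \ell(\hat h(x),h^\mathcal{S}(x))+\ell(h^\mathcal{S}(x),h^\mathcal{T}(x)).
\]

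\textbf{Second}, take expectations over $x\sim\tilde{\mathbb{P}}_\mathcal{T}$. By linearity this gives exactly $\epsilon^\mathcal{T}(\hat h)\le \epsilon^\mathcal{T}(\hat h,h^\mathcal{S})+\epsilon^\mathcal{T}(h^\mathcal{S},h^\mathcal{T})$.

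\textbf{Third}, verify that $\hat h=w_0\hat h^\mathcal{S}+w_1\hat h^\mathcal{T}$ is an admissible hypothesis for this argument. Its outputs must lie in the range where $\ell$ is defined, which holds because a convex combination of $[0,1]$-valued (or simplex-valued) predictors stays in that convex set. This is where the constraints $w_0,w_1\ge0$ and $w_0+w_1=1$ enter. I would also remark that convexity of $\ell$ in its first argument yields $\epsilon^\mathcal{T}(\hat h,h^\mathcal{S})\le w_0\epsilon^\mathcal{T}(\hat h^\mathcal{S},h^\mathcal{S})+w_1\epsilon^\mathcal{T}(\hat h^\mathcal{T},h^\mathcal{S})$. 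This is not needed for the stated inequality, but it is the natural follow-up for comparing with Lemma~\ref{bound1}.

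\textbf{Finally}, point out why the result is tighter than Lemma~\ref{bound1}. The right-hand side is evaluated entirely on $\tilde{\mathbb{P}}_\mathcal{T}$, so no $d_{\mathcal{H}}$ term and no source risk $\epsilon^\mathcal{S}(\hat h)$ appear. The crystallized memory, through Proposition~\ref{prop:4.1}, keeps $\hat h$ close to $h^\mathcal{S}$ on the target features.

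The main obstacle is not the algebra but fixing the definitions so that the triangle inequality is legitimate. The loss must be a (pseudo)metric on the output space, and the mixed hypothesis must remain inside that space. The identification $\epsilon^\mathcal{T}(\hat h)=\epsilon^\mathcal{T}(\hat h,h^\mathcal{T})$ also needs care; if it fails, an extra $\epsilon^\mathcal{T}(h^\mathcal{T})$ term appears. I would first try the $L^1$ disagreement on probability outputs, where every required property holds immediately. Any additive constant coming from label noise is absorbed by defining $h^\mathcal{T}$ as the target labeling function under the covariate-shift assumption.
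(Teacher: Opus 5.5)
Your proposal is correct and follows the paper's proof essentially verbatim. The paper writes $\epsilon^\mathcal{T}(\hat h)=\epsilon^\mathcal{T}(\hat h,h^\mathcal{T})=\mathbb{E}_{x\sim\mathbb{P}_\mathcal{T}}|\hat h(x)-h^\mathcal{T}(x)|$, inserts $\pm h^\mathcal{S}(x)$, and applies the pointwise triangle inequality followed by linearity of expectation; your third step is harmless but unnecessary, since the triangle inequality for $|\cdot|$ holds for arbitrary real-valued functions, so the convex-combination form of $\hat h$ and the constraints on $w_0,w_1$ play no role in the argument.
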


\begin{theorem}[Continual Error Upper Bound with State-Evolving Memories]
\label{bound3}
Let $\tilde{\mathbb{P}}_{G^{t-1}}$ and $\tilde{\mathbb{P}}_{G^t}$ denote the induced feature distributions of two consecutive target domains at steps $t{-}1$ and $t$. The risk $\epsilon^{t}(\hat{h})$ with state-evolving memories (with mixed weight $w_{t-1} + w_t = 1$, $w_{t-1}, w_t \geq 0$, and $\hat{h} = w_{t-1} \hat{h}^{t-1} + w_t \hat{h}^t$) is upper bounded by:
\begin{equation} \label{eq:thm-bound3}
\epsilon^t(\hat{h}) \leq \epsilon^t(\hat{h}, h^{t-1}) + \epsilon^t(h^{t-1}, h^t).
\end{equation}
\end{theorem}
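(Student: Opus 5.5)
The plan is to follow the standard domain-adaptation template of Ben-David et al.: interpret $\epsilon^t(h,h') = \mathbb{E}_{x\sim\tilde{\mathbb{P}}_{G^t}}[\ell(h(x),h'(x))]$ as the disagreement risk under the feature distribution of domain $t$, and write $\epsilon^t(h) = \epsilon^t(h, f^t)$, where $f^t$ is the true labeling function on domain $t$. Under the covariate-shift assumption of Section~\ref{sec:problem}, $\mathbb{P}(Y\mid G)$ is shared across domains. We therefore identify $h^t$ with the (ideal) hypothesis matching the labeling on domain $t$, so that $\epsilon^t(h^t)=\epsilon^t(h^t,f^t)=0$, or at least $\epsilon^t(h)\le \epsilon^t(h,h^t)$. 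The proof is then the proof of Theorem~\ref{bound2} with the pair (source, target) replaced by the consecutive pair $(t-1,t)$. I will mirror that argument.

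The key step is the triangle inequality for the disagreement risk. I assume the loss $\ell$ is symmetric and satisfies the triangle inequality, e.g. the $0$-$1$ loss or $|\cdot|$ on $[0,1]$-valued outputs. Then for any three hypotheses,
\begin{equation*}
\epsilon^t(\hat h, h^t) \le \epsilon^t(\hat h, h^{t-1}) + \epsilon^t(h^{t-1}, h^t),
\end{equation*}
obtained pointwise inside the expectation over $\tilde{\mathbb{P}}_{G^t}$. Combining this with $\epsilon^t(\hat h)\le\epsilon^t(\hat h,h^t)$ gives Eq.~\eqref{eq:thm-bound3} directly. Note that no $\mathcal{H}$-divergence term appears, because every quantity is evaluated on the same distribution $\tilde{\mathbb{P}}_{G^t}$. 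This is the source of the claimed tightening relative to Lemma~\ref{bound1}: applying Lemma~\ref{bound1} with source $t-1$ and target $t$ would add $d_{\mathcal{H}}(\tilde{\mathbb{P}}_{G^{t-1}},\tilde{\mathbb{P}}_{G^t})$ and an $\epsilon^{t-1}$ term. I would add a short remark making that comparison explicit.

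The convex-combination structure $\hat h = w_{t-1}\hat h^{t-1}+w_t\hat h^t$ is not needed for the inequality itself, except that $\hat h$ must lie in the hypothesis class (or output range) on which $\ell$ is defined. This is guaranteed by $w_{t-1},w_t\ge 0$ and $w_{t-1}+w_t=1$, with convexity of the output space. If a loss that is merely convex (rather than metric) is used, I would optionally refine the first term via Jensen:
\begin{equation*}
\epsilon^t(\hat h,h^{t-1})\le w_{t-1}\epsilon^t(\hat h^{t-1},h^{t-1})+w_t\epsilon^t(\hat h^t,h^{t-1}).
\end{equation*}
This shows the bound holds for any valid weights, as the text preceding Lemma~\ref{bound1} claims.

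I expect the main obstacle to be the first step: justifying $\epsilon^t(\hat h)\le\epsilon^t(\hat h,h^t)$. This requires identifying $h^t$ with the labeling function on domain $t$, or absorbing the ideal joint error $\epsilon^t(h^t)$. My first attempt would be to state this identification explicitly as a consequence of the covariate-shift assumption $\mathbb{P}_\mathcal{S}(Y\mid G)=\mathbb{P}_\mathcal{T}(Y\mid G)$, with $h^t$ defined as the Bayes-optimal hypothesis. If that is too strong, I would carry an additive $\epsilon^t(h^t)$ term and note that it vanishes under realizability.
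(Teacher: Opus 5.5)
Your proposal is correct and follows the paper's argument: the paper proves Theorem~\ref{bound3} by substituting $(\mathcal{S},\mathcal{T})\to(t-1,t)$ into Theorem~\ref{bound2}, and that proof is exactly the pointwise triangle inequality $|\hat h-h^t|\le|\hat h-h^{t-1}|+|h^{t-1}-h^t|$ integrated over domain $t$. The step you flag as the main obstacle is taken by definition in the paper: $\epsilon^t(\hat h)=\epsilon^t(\hat h,h^t)$, where $h^t$ is the labeling function of domain $t$ as in the paper's definition of classification error, so you need neither the covariate-shift assumption nor an extra $\epsilon^t(h^t)$ term.
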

\vspace{-2.0mm}
\noindent \textbf{Remark.}
The bounds on $\epsilon_{0}^{\mathcal{T}}(\hat{h})$ characterize per-step generalization at each adaptation step $t$, while the continual aspect across $\{G^t\}_{t=1}^T$ is handled by the state-evolving design ($\rhd$ Theorem~\ref{bound3}) with past memories. At each step, the improvement of Theorem~\ref{bound2} over Lemma~\ref{bound1} comes from eliminating the explicit source--target $\mathcal{H}$-divergence penalty $d_{\mathcal{H}}(\tilde{\mathbb{P}}_\mathcal{S},\tilde{\mathbb{P}}_\mathcal{T})$ that appears in Lemma~\ref{bound1}, which yields a bound that dominates $\epsilon_0^\mathcal{T}(\hat{h})$ by a non-negative margin of $d_{\mathcal{H}}(\tilde{\mathbb{P}}_\mathcal{S},\tilde{\mathbb{P}}_\mathcal{T}) \geq 0$ regardless of the relative ordering of $\epsilon^\mathcal{S}(h^\mathcal{S},h^\mathcal{T})$ and $\epsilon^\mathcal{T}(h^\mathcal{S},h^\mathcal{T})$. We therefore arrive at the following conclusions:
\begin{itemize}[leftmargin=*,noitemsep,topsep=1.5pt]
    \item From Theorem~\ref{bound2}, the bound on $\epsilon^{\mathcal{T}}(\hat{h})$ obtained with crystallized memories admits a tighter form than the direct source-to-target bound $\epsilon_{0}^{\mathcal{T}}(\hat{h})$.
    \item \item By Theorem~\ref{bound3}, when EMC traverses consecutive domains via state-evolving memories, the per-step bound likewise admits a tighter form than $\epsilon_{0}^{\mathcal{T}}(\hat{h})$ at each step.
\end{itemize}
Detailed proofs and comparisons are in Appendix~\ref{ap:prop-3}.

\vspace{-2.0mm}
\paragraph{Derivation of the Adaptation Loss.}
Given a test graph from the $t$-th target domain and the sequence of memories from the previous $t-1$ domains, $\tilde{\mathcal{T}} = \{\tilde{G}^1, \tilde{G}^2, \ldots, \tilde{G}^{t-1}\}$, the GNN model with parameters ${\mathbf{\Theta}_{t-1}}$ is continually refined to adapt to the new graph while preserving performance on previously seen graphs.
The overall optimization objective of \ourmethod is:
\begin{equation}
    \mathcal{L} \doteq H(G^{t}) +
    \beta \cdot \mathcal{L}_\text{Adp}(\tilde{{G}}^{1:t-1};\mathbf{\Theta}_{t-1}),
    \text{ and } \mathcal{L}_\text{Adp}(\tilde{{G}}^{1:t-1};\mathbf{\Theta}_{t-1}) = \sum_{(t)=1}^{t-1}\mathcal{D}_\text{KL}(\hat{\tilde{G}}^{(t)}\|\tilde{G}^{(t)}),
\label{eq:loss-2}
\end{equation}
where $H(\cdot)$ denotes the Shannon entropy that measures the uncertainty within the $t$-th domain, $\beta$ is a hyperparameter, and $\mathcal{D}_\text{KL}(\cdot\|\cdot)$ is the KL divergence between two distributions. Here $\tilde{G}^{(t)}$ denotes the stored memory of the $(t)$-th domain (frozen anchor), and $\hat{\tilde{G}}^{(t)}$ denotes the predictive distribution induced by the current model with parameters $\mathbf{\Theta}_{t-1}$ evaluated on the same memory $\tilde{G}^{(t)}$. At each step the system first crystallizes $\tilde{G}^{t}$ and forms the mixed representation, and only then minimizes Eq.\eqref{eq:loss-2} to update $\mathbf{\Theta}_{t-1}\rightarrow \mathbf{\Theta}_{t}$.
The full pipeline and complexity analysis are provided in Appendix~\ref{sec:algorithm}.

\begin{proposition}[Stability Bound of the Adaptation Loss]
\label{prop:kl_stability}
Consider the adaptation step from $\mathbf{\Theta}_{t-1}$ to $\mathbf{\Theta}_{t}$.
For each previously encountered domain indexed by $(t)\in\{1,\ldots,t-1\}$,
$\mathcal{D}_{\mathrm{KL}}(\hat{\tilde{G}}^{(t)}\|\tilde{G}^{(t)})$ can be interpreted as the divergence between the
predictive distributions induced by the model before and after adaptation
on the same crystallized memory $\tilde{G}^{(t)}$.
Moreover, assume that:
(i) the predictive distribution is given by a softmax with temperature $\tau>0$;
(ii) all predicted class probabilities are lower-bounded by $\gamma>0$; and
(iii) the encoder $h(\cdot)$ and the classifier $g(\cdot)$ are Lipschitz continuous in the model parameters.
Then the following upper bound holds:
\begin{equation}
\mathcal{D}_{\mathrm{KL}}(\hat{\tilde{G}}^{(t)}\|\tilde{G}^{(t)})
\;\le\;
\frac{L_g^2 L_h^2}{4\,\gamma\,\tau^2}\,
\big\|\mathbf{\Theta}_{t}-\mathbf{\Theta}_{t-1}\big\|_2^2,
\label{eq:kl_explicit_bound}
\end{equation}
where $L_h$ and $L_g$ are the corresponding Lipschitz constants of $h(\cdot)$ and $g(\cdot)$.
\end{proposition}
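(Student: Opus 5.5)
We need to bound the KL divergence between two softmax distributions $p=\mathrm{softmax}(u/\tau)$ and $q=\mathrm{softmax}(v/\tau)$. Here $u=g(h(\tilde G^{(t)};\mathbf{\Theta}_t))$ and $v=g(h(\tilde G^{(t)};\mathbf{\Theta}_{t-1}))$ are the logits produced on the same frozen memory before and after the update. The first claim, the interpretation, follows directly from Eq.~\eqref{eq:loss-2}. The stored anchor $\tilde G^{(t)}$ is identified with the predictive distribution of the model in force when that memory was formed. The term $\hat{\tilde G}^{(t)}$ is the current model evaluated on the same input. So the only thing that differs is the parameter vector, and the KL compares the model before and after adaptation. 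The quantitative bound is then a chain of three estimates: KL $\to$ probability gap $\to$ logit gap $\to$ parameter gap.

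\textbf{Step 1 (KL to $\chi^2$).} Use the standard inequality $\mathcal{D}_{\mathrm{KL}}(p\|q)\le \chi^2(p\|q)=\sum_c (p_c-q_c)^2/q_c$. This follows from $\log x\le x-1$ applied inside the expectation under $p$. Assumption (ii) gives $q_c\ge\gamma$, so $\mathcal{D}_{\mathrm{KL}}(p\|q)\le \gamma^{-1}\|p-q\|_2^2$.

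\textbf{Step 2 (probability gap to logit gap).} The Jacobian of $z\mapsto\mathrm{softmax}(z/\tau)$ is $\tau^{-1}(\mathrm{diag}(s)-ss^\top)$. Its operator norm is at most $1/(2\tau)$, since the largest eigenvalue of $\mathrm{diag}(s)-ss^\top$ is bounded by $\max_c s_c(1-s_c)\le 1/4$ up to the Gershgorin-type argument. I will try to make this bound rigorous first, for example via the variance interpretation $x^\top(\mathrm{diag}(s)-ss^\top)x=\mathrm{Var}_s(x)$. If that only gives $1/2$ instead of $1/4$ in the worst case, I would adjust the constant accordingly. The mean value theorem along the segment between the logits then gives $\|p-q\|_2\le \frac{1}{2\tau}\|u-v\|_2$. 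Squaring produces the factor $1/(4\tau^2)$.

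\textbf{Step 3 (logit gap to parameter gap).} Assumption (iii) says $g\circ h$ is Lipschitz in the parameters with constant $L_gL_h$, so $\|u-v\|_2\le L_gL_h\|\mathbf{\Theta}_t-\mathbf{\Theta}_{t-1}\|_2$. Combining Steps 1–3 yields Eq.~\eqref{eq:kl_explicit_bound}.

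The main obstacle is Step 2's constant. Bounding the spectral norm of the softmax Jacobian by exactly $1/(2\tau)$ requires care: the naive bound uses $\max_c s_c(1-s_c)\le1/4$ on the diagonal but ignores off-diagonal mass. A secondary subtlety is that the lower bound $\gamma$ must hold for $q$, the reference (pre-update) distribution. Assumption (ii) as stated covers all predicted probabilities, so this is fine.
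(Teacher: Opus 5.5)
Your three-link chain is the paper's proof: $\mathcal{D}_{\mathrm{KL}}\le\chi^2\le\gamma^{-1}\|p-q\|_2^2$, then the Lipschitz bound for the tempered softmax, then the $L_gL_h$ parameter chain. The paper simply asserts the $1/(2\tau)$ softmax constant without justification. It also uses the opposite KL orientation, with $\gamma$ applied to the post-update distribution. That difference is immaterial, since (ii) covers both distributions and the final bound is symmetric.

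The step you call the main obstacle is fine, but the reason you give for it is wrong, and the adjustment you contemplate is unnecessary.

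\begin{itemize}
\item $J(s)=\mathrm{diag}(s)-ss^\top$ is symmetric positive semidefinite, so $\|\tau^{-1}J(s)\|_{\mathrm{op}}=\lambda_{\max}(J(s))/\tau$. The constant $1/(2\tau)$ therefore requires $\lambda_{\max}(J(s))\le 1/2$, not $1/4$.
\item Your claimed bound $\lambda_{\max}\le\max_c s_c(1-s_c)\le 1/4$ is false. For $s=(1/2,1/2)$, the vector $(1,-1)$ is an eigenvector with eigenvalue $1/2$.
\item Gershgorin gives the needed bound. Row $c$ has center $s_c(1-s_c)$ and off-diagonal radius $s_c\sum_{j\ne c}s_j=s_c(1-s_c)$, so every eigenvalue lies in $[0,2s_c(1-s_c)]\subseteq[0,1/2]$.
\item The variance form gives it too. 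For $\|x\|_2=1$ one has $(x_i-x_j)^2\le 2(x_i^2+x_j^2)\le 2$, so the range of $x$ is at most $\sqrt2$. Popoviciu's inequality then gives $\mathrm{Var}_s(x)\le(\sqrt2)^2/4=1/2$.
\end{itemize}

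So $1/(2\tau)$ is the sharp constant, and the integral form of the mean value theorem along the segment finishes Step 2.

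There is also an inconsistency in your interpretation paragraph. Identifying the anchor with ``the model in force when that memory was formed'' would mean parameters $\mathbf{\Theta}_{(t)-1}$. That would only yield a bound in $\|\mathbf{\Theta}_t-\mathbf{\Theta}_{(t)-1}\|_2$, which is not controlled by the single-step change. The statement needs the reference to be the prediction of $\mathbf{\Theta}_{t-1}$ on the stored memory. That is what your formal definition of $v$ uses, and what the paper's $p^{(t)}_{t-1}$ uses, so the verbal description should be aligned with it.
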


\noindent \textbf{Remark.} The bound asserts that small parameter updates yield small predictive shifts on stored memories, providing a theoretical justification for the KL regularization in Eq.~\eqref{eq:loss-2} as a means to mitigate forgetting. Assumption~(iii) is mild and is satisfied by standard GNN backbones under common weight regularization. Detailed discussion together with proof is provided in Appendix~\ref{apsec-kl_stability}.

%% file: chapter/4-experiment.tex
\input{table/1-new-metric-tab}

\section{Experiments}
\vspace{-2.0mm}
In this section, we empirically evaluate the effectiveness and efficiency of the proposed \ourmethod.\footnote{The code of \ourmethod is available at: \url{https://github.com/name-is-what/EMC}.}
We organize the experiments around the following research questions:
\begin{itemize}[leftmargin=*,noitemsep,topsep=1.5pt]
    \item \textbf{RQ1:} How well does \ourmethod achieve {generalizability}, {adaptability}, and {forgetting mitigation} efficiently with low memory and computational cost under continual drift?
    \item \textbf{RQ2:} How does \ourmethod scale with respect to graph size and the number of domains?
    \item \textbf{RQ3:} What are the impacts of different components on the effectiveness and efficiency of \ourmethod?
    \item \textbf{RQ4:} How sensitive is \ourmethod to its hyperparameters?
\end{itemize}

\vspace{-2mm}
\subsection{Experimental Setup}
\vspace{-0.2cm}
\noindent \textbf{Datasets.}
We evaluate \ourmethod on 4 node classification datasets, following the benchmark in GCAL~\cite{qiao2025gcal}, categorized into two types:
\begin{itemize}[leftmargin=*,noitemsep,topsep=1.5pt]
    \item For \textbf{regional shifts}, the Facebook-100~\cite{fb100} and Twitch-Explicit~\cite{twitch} datasets consist of multiple social network graphs collected from different regions.
    \item For \textbf{temporal shifts}, OGB-Arxiv~\cite{ogbn-arxiv} is a citation network of academic papers, and Elliptic~\cite{elliptic} is a Bitcoin transaction network, with both containing graphs spanning different time steps.
\end{itemize}
In all cases, each graph is treated as a separate domain. We pre-train a GNN model on selected source domains and then perform continual adaptation on the remaining domains. Dataset statistics and further details are in Appendix~\ref{app:datasets}.

\noindent \textbf{Baselines.}
We compare \ourmethod against 10 representative baseline methods.
``Base Model'' refers to the pre-trained model directly applied to target datasets without any adaptation, serving as a lower bound. The baselines are grouped into two categories: (1) \textit{One-step Test-Time Adaptation Methods}~\cite{DANN,TENT,Norm,EERM,jin2022empowering,zheng2024online}, and (2) \textit{Continual Test-Time Adaptation Methods}~\cite{CoTTA,EATA,qiao2025gcal}.

\noindent \textbf{Evaluation and Implementation.}
We report the performance matrix $M \in \mathbb{R}^{T \times T}$, a lower triangular matrix where $M_{i,j}$ (for $i \geq j$) denotes the performance on domain $j$ after training on domain $i$. Following~\cite{jin2022empowering,EERM}, we use ROC-AUC for the Twitch-Explicit dataset, Accuracy for Facebook-100 and OGB-Arxiv, and F1 Score for Elliptic.
To summarize performance across all domains, we compute the Average Performance (AP) as $\text{AP}=\frac{1}{T}\sum_{i=1}^T M_{T,i}$, which reflects overall adaptation ability, and the Average Forgetting (AF) as $\text{AF}=\frac{1}{T-1}\sum_{i=1}^{T-1} (M_{T,i} - M_{i,i})$, which quantifies forgetting during sequential training.
To further assess generalizability and adaptability, we follow~\cite{liu2023deja} to introduce 2 complementary metrics:
{Target Domain Generalization (TDG)} measures the model's performance on each domain before adaptation as $\text{TDG} = \frac{1}{T-1} \sum_{i=2}^{T} M_{i-1,i}$. {Target Domain Adaptation (TDA)} measures performance after adaptation as $\text{TDA} = \frac{1}{T} \sum_{i=1}^{T} M_{i,i} $.
All reported results are averaged over 5 runs with standard deviations.
We perform a grid search to select the key hyperparameters of \ourmethod.
More detailed experimental settings are provided in Appendix~\ref{app:setting}.

\vspace{-2mm}
\subsection{Overall Performance Comparison (RQ1)} \label{sec:overall}
\vspace{-0.2cm}
\textbf{Generalizability and Adaptability Analysis.}
To answer RQ1, we evaluate the generalizability and adaptability of \ourmethod using the comprehensive metrics in Table~\ref{tab:more-metric}.
We compare \ourmethod with two recent baselines representing different categories of domain adaptation methods.
In addition to AP and AF, \ourmethod consistently maintains strong generalizability and adaptability under continual drift.
LEBED, not specifically designed for continual settings, shows weaker generalization. Although GCAL is tailored for continual adaptation, it incurs substantial computational overhead.
Notably, on the Elliptic dataset with GCN as the backbone, \ourmethod reduces runtime by 96.78\% and GPU memory by 96.58\%. On OGB-Arxiv with SAGE, it reduces runtime by 98.19\% and GPU memory by 98.25\%.
The results confirm \ourmethod's superior effectiveness and efficiency for graph learning under non-stationary distribution shifts at test time, ensuring generalizability and adaptability while significantly reducing computational overhead.

\input{table/1-main-tab}

\textbf{Extended Comparisons.}
We further compare \ourmethod with 10 representative baselines following the GCAL setting ($\rhd$ Table~\ref{tab:all_performance} and Appendix~\ref{ap:full-baseline}), and conduct extended comparison against the very recent baselines GCAL~\cite{qiao2025gcal} and STEM~\cite{luo2026space} on both node-level and graph-level adaptation ($\rhd$ Appendix~\ref{ap:stem-comparison}). These results confirm that \ourmethod generalizes across task granularities and adaptation protocols, attaining strong effectiveness without sacrificing memory efficiency.

\begin{figure*}[t]
\centering
\hspace{-0.28cm}
 \subfigure[GCAL on Facebook-100]{\label{fig:ma-2}
   \includegraphics[width=0.24\linewidth]{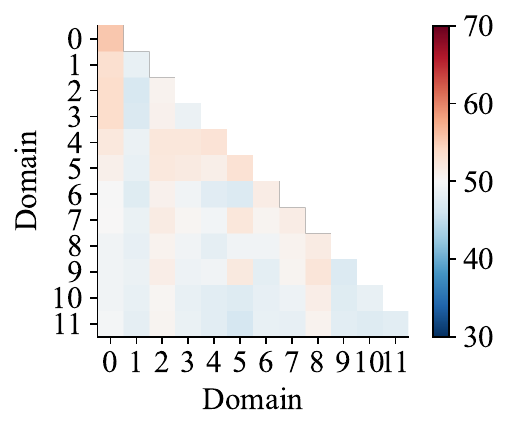}}
\hspace{-0.28cm}
 \subfigure[\ourmethod on Facebook-100]{\label{fig:ma-1}
   \includegraphics[width=0.24\linewidth]{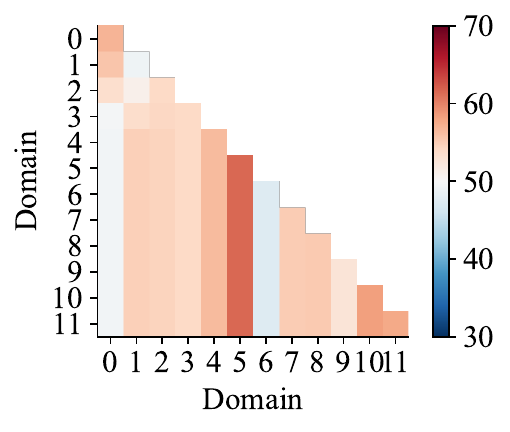}}
\hspace{-0.28cm}
 \subfigure[GCAL on OGB-Arxiv]{\label{fig:ma-4}
   \includegraphics[width=0.24\linewidth]{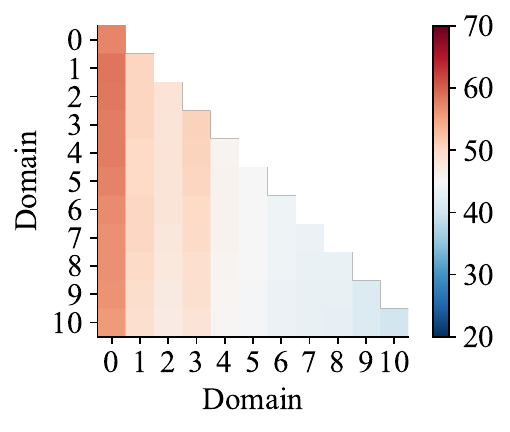}}
\hspace{-0.28cm}
 \subfigure[\ourmethod on OGB-Arxiv]{\label{fig:ma-3}
   \includegraphics[width=0.24\linewidth]{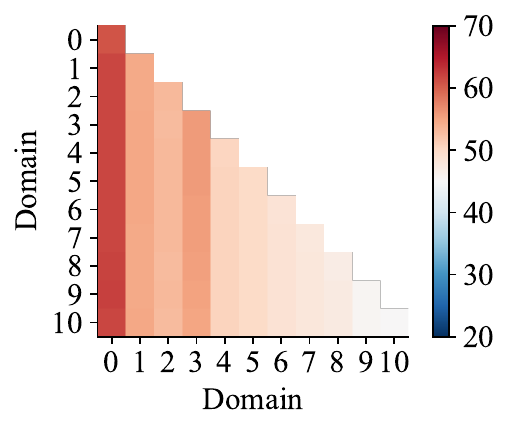}}
\hspace{-0.28cm}

\vspace{-3mm}
\caption{Performance matrices of \ourmethod and GCAL on different datasets.
}
\vspace{-1em}
\label{fig:matrix}
\end{figure*}

\textbf{Visualization of the Performance Matrix.}
We visualize the overall performance during continual adaptation using the performance matrix. As shown in Figure~\ref{fig:matrix}, \ourmethod effectively mitigates forgetting under continual drift, maintaining strong performance across multiple target domains throughout the adaptation process. Detailed analysis with additional datasets is provided in Appendix~\ref{ap:addition-result}.

\begin{figure*}
    \centering
    \includegraphics[width=\linewidth]{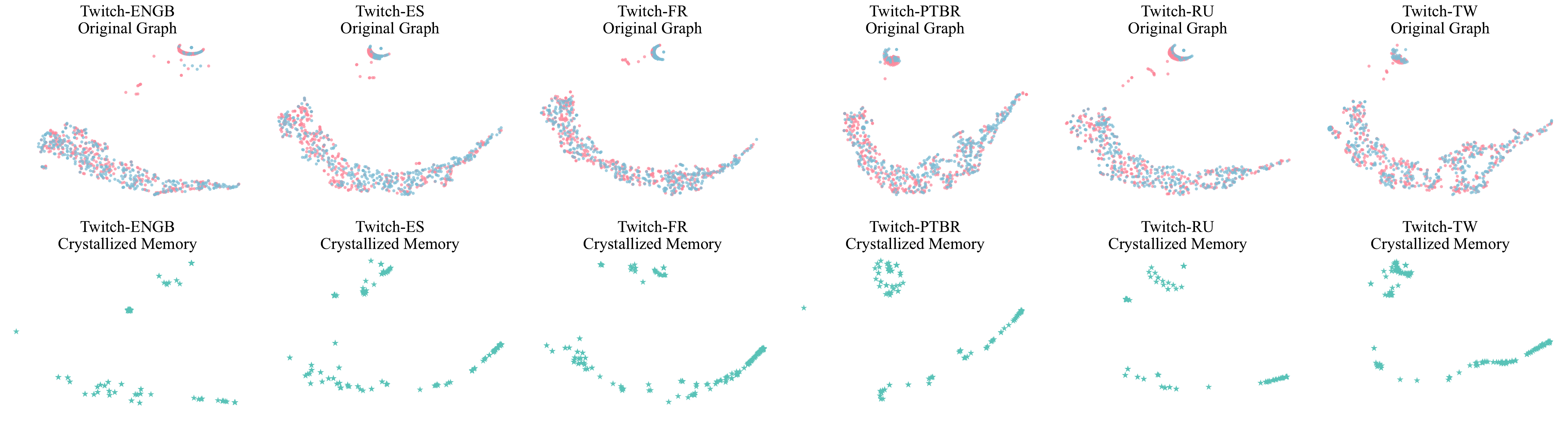}
    \vspace{-1em}
    \caption{T-SNE visualization of representations of the original graph and its crystallized memory.}
    \label{fig:tsne-twitch-gcn}
\end{figure*}

\textbf{Visualization of Crystallized Memories.}
To intuitively examine what the crystallized memories preserve, we provide t-SNE projections of the original and memory embeddings produced by the GNN encoder on the continuous-domain Twitch graphs in Figure~\ref{fig:tsne-twitch-gcn}. The two embedding sets are closely aligned in the representation space, confirming that \ourmethod retains the semantically informative content of each domain in a much more compact form. Additional structural visualizations rendered with NetworkX are provided in Appendix~\ref{ap:more-vis}.

\begin{figure*}[!t]
    \begin{minipage}{0.5\textwidth}
        \centering
        \hspace{-0.5em}
        \includegraphics[width=0.5\linewidth]{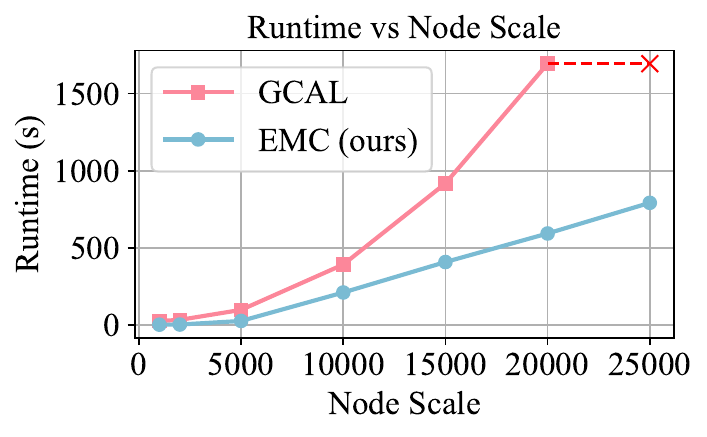}
        \hspace{-0.5em}
        \includegraphics[width=0.5\linewidth]{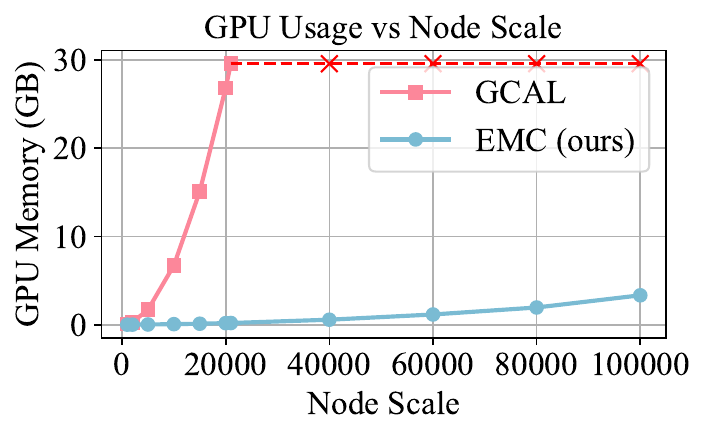}
        \vspace{-0.5em}
        \caption{Node scaling efficiency evaluation.}
        \label{fig:eff2}
    \end{minipage}
    \begin{minipage}{0.5\textwidth}
        \centering
        \hspace{-0.5em}
        \includegraphics[width=0.5\linewidth]{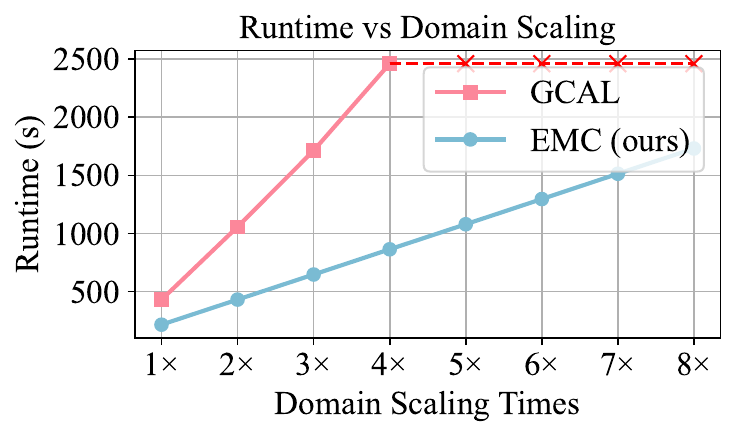}
        \hspace{-0.5em}
        \includegraphics[width=0.5\linewidth]{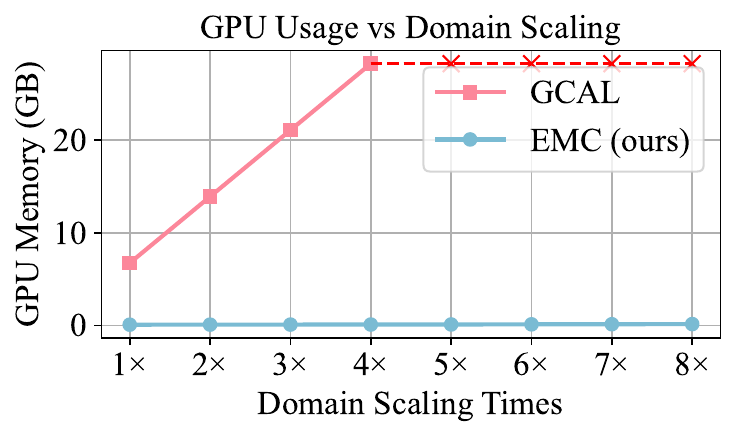}
        \vspace{-0.5em}
        \caption{Domain scaling efficiency evaluation.}
        \label{fig:eff1}
    \end{minipage}
\vspace{-1em}
\end{figure*}

\vspace{-2mm}
\subsection{Scaling Efficiency Analysis (RQ2)}
\vspace{-2mm}
To evaluate the efficiency of \ourmethod, we generate synthetic datasets by varying the number of nodes and the number of domains. We plot the runtime and peak GPU memory usage in Figures~\ref{fig:eff2} and~\ref{fig:eff1}, where the red cross indicates an Out-Of-Memory (OOM) error. Additional results and detailed settings are provided in Appendix~\ref{ap:scale}.
When the number of nodes exceeds 40k, both GCAL and LEBED are stuck due to OOM, while \ourmethod still operates within GPU limits even at the highest tested scale. We further benchmark the memory crystallization step alone on a single graph up to \textbf{200k nodes} (see in Appendix~\ref{ap:scalability}), where the closed-form inversion takes only $0.71$s, confirming that the cost on the small $N_t'\times N_t'$ system remains negligible relative to the propagation cost on the original graph.

\input{table/2-abla-tab}

\vspace{-2mm}
\subsection{Ablation Studies (RQ3)}
\vspace{-2mm}
We conduct ablation studies to analyze the effectiveness of two architectural variants:
\begin{itemize} [leftmargin=*,noitemsep,topsep=1.5pt]
    \item \textbf{\textit{w/o DM}:} We remove the Domain-specific Memory crystallization in Section~\ref{sec:proto}.
    \item \textbf{\textit{w/o EA}:} We remove the memory-driven adaptation in Section~\ref{sec:adapt}.
\end{itemize}
Results are shown in Table~\ref{tab:abla}.
Overall, \ourmethod outperforms both variants, validating the indispensable contribution of each component.
In addition, removing ``\textit{DM}'' leads to a significant drop in both efficiency and effectiveness, while removing ``\textit{EA}'' reduces runtime and GPU memory but causes considerable degradation in adaptation performance.

\begin{figure*}[!t]
    \begin{minipage}{0.5\textwidth}
        \centering
        \includegraphics[width=\linewidth]{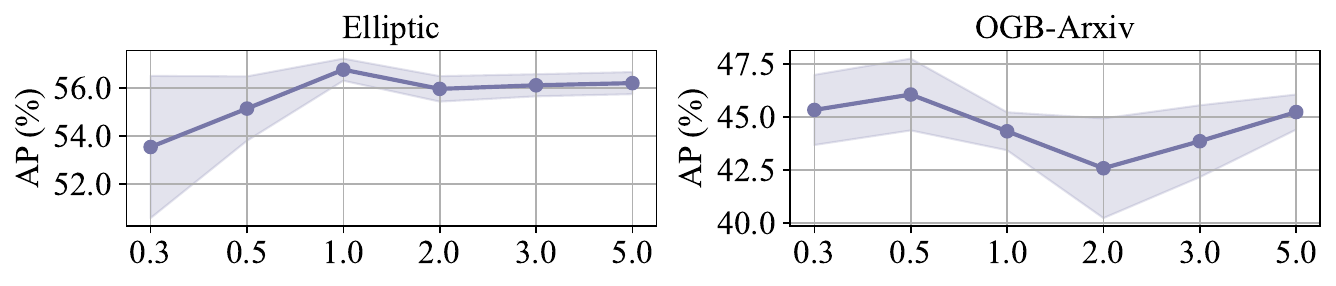}
        \vspace{-1.5em}
        \caption{The sensitivity of hyperparameter $\alpha$.}
        \label{fig:sensi-a}
    \end{minipage}
    \begin{minipage}{0.5\textwidth}
        \centering
        \includegraphics[width=\linewidth]{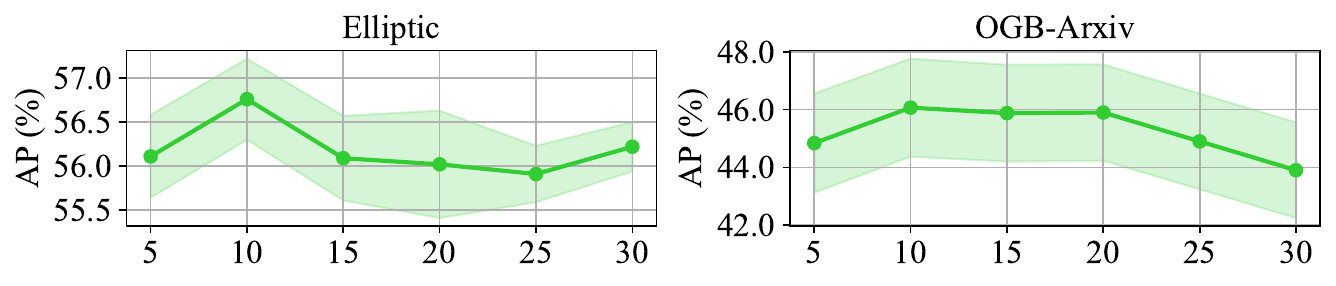}
        \vspace{-1.5em}
        \caption{The sensitivity of hyperparameter $\beta$.}
        \label{fig:sensi-t}
    \end{minipage}
\vspace{-1em}
\end{figure*}

\vspace{-2mm}
\subsection{Parameter Study (RQ4)}
\vspace{-0.2cm}
\noindent \textbf{Sensitivity Analysis of $\alpha$.}
To analyze the sensitivity of $\alpha$ for \ourmethod, we vary it from 0.3 to 5.0. The AP w.r.t. different $\alpha$ values is plotted in Figure~\ref{fig:sensi-a}.
The results confirm that \ourmethod is robust to reasonable choices of $\alpha$, while tuning remains important for optimal performance.

\noindent \textbf{Sensitivity Analysis of $\beta$.}
We further evaluate the sensitivity of $\beta$, which controls the importance of memory replay during adaptation.
Results in Figure~\ref{fig:sensi-t} show that the performance is sensitive to $\beta$ within a reasonable range.
Further analysis on $\tau$ and $\lambda$ is provided in Appendix~\ref{ap:sensi}.

%% file: table/1-new-metric-tab.tex
\begin{table*}[t]
\caption{Performance comparison 
($\%$, mean $\pm$ standard deviation for five runs)
on different backbones with comprehensive metrics, Runtime (s) and GPU Memory Usage (MiB). 
We compare \ourmethod with the most recent 2 baselines from two categories of domain adaptation methods.
`$\uparrow$' indicates higher is better, and `$\downarrow$' indicates lower is better.
`OOM' implies Out-Of-Memory.
The best results are highlighted with \textbf{bold} and the runner-ups are \underline{underlined}.
}
\centering

  \centering
  \resizebox{\textwidth}{!}{
\begin{tabular}{c|c|cccccccccccc}
  \toprule

\multirow{2}[4]{*}{\raisebox{0.15em}{\textbf{Dataset}}} & \multirow{2}[4]{*}{\textbf{Methods}} & \multicolumn{6}{c}{\textbf{Backbone: GCN}} & \multicolumn{6}{c}{\textbf{Backbone: SAGE}} \\

\cmidrule(lr){3-8} \cmidrule(lr){9-14}

         &      & AP ($\uparrow$)   & AF ($\uparrow$)   & TDG ($\uparrow$)  & TDA ($\uparrow$)  & Time ($\downarrow$) & GPU ($\downarrow$)  & AP ($\uparrow$)   & AF ($\uparrow$)   & TDG ($\uparrow$)  & TDA ($\uparrow$)  & Time ($\downarrow$) & GPU ($\downarrow$) \\
\midrule

\multirow{3}[0]{*}{\textbf{Twitch}} & LEBED & 51.06±0.05 & 0.24±0.05 & 51.87±0.06 & 48.90±0.04 & \underline{114.97} & 1440.57 & 56.37±0.03 & 0.58±0.04 & 57.55±0.02 & 55.08±0.04 & \underline{252.80} & \underline{3652.58} \\
         & GCAL & \underline{55.65±0.09} & \underline{0.42±0.13} & \underline{54.78±0.60} & \underline{53.20±0.62} & 217.51 & \underline{11159.8} & \underline{58.20±0.35} & \underline{0.79±0.13} & \underline{59.65±0.39} & \underline{57.02±0.49} & 295.11 & 8580.63 \\
& \cellcolor{myblue}\textbf{\ourmethod} & \cellcolor{myblue}\textbf{55.95±0.15} & \cellcolor{myblue}\textbf{0.47±0.08} & \cellcolor{myblue}\textbf{55.56±0.13} & \cellcolor{myblue}\textbf{55.83±0.14} & \cellcolor{myblue}\textbf{74.64} & \cellcolor{myblue}\textbf{854.97} & \cellcolor{myblue}\textbf{61.62±0.12} & \cellcolor{myblue}\textbf{0.86±0.04} & \cellcolor{myblue}\textbf{62.02±0.13} & \cellcolor{myblue}\textbf{60.36±0.16} & \cellcolor{myblue}\textbf{138.02} & \cellcolor{myblue}\textbf{2833.18} \\

\midrule
\multirow{3}[0]{*}{\textbf{Facebook}} & LEBED & 46.90±0.00 & -0.25±0.00 & 47.72±0.00 & 46.38±0.00 & \underline{93.59} & \underline{3481.53} & 46.92±0.00 & -2.20±0.01 & 48.78±0.00 & 48.39±0.01 & \underline{94.81} & \underline{3718.50} \\
         & GCAL & \underline{52.72±0.36} & \underline{0.72±0.19} & \underline{49.37±0.28} & \underline{48.96±0.29} & 1675.2 & 17107.84 & \underline{51.87±0.12} & \underline{0.04±0.11} &\underline {52.34±0.17} & \underline{51.55±0.08} & 1356.07 & 13633.79 \\
& \cellcolor{myblue}\textbf{\ourmethod} & \cellcolor{myblue}\textbf{53.10±0.00} & \cellcolor{myblue}\textbf{2.15±0.27} & \cellcolor{myblue}\textbf{53.49±0.27} & \cellcolor{myblue}\textbf{52.30±0.04} & \cellcolor{myblue}\textbf{18.84} & \cellcolor{myblue}\textbf{3006.08} & \cellcolor{myblue}\textbf{53.41±0.07} & \cellcolor{myblue}\textbf{0.63±0.20} & \cellcolor{myblue}\textbf{53.11±0.10} & \cellcolor{myblue}\textbf{52.87±0.25} & \cellcolor{myblue}\textbf{18.61} & \cellcolor{myblue}\textbf{3137.31} \\

\midrule
\multirow{3}[0]{*}{\textbf{Elliptic}} & LEBED & 49.44±0.19 & -4.86±0.20 & 52.10±0.06 & 53.82±0.02 & \underline{114.01} & \underline{745.36} & 62.12±0.11 & -6.65±0.15 & \underline{69.30±0.09} & \underline{67.99±0.08} & \underline{142.57} & \underline{1004.31} \\
         & GCAL & \underline{56.57±0.14} & \underline{0.88±0.13} & \underline{53.08±2.54} & \underline{50.31±2.98} & 707.24 & 5533.99 & \underline{71.35±8.57} & \underline{-3.40±1.10} & 63.45±8.82 & 61.38±9.82 & 453.87 & 972.59 \\
& \cellcolor{myblue}\textbf{\ourmethod} & \cellcolor{myblue}\textbf{56.76±0.46} & \cellcolor{myblue}\textbf{0.96±0.30} & \cellcolor{myblue}\textbf{58.73±0.40} & \cellcolor{myblue}\textbf{55.92±0.31} & \cellcolor{myblue}\textbf{22.76} & \cellcolor{myblue}\textbf{189.36} & \cellcolor{myblue}\textbf{72.28±1.92} & \cellcolor{myblue}\textbf{1.53±1.29} & \cellcolor{myblue}\textbf{74.55±2.50} & \cellcolor{myblue}\textbf{70.22±1.46} & \cellcolor{myblue}\textbf{21.67} & \cellcolor{myblue}\textbf{208.91} \\

\midrule
\multirow{3}[0]{*}{\textbf{Arxiv}} & LEBED & \underline{19.09±0.04} & \underline{-0.04±0.05} & 14.18±0.02 & \underline{17.44±0.02} & \underline{227.80} & \underline{839.39} & 41.81±0.01 & -1.85±0.01 & \underline{47.00±0.01} & \underline{42.12±0.02} & \underline{229.83} & \underline{1082.70} \\
         & GCAL & 14.15±6.94 & -1.80±2.06 & \underline{17.90±8.58} & 13.77±5.57 & 1362.76 & 22342.47 & \underline{45.22±0.17} & \underline{0.76±0.05} & 42.40±7.74 & 41.80±6.59 & 1371.78 & 22365.84 \\
& \cellcolor{myblue}\textbf{\ourmethod} & \cellcolor{myblue}\textbf{19.32±1.57} & \cellcolor{myblue}\textbf{3.38±1.99} & \cellcolor{myblue}\textbf{19.39±1.47} & \cellcolor{myblue}\textbf{20.39±1.71} & \cellcolor{myblue}\textbf{24.95} & \cellcolor{myblue}\textbf{299.18} & \cellcolor{myblue}\textbf{46.07±1.70} & \cellcolor{myblue}\textbf{0.99±0.66} & \cellcolor{myblue}\textbf{47.11±1.80} & \cellcolor{myblue}\textbf{43.52±1.25} & \cellcolor{myblue}\textbf{24.84} & \cellcolor{myblue}\textbf{391.02} \\

\bottomrule
\end{tabular}%
}

  \resizebox{\textwidth}{!}{
\begin{tabular}{c|c|cccccccccccc}
  \toprule

\multirow{2}[4]{*}{\raisebox{0.15em}{\textbf{Dataset}}} & \multirow{2}[4]{*}{\textbf{Methods}} & \multicolumn{6}{c}{\textbf{Backbone: GAT}} & \multicolumn{6}{c}{\textbf{Backbone: GIN}} \\

\cmidrule(lr){3-8} \cmidrule(lr){9-14}

         &      & AP ($\uparrow$)   & AF ($\uparrow$)   & TDG ($\uparrow$)  & TDA ($\uparrow$)  & Time ($\downarrow$) & GPU ($\downarrow$)  & AP ($\uparrow$)   & AF ($\uparrow$)   & TDG ($\uparrow$)  & TDA ($\uparrow$)  & Time ($\downarrow$) & GPU ($\downarrow$) \\
\midrule

    \multirow{3}[0]{*}{\textbf{Twitch}} & LEBED & \underline{53.55±0.10} & \underline{-0.84±0.05} & \underline{54.87±0.07} & \underline{53.26±0.13} & \underline{164.51} & \underline{22461.43} & 50.29±0.10 & -4.05±0.27 & 52.73±0.14 & \underline{52.33±0.19} & \underline{131.26} & \underline{3542.58} \\
         & GCAL & OOM  & OOM  & OOM  & OOM  & OOM  & OOM  & \underline{54.13±0.25} & \underline{-2.21±0.31} & \underline{54.16±0.31} & 50.00±0.00 & 316.05 & 10226.88 \\
& \cellcolor{myblue}\textbf{\ourmethod} & \cellcolor{myblue}\textbf{60.40±0.45} & \cellcolor{myblue}\textbf{0.68±0.14} & \cellcolor{myblue}\textbf{59.25±0.48} & \cellcolor{myblue}\textbf{56.31±0.51} & \cellcolor{myblue}\textbf{140.98} & \cellcolor{myblue}\textbf{15007.48} & \cellcolor{myblue}\textbf{57.89±0.27} & \cellcolor{myblue}\textbf{0.35±0.28} & \cellcolor{myblue}\textbf{58.05±0.31} & \cellcolor{myblue}\textbf{55.33±0.27} & \cellcolor{myblue}\textbf{101.24} & \cellcolor{myblue}\textbf{2775.17} \\
         \midrule
    \multirow{3}[0]{*}{\textbf{Facebook}} & LEBED & \underline{47.47±0.11} & \underline{-0.65±0.10} & \underline{48.41±0.08} & \underline{47.39±0.05} & \underline{177.97} & \underline{21358.57} & \underline{51.82±0.04} & \underline{0.14±0.07} & \underline{50.79±0.02} & \underline{50.83±0.03} & \underline{102.06} & \underline{4108.80} \\
         & GCAL & OOM  & OOM  & OOM  & OOM  & OOM  & OOM  & 48.26±0.63 & -0.83±0.52 & 48.58±1.36 & 48.72±1.18 & 1376.18 & 13614.52 \\
& \cellcolor{myblue}\textbf{\ourmethod} & \cellcolor{myblue}\textbf{54.28±1.30} & \cellcolor{myblue}\textbf{0.48±0.67} & \cellcolor{myblue}\textbf{54.21±1.01} & \cellcolor{myblue}\textbf{54.59±1.18} & \cellcolor{myblue}\textbf{66.07} & \cellcolor{myblue}\textbf{11258.32} & \cellcolor{myblue}\textbf{52.42±1.54} & \cellcolor{myblue}\textbf{0.26±0.30} & \cellcolor{myblue}\textbf{51.78±1.43} & \cellcolor{myblue}\textbf{51.88±1.73} & \cellcolor{myblue}\textbf{19.76} & \cellcolor{myblue}\textbf{3220.61} \\
         \midrule
    \multirow{3}[0]{*}{\textbf{Elliptic}} & LEBED & 54.53±1.62 & -6.87±0.76 & \underline{58.75±2.53} & \underline{60.50±2.28} & \underline{470.84} & 8095.29 & \underline{63.58±0.07} & \underline{-2.00±0.10} & \underline{67.09±0.09} & \underline{64.04±0.05} & \underline{194.11} & 1461.72 \\
         & GCAL & \underline{63.53±6.81} & \underline{-2.81±3.15} & 51.31±7.09 & 51.71±9.53 & 626.61 & \underline{2037.20} & 42.23±8.55 & -5.87±10.35 & 45.06±1.59 & 47.03±1.63 & 589.75 & \underline{1011.27} \\
& \cellcolor{myblue}\textbf{\ourmethod} & \cellcolor{myblue}\textbf{70.51±3.18} & \cellcolor{myblue}\textbf{-2.09±1.48} & \cellcolor{myblue}\textbf{68.16±4.75} & \cellcolor{myblue}\textbf{65.56±3.78} & \cellcolor{myblue}\textbf{27.38} & \cellcolor{myblue}\textbf{939.16} & \cellcolor{myblue}\textbf{66.81±2.36} & \cellcolor{myblue}\textbf{-1.13±2.62} & \cellcolor{myblue}\textbf{68.66±3.52} & \cellcolor{myblue}\textbf{64.86±4.68} & \cellcolor{myblue}\textbf{27.57} & \cellcolor{myblue}\textbf{213.36} \\
         \midrule
    \multirow{3}[0]{*}{\textbf{Arxiv}} & LEBED & \underline{25.10±0.06} & \underline{-3.40±0.05} & \underline{28.43±0.06} & \underline{25.29±0.04} & \underline{312.55} & \underline{9458.82} & \underline{24.94±0.03} & -2.65±0.04 & \underline{28.70±0.04} & \underline{25.58±0.03} & \underline{236.62} & \underline{1608.92} \\
         & GCAL & 20.40±0.00 & -6.83±1.72 & 20.70±0.42 & 21.25±1.72 & 1481.93 & 24770.10 & 24.15±6.94 & \underline{-1.80±2.06} & 27.90±8.58 & 23.77±5.57 & 1362.76 & 22342.47 \\
& \cellcolor{myblue}\textbf{\ourmethod} & \cellcolor{myblue}\textbf{38.69±1.62} & \cellcolor{myblue}\textbf{2.21±0.58} & \cellcolor{myblue}\textbf{43.43±3.14} & \cellcolor{myblue}\textbf{39.44±2.08} & \cellcolor{myblue}\textbf{26.02} & \cellcolor{myblue}\textbf{3441.89} & \cellcolor{myblue}\textbf{38.78±3.05} & \cellcolor{myblue}\textbf{-0.98±1.22} & \cellcolor{myblue}\textbf{41.13±3.47} & \cellcolor{myblue}\textbf{39.14±2.21} & \cellcolor{myblue}\textbf{25.06} & \cellcolor{myblue}\textbf{508.92} \\

\bottomrule
\end{tabular}%
}

\label{tab:more-metric}
\vspace{-0.3cm}
\end{table*}

%% file: table/1-main-tab.tex
\begin{table*}[t]
\caption{Performance comparison 
($\%$, mean $\pm$ standard deviation for five runs)
across four datasets, evaluated against baseline methods and \ourmethod. 
The results of baselines are derived from the published works.
`NA' is short for Not Applicable, and `OOM' implies Out-Of-Memory.
The best results are highlighted with \textbf{bold} and the runner-ups are \underline{underlined}.
}
\centering
\resizebox{\textwidth}{!}{
\begin{tabular}{l|ccccccccc}
\toprule
\multirow{2}{*}{\textbf{Methods\ /\ Datasets}} & \multicolumn{2}{c}{\textbf{Twitch-explicit}} & \multicolumn{2}{c}{\textbf{Facebook-100}} & \multicolumn{2}{c}{\textbf{Elliptic}} & \multicolumn{2}{c}{\textbf{OGB-Arxiv}} \\
\cmidrule(lr){2-3} \cmidrule(lr){4-5} \cmidrule(lr){6-7} \cmidrule(lr){8-9}
& AP@AUC ($\uparrow$) & AF ($\uparrow$) & AP@ACC ($\uparrow$) & AF ($\uparrow$) & AP@F1 ($\uparrow$) & AF ($\uparrow$) & AP@ACC ($\uparrow$) & AF ($\uparrow$) \\
\midrule
{Base Model} & 53.88±0.00 & NA & 50.55±0.00 & NA & 53.97±0.00 & NA & 42.43±0.00 & NA \\
\midrule
\midrule
\multicolumn{9}{c}{\textit{One-step Test-Time Adaptation Methods}} \\
\midrule
{DANN}~\cite{DANN}                & 51.50±0.39 & 0.19±0.55  & 50.63±0.64 & 0.38±1.13  & 53.84±0.56 & -0.95±0.54 & 42.62±0.23 & -1.28±0.16 \\
{Norm}~\cite{Norm}                & 52.65±0.00 & -2.30±0.00 & 47.62±0.00 & 0.53±0.00  & 54.67±0.00 & 0.02±0.00  & 40.21±0.00 & 0.11±0.00 \\
{TENT}~\cite{TENT}                & 52.84±0.98 & -1.83±1.05 & 46.63±0.00 & 0.61±0.00  & 46.54±0.00 & \underline{0.55±0.00} & 36.72±0.37 & -0.02±0.18 \\
{EERM}~\cite{EERM}                & 52.08±0.00 & NA         & 49.70±0.03 & NA         & 46.51±0.00 & NA         & 36.26±0.00 & NA \\
{GTrans}~\cite{jin2022empowering} & 53.55±0.29 & NA         & OOM        & NA         & 54.25±0.02 & NA         & 39.69±0.04 & NA \\
{LEBED}~\cite{zheng2024online}    & 51.06±0.05 & 0.24±0.05  & 46.90±0.00 & -0.25±0.00 & 49.44±0.19 & -4.86±0.20 & 41.81±0.01 & -1.85±0.01 \\
\midrule
\midrule
\multicolumn{9}{c}{\textit{Continual Test-Time Adaptation Methods}} \\
\midrule
{CoTTA}~\cite{CoTTA}     & 53.94±0.36 & 0.34±0.41  & 50.12±0.16 & 0.59±0.12  & 54.08±0.05 & -1.92±0.06 & 40.28±0.01 & -1.96±0.01 \\
{EATA}~\cite{EATA}       & 53.56±0.10 & -0.72±0.07 & 49.02±3.16 & -0.57±0.03 & 50.48±0.07 & -0.76±0.06 & 40.91±0.70 & -1.35±0.59 \\
{GCAL}~\cite{qiao2025gcal} & 55.65±0.09 & \underline{0.42±0.13} & \underline{52.72±0.36} & \underline{0.72±0.19} & \underline{56.57±0.14} & \underline{0.88±0.13} & \underline{45.22±0.17} & \underline{0.76±0.05} \\
{STEM}~\cite{luo2026space} & \textbf{56.66±0.05} & -0.07±0.17 & NA & NA & NA & NA & 31.71±0.14 & -4.40±0.19 \\
\midrule
\rowcolor{myblue}
\textbf{\ourmethod} (ours) & \underline{55.95±0.15} & \textbf{0.47±0.08} & \textbf{53.10±0.00} & \textbf{2.15±0.27} & \textbf{56.76±0.46} & \textbf{0.96±0.30} & \textbf{46.07±1.70} & \textbf{0.99±0.66} \\
\bottomrule
\end{tabular}
}
\label{tab:all_performance}
\end{table*}

%% file: table/2-abla-tab.tex
\begin{wraptable}{r}{0.5\textwidth}
\vspace{-2.8em}
\caption{Ablation study results of \ourmethod and its variants with runtime (s) and GPU usage (MiB).}
\vspace{-0.5em}
\label{tab:abla}
\centering
\resizebox{\linewidth}{!}{
\begin{tabular}{c|c|ccc|cc}
\toprule
\textbf{Dataset} & \textbf{Method} & AP($\uparrow$) & TDG($\uparrow$) & TDA($\uparrow$) & Time($\downarrow$) & GPU($\downarrow$)\\
\midrule
\multirow{3}[0]{*}{\textbf{Twitch}} 
& \textit{w/o DM} & \underline{52.12} & \underline{53.77} & \underline{51.35} & 99.43 & 1486.81 \\
& \textit{w/o EA} & 51.50 & 52.31 & 51.21 & 60.13 & 783.14 \\
& \cellcolor{myblue}\textbf{\ourmethod} & \cellcolor{myblue}\textbf{55.95} & \cellcolor{myblue}\textbf{55.56} & \cellcolor{myblue}\textbf{55.83} & \cellcolor{myblue}74.64 & \cellcolor{myblue}854.97 \\
\midrule
\multirow{3}[0]{*}{\textbf{Facebook}} 
& \textit{w/o DM} & 48.78 & 48.96 & 48.92 & 85.45 & 13481.55 \\
& \textit{w/o EA} & \underline{51.30} & \underline{50.85} & \underline{50.34} & 17.65 & 2965.72 \\
& \cellcolor{myblue}\textbf{\ourmethod} & \cellcolor{myblue}\textbf{53.10} & \cellcolor{myblue}\textbf{53.49} & \cellcolor{myblue}\textbf{52.30} & \cellcolor{myblue}18.84 & \cellcolor{myblue}3006.08 \\
\midrule
\multirow{3}[0]{*}{\textbf{Elliptic}} 
& \textit{w/o DM} & \underline{47.44} & 49.20 & 47.57 & 97.60 & 747.93 \\
& \textit{w/o EA} & 47.35 & \underline{50.55} & \underline{50.00} & 20.52 & 183.57 \\
& \cellcolor{myblue}\textbf{\ourmethod} & \cellcolor{myblue}\textbf{56.76} & \cellcolor{myblue}\textbf{58.73} & \cellcolor{myblue}\textbf{55.92} & \cellcolor{myblue}22.76 & \cellcolor{myblue}189.36 \\
\midrule
\multirow{3}[0]{*}{\textbf{Arxiv}} 
& \textit{w/o DM} & 12.61 & 16.45 & 17.58 & 30.52 & 1083.73 \\
& \textit{w/o EA} & \underline{20.65} & \underline{20.50} & \underline{20.51} & 13.26 & 390.12 \\
& \cellcolor{myblue}\textbf{\ourmethod} & \cellcolor{myblue}\textbf{46.07} & \cellcolor{myblue}\textbf{47.11} & \cellcolor{myblue}\textbf{43.52} & \cellcolor{myblue}24.84 & \cellcolor{myblue}391.02 \\
\bottomrule
\end{tabular}}
\vspace{-2.5em}
\end{wraptable}

%% file: chapter/2-related.tex
\section{Related Work}
\textbf{Graph Continual Learning.}
Graph continual learning aims to enable graph models to adapt to evolving data distributions while mitigating catastrophic forgetting. Existing approaches are typically categorized into three major classes: regularization methods, parameter isolation methods, and memory replay methods. Regularization-based methods constrain weight updates to preserve important knowledge from previous tasks; for example, TWP~\cite{TWP} incorporates topology-aware weight preservation, and RieGrace~\cite{RieGrace} leverages gradient-based importance measures. Parameter isolation techniques, such as HPNs~\cite{HPNs}, allocate task-specific parameters to prevent interference between old and new knowledge. Memory replay methods alleviate forgetting by storing and reusing representative data, as seen in ER-GNN~\cite{ER-GNN} and SSM~\cite{zhang2022sparsified}.
While these methods are effective in continual learning scenarios, they generally assume access to labeled training data, limiting their applicability to real-world settings.
Inspired by memory replay, our work eliminates the need for additional trainable components by reformulating the distribution matching objective and deriving a closed-form solution for crystallizing each incoming domain into a compact memory at test time. This makes our method well suited for the more realistic and challenging setting of graph learning under non-stationary distribution shifts at test time, where the input distribution drifts continually and no source data or annotations are available.

\textbf{Graph Domain Adaptation.}
Graph domain adaptation~\cite{ma2019gcan,wu2022attraction,xiao2022domain,ding2018graph,liu2023structural,jin2022empowering} aims to transfer knowledge learned from labeled data in a source domain to different but related target domains.
For example, UDA-GCN~\cite{wu2020unsupervised} and AdaGCN~\cite{dai2022graph} enhance adaptability by exploiting graph topology and reducing source--target discrepancies through local/global consistency or domain adversarial objectives.
Continual Test-Time Adaptation (CTTA) is a more practical extension of traditional Test-Time Adaptation (TTA), which tackles the challenge of adapting models to continuously drifting target domains. Recent CTTA approaches, such as CoTTA~\cite{CoTTA}, incorporate strategies like bi-level pseudo-label averaging and stochastic model resets to improve robustness. The most recent graph-specific approach, GCAL~\cite{qiao2025gcal}, trains an auxiliary VGAE to generate memory graphs for replay, but the additional generative training is expensive and scales poorly under prolonged drift.
However, existing methods largely overlook the continual nature of distribution shifts at test time and the redundancy within each domain that is irrelevant to the underlying semantics. In this paper, we explicitly model inter-domain relationships under continual drift through state-evolving memories and improve both the efficiency and effectiveness of adaptation by crystallizing each domain in closed form, without requiring any auxiliary trainable component at test time.

%% file: chapter/conclusion.tex
\vspace{-0.2cm}
\section{Conclusion}
\vspace{-0.2cm}
In this paper, we present \textbf{\ourmethod} (\textbf{E}fficient \textbf{M}emory \textbf{C}rystallization), a training-free test-time framework that crystallizes each incoming graph domain into a compact, semantically faithful memory through a closed-form solution, and models inter-domain dependencies via state-evolving memories with provably tighter generalization error bounds. Extensive experiments demonstrate the superiority of \ourmethod, achieving substantial efficiency gains (87.4\% less runtime and 92.4\% less GPU memory on average over the recent baseline GCAL) while outperforming state-of-the-art baselines on graphs under non-stationary distribution shifts.

%% file: appendix/appendix.tex
\section{Notations}\label{sec:notation}

As an expansion of the notations in our work, we summarize the frequently used notations in Table~\ref{tab:notation}. We adopt $G^t$ to denote the graph from the $t$-th target domain in the arrival order, and $G^{(t)}$ as a generic notation when the index $(t)$ may refer to any past or current domain in $\{1,\ldots,T\}$.

\begin{table*}[htbp]
\caption{Summary of notations and descriptions.}
  \centering
\resizebox{.8\linewidth}{!}{
    \begin{tabular}{cl}
    \toprule
    \textbf{Notations} & \multicolumn{1}{c}{\textbf{Descriptions}} \\
    \midrule
$\mathcal{T} = \{G^t\}_{t=1}^T$ & Sequence of $T$ graphs from target domains \\
$G^{(t)} = \{\mathbf{A}^{(t)}, \mathbf{X}^{(t)}\}$ & Graph from the $t$-th target domain \\
$\mathbf{A}^{(t)} \in \mathbb{R}^{N_t \times N_t}$ & Adjacency matrix of $G^{(t)}$ with $N_t$ nodes \\
$\mathbf{X}^{(t)} \in \mathbb{R}^{N_t \times d}$ & Node feature of $G^{(t)}$ with $d$-dim features \\
$\mathbf{\Theta}_t$ & Model parameters adapted after $t$-th domain \\
$\hat{\mathbf{A}}$ & Normalized adjacency matrix with self-loops \\
${\bf H} = \hat{\bf A}^K {\bf X}$ & Embeddings after $K$-hop propagation \\
${\bf H}'$ & Aggregated memory embeddings \\
$\tilde{G}^{(t)} = \{{\bf A}', {\bf X}'\}$ & Crystallized memory of $G^{(t)}$ \\
${\bf A}'$ & Sparsified memory adjacency matrix \\
${\bf X}'$ & Feature matrix of memory $\tilde{G}^{(t)}$ \\
${\bf L}'$ & Laplacian matrix of ${\bf A}'$ \\
${\bf R}^i$ & Class-wise aggregation matrix for class $i$ \\
$\hat{\bf P}$ & Aggregation matrix for crystallized memories \\
$\mathbf{\Psi}$ & Learnable linear weight of the GNN encoder \\
$\Phi$ & Distribution over parameter configurations of $\mathbf{\Psi}$ \\
${\bf Z} = {\bf H} \mathbf{\Psi}$ & Representation of the original graph \\
${\bf Z}' = {\bf H}' \mathbf{\Psi}$ & Representation of the crystallized memory \\
$\mathbf{Z}'^{(t)}$ & State-evolved memory representation at step $t$ \\
$\hat{\tilde{G}}^{(t)}$ & Predictive distribution of the current model on $\tilde{G}^{(t)}$ \\
$\epsilon^t(\hat{h})$ & Prediction risk on the $t$-th target domain \\
$\mathcal{D}_{\mathrm{KL}}(\cdot\|\cdot)$ & KL-divergence between two distributions \\
$d_{\mathcal{H}}(\cdot, \cdot)$ & $\mathcal{H}$-divergence \\
$\mathcal{L}$, $\mathcal{L}_\text{Adp}$ & Adaptation losses\\
$\alpha$, $\beta$, $\tau$ & Hyperparameters of \ourmethod \\
$w_{t-1}, w_t$ & Convex-combination weights for state-evolving memory \\
$\lambda$ & Memory size scaling factor, $N_t = \lambda^{-1} N_t'$ \\
\bottomrule
\end{tabular}%
}
\label{tab:notation}
\end{table*}%

\section{Algorithm}\label{sec:algorithm}
The overall optimization process of \ourmethod is summarized in Algorithm~\ref{algo-1}. At each step $t$, we first construct the memory $\tilde{G}^{(t)}$ and form the state-evolved representation $\mathbf{Z}'^{(t)}$ (Steps~1--3), and only then minimize $\mathcal{L}$ to update $\mathbf{\Theta}_{t-1}\to\mathbf{\Theta}_t$ (Steps~4--5). The KL term in Eq.~\eqref{eq:loss-2} therefore measures the predictive shift on each stored memory induced by the parameter update, serving as an anti-forgetting regularizer with the stability guarantee of Proposition~\ref{prop:kl_stability}.

\IncMargin{1.3em}
\begin{algorithm}[ht]
    \caption{Test-Time Adaptation Process of \ourmethod.}
    \label{algo-1}
    \KwIn{Sequence of test graph samples $\{G^{(t)} = (\mathbf{A}^{(t)}, \mathbf{X}^{(t)})\}_{t=1}^{T}$ from $T$ domains; pre-trained GNN $f$ with parameters $\mathbf{\Theta}_0$.}
    \KwOut{Adapted parameters $\{\mathbf{\Theta}_t\}_{t=1}^{T}$.}
    Initialize the model with $\mathbf{\Theta}_0$. \\
    \For{$t = 1$ to $T$}{
        \tcp{Step 1: Structure-aware Feature Propagation}
        Compute $\mathbf{H}^{(t)} \gets \hat{\mathbf{A}}^{K} \mathbf{X}^{(t)}$.\\
        Compute aggregation matrix $\hat{\mathbf{P}}^{(t)}$ via Eq.~\eqref{eq_agg2}.\\
        Obtain aggregated features $\mathbf{H}'^{(t)} \gets \hat{\mathbf{P}}^{(t)} \mathbf{H}^{(t)}$.\\

        \tcp{Step 2: Redundancy-eliminated Memory Construction}
        Build sparse adjacency $\mathbf{A}'^{(t)}$ via Eq.~\eqref{eq_adj}.\\
        Solve closed-form objective for $\mathbf{X}'^{(t)}$ via Eq.~\eqref{eq_feat}.\\
        Construct domain-specific memory $\tilde{G}^{(t)} = (\mathbf{A}'^{(t)}, \mathbf{X}'^{(t)})$.\\

        \tcp{Step 3: State Evolving Field}
        $\mathbf{Z}'^{(t)} \gets w_{t-1}\,\mathbf{Z}'^{(t-1)} + w_t\,\mathbf{Z}'^{(t)}$ via Eq.~\eqref{eq-8}.\\

        \tcp{Step 4: Adaptation Loss Computation}
        Compute $\mathcal{L} \gets H(G^{(t)}) + \beta \cdot \mathcal{L}_\text{Adp}(\tilde{G}^{1:t-1})$ via Eq.~\eqref{eq:loss-2}.\\

        \tcp{Step 5: Parameter Update}
        Update $\mathbf{\Theta}_{t-1} \to \mathbf{\Theta}_t$ via gradient descent on $\mathcal{L}$.
    }
    \Return $\{\mathbf{\Theta}_t\}_{t=1}^{T}$
\end{algorithm}
\DecMargin{1.3em}

\subsection{Complexity Analysis} \label{ap:complex}

The pipeline of \ourmethod consists of the following components: Structure-aware Feature Propagation, Redundancy-eliminated Memory Construction, State Evolving Field Modeling, and Parameter Update. We analyze the time complexity of each module per test-time step $t$ with input $G^{t}$.

\begin{itemize}[leftmargin=*,noitemsep,topsep=1.5pt]

\item \textbf{Structure-aware Feature Propagation}: This step performs $K$-hop neighborhood smoothing over the input graph, with complexity $\mathcal{O}(K |\mathcal{E}_t| d)$, where $|\mathcal{E}_t|$ is the number of edges and $d$ is the node feature dimension.
We then use confidence-weighted K-means clustering for partitioning each class. Let $|C_i|$ be the number of nodes in class $i$ and $|C_i'|$ be the number of clusters. The time complexity for class $i$ is $\mathcal{O}(|C_i||C_i'|dn_{\text{iter}})$, where $n_{\text{iter}}$ is the number of K-means iterations. This process is repeated for each class.

\item \textbf{Redundancy-eliminated Memory Construction}:
Computing pairwise cosine similarity among $N_t'$ memory nodes takes $\mathcal{O}(N_t'^2 d)$.
Solving the regularized least-squares problem involves matrix inversion with complexity $\mathcal{O}(N_t'^3 + N_t'^2 d + (K+1)N_t' |\mathcal{E}_t|' + d |\mathcal{E}_t|')$, where $|\mathcal{E}_t|'$ is the number of edges in the memory graph. The solution is efficient because $N_t' \ll N_t$.

\item \textbf{State Evolving Field Modeling}:
Fusion of memories from previous and current time steps involves linear interpolation of node embeddings and has negligible cost: $\mathcal{O}(N_t' d)$.

\item \textbf{Parameter Update}:
The loss computation includes entropy and KL-divergence between memory predictions. For $t$ total adaptation steps, the cumulative cost of computing $\mathcal{L}_{\text{Adp}}$ scales as $\mathcal{O}(t N_t' d)$ across domains. Parameter updates via gradient descent depend on the model size but are standard and efficient in practice.
\end{itemize}

The total time complexity per test graph $G^{t}$ is dominated by class-wise partitioning and memory feature construction. However, since the memory size $N_t'$ is only a small fraction (e.g., 1\%) of the original graph size $N_t$, \ourmethod remains highly efficient.
Techniques such as approximate clustering (e.g., FAISS~\cite{douze2024faiss}) and sparse propagation further accelerate the process, making the method suitable for real-time test-time adaptation under continual drift.

\noindent\textbf{Complexity Comparison with the Baseline GCAL.}
Compared with our efficient closed-form memory crystallization, GCAL~\cite{qiao2025gcal} introduces substantial additional computational overhead at each adaptation step. Specifically, for the $t$-th test-time domain $G^t = (\mathbf{A}^{t}, \mathbf{X}^{t})$, the extra complexity mainly comes from the following components:
\begin{itemize}[leftmargin=*,noitemsep,topsep=1.5pt]
    \item \textbf{VGAE Encoding:} Encoding the input graph into latent mean and variance representations incurs a cost of $\mathcal{O}(\mathcal{E}_t d + N_t d^2)$, where $N_t$ and $\mathcal{E}_t$ are the number of nodes and edges in $G^t$, respectively, and $d$ is the node feature dimension.

    \item \textbf{VGAE Decoding:} Reconstructing the synthetic graph $\tilde{G}^t$ through sampling and edge scoring requires $\mathcal{O}(N_t^2 d)$ due to pairwise similarity computations.

    \item \textbf{Dual Forward Pass:} Forwarding both $G^t$ and $\tilde{G}^t$ through the base GNN leads to $\mathcal{O}(\mathcal{E}_t d + N_t d^2 + \bar{\mathcal{E}_t} d + N_t' d^2)$, where $N_t'$ and $\bar{\mathcal{E}_t}$ denote the node and edge count of the synthetic graph $\tilde{G}^t$.

    \item \textbf{Gradient Matching:} Computing real gradients and second-order synthetic gradients with respect to $P$ model parameters has cost $\mathcal{O}(P + P^2)$.

    \item \textbf{Regularization Terms:} The KL divergence and reparameterization loss over latent variables add $\mathcal{O}(N_t d)$ complexity, and the MMD loss comparing node embeddings from $G^t$ and $\tilde{G}^t$ incurs $\mathcal{O}(N_t N_t' d)$.

    \item \textbf{Additional Inner-loop Iterations:} All steps above are repeated for $L$ inner-loop updates, resulting in a total cost of $\mathcal{O}\big(L \cdot (\mathcal{E}_t d + N_t d^2 + N_t^2 d + \bar{\mathcal{E}_t} d + N_t' d^2 + P + P^2 + N_t N_t' d)\big)$.
\end{itemize}
In each adaptation epoch, GCAL introduces the above-mentioned additional computational overhead, which is further amplified during the parameter update stage of the GNN model.
This design leads to significantly higher time and memory costs during continual test-time adaptation, making GCAL less practical for long-horizon or resource-constrained scenarios.

\subsection{Analysis of Scalability on Memory Construction} \label{ap:scalability}

The closed-form construction in \ourmethod is designed for scalability: the linear system
${\bf X}' = ({\bf Q}^{\top}{\bf Q}+\alpha {\bf L}')^{-1}{\bf Q}^{\top}{\bf H}'$
is solved on the small memory graph of size $N_t'$ rather than on the full input of size $N_t$. With $N_t' = 0.01 N_t$ throughout our experiments, the linear system stays compact regardless of the input scale, decoupling the dominant cost of memory construction from the input size. This compact formulation is what makes the method memory-efficient ($\rhd$ Section~\ref{sec:overall}) and what we now show transfers to large input graphs.

\textbf{Sparsity from the design.}
The similarity threshold $\tau$ in Eq.~\eqref{eq_adj} keeps only edges between node pairs whose embeddings are sufficiently close, so the memory adjacency $\mathbf{A}'$ is constructed sparse by design rather than densified post hoc. As a result, $\mathbf{Q} = \hat{\mathbf{A}}'^K$ inherits this sparsity, and the resulting linear system remains well-conditioned and efficient to solve in practice.

\begin{table*}[htbp]
\caption{Memory-construction scaling on a single graph up to 200k nodes (memory size $N_t' = 0.01 N_t$). The matrix inversion in Eq.~\eqref{eq_feat} stays under 1 second even at $N_t=200$k.}
  \centering
  \resizebox{\linewidth}{!}{
    \begin{tabular}{c|ccccccccc}
    \toprule
    \textbf{Metric / \#Node} & 1k & 2k & 5k & 10k & 20k & 40k & 50k & 100k & 200k \\
    \midrule
    Matrix Inversion Time (s) & 0.0077 & 0.0096 & 0.0156 & 0.0234 & 0.0974 & 0.1142 & 0.1224 & 0.3211 & 0.7121 \\
    Total Processing Time (s) & 0.6414 & 1.0703 & 2.8145 & 17.7121 & 54.2386 & 184.0252 & 271.6589 & 1158.5502 & 5580.7541 \\
    GPU Memory (MiB) & 11.38 & 15.17 & 29.75 & 66.17 & 181.71 & 564.07 & 829.68 & 3358.29 & 12806.52 \\
    \bottomrule
    \end{tabular}}
\label{tab:scale-200k}
\end{table*}

\textbf{Empirical scaling on a single graph.}
Table~\ref{tab:scale-200k} reports runtime and GPU memory of the memory-construction step on a single Erd\H{o}s--R\'enyi graph as the node count $N_t$ grows from 1k to 200k (memory size fixed at $N_t' = 0.01 N_t$). The matrix inversion remains under 1 second even at $N_t=200$k, while total processing scales near-linearly in the dominant propagation term, confirming that the closed-form solver is not a bottleneck at industrial scale.

\subsection{Semantic-Anchor Aggregation at Test Time}
\label{app:semantic_anchor}

We elaborate on how class-wise node partitioning in Section~\ref{sec:proto} is instantiated at test time, where ground-truth labels of target-domain graphs are unavailable, and explain why the resulting memories are robust to noisy partitioning.

\textbf{Test-Time Setting under Continual Drift.}
We consider a standard covariate shift setting, where the label space remains unchanged across domains, \ie, $\mathbb{P}_{\mathcal{S}}(Y) = \mathbb{P}_{\mathcal{T}}(Y)$ and $\mathbb{P}_{\mathcal{S}}(G) \neq \mathbb{P}_{\mathcal{T}}(G)$.
That is, distribution shifts at test time affect the input graph distributions, while the semantic meanings of classes are consistent with those observed during pre-training. This assumption is critical: under covariate shift the class-conditional decision boundary is preserved across domains, so systematic confident misclassification by the current model is substantially less likely than under label shift.

\textbf{Semantic Anchors for Class-wise Aggregation.}
Class-wise partitioning at test time is guided by the model's predicted semantics rather than ground-truth labels. Concretely, given a node $v$ in the incoming graph $G^{(t)}$, the current model with parameters $\mathbf{\Theta}_{t-1}$ produces a predictive distribution $p_{\mathbf{\Theta}_{t-1}}(y|v)$, from which a soft semantic assignment is derived together with a confidence score $r_v = \max_{c \in \mathcal{Y}} p_{\mathbf{\Theta}_{t-1}}(y=c|v)$. Nodes with similar predicted semantics are grouped together, and the confidence score weights each node's contribution to the resulting centroid through the aggregation matrix $\mathbf{R}^i$ in Eq.~\eqref{eq_agg2}.
Crucially, the predicted semantics serve as \emph{semantic anchors} for grouping rather than as supervisory targets, so the procedure does not rely on the correctness of any individual assignment.

\textbf{Design Choices that Stabilize the Aggregation.}
Memory crystallization in \ourmethod relies on four complementary design choices that together keep aggregation stable under continual drift:

\begin{itemize}[leftmargin=*,noitemsep,topsep=1.5pt]
    \item \textbf{Covariate shift limits systematic errors.} As stated above, the preserved class-conditional decision boundary makes systematic confident misclassification far less likely than under label shift.
    \item \textbf{Anchors are not supervisory targets.} They are used only to partition nodes into sub-clusters for centroid aggregation. Even noisy assignments yield valid centroids through a many-to-one smoothing effect, analogous to the behavior of $k$-means under individual point mislabeling~\cite{lloyd1982least}.
    \item \textbf{$K$-hop propagation denoises before partitioning.} The features $\mathbf{H}=\hat{\mathbf{A}}^K \mathbf{X}$ aggregate neighborhood information, smoothing out local prediction errors before memory construction.
    \item \textbf{Dirichlet regularization provides an independent smoothness constraint.} The term $\alpha\cdot\mathrm{tr}(\mathbf{X}'^\top \mathbf{L}'\mathbf{X}')$ in Eq.~\eqref{eq_feat} enforces feature smoothness on the crystallized memory, further reducing any residual noise from partitioning.
\end{itemize}

Empirically, \ourmethod consistently outperforms baselines on all four datasets ($\rhd$ Section~\ref{sec:overall}), including Elliptic (temporal Bitcoin transaction shifts that can be abrupt) and OGB-Arxiv (year-by-year citation shifts), confirming that the four design choices above translate into reliable partitioning under continual drift in practice.

\section{Theoretical Justification}
\subsection{Reformulation of Distribution Matching} \label{ap-dis-match}

We provide a complete proof for Proposition~\ref{prop:4.1} and retain the original analysis in the remark.
We denote the memory crystallized from $G^{(t)}$ as $\tilde{G}^{(t)} = \{ {\bf A}', {\bf X}' \}$.
After multi-hop propagation, let ${\bf Z} = {\bf H} \mathbf{\Psi} = {\hat{\bf A}}^{K} {\bf X} \mathbf{\Psi}$ and ${\bf Z}' = {\bf H}' \mathbf{\Psi} = {\hat{\bf A}}'^{K} {\bf X}' \mathbf{\Psi}$ denote the representations of the original graph $G^{(t)}$ and its memory $\tilde{G}^{(t)}$, respectively, where $\mathbf{\Psi}$ is the learnable linear weight of the GNN encoder and $\Phi$ is the distribution over its parameter configurations.
To directly align the distribution of the input domain with that of its memory, the discrepancy between propagated features is minimized as:
\begin{equation}
\min_{\tilde{G}^{(t)}} {\mathop{\mathbb{E}}_{{\mathbf{\Psi} } \sim \Phi}} \left \| {\bf P}'{\bf{H}}'{\mathbf{\Psi}}- {\bf P}{\bf{H}}{\mathbf{\Psi}} \right \|^{2}.
\label{eq:simDM_obj}
\end{equation}

\begin{proposition}[Restatement of Proposition~\ref{prop:4.1}]
\label{ap-prop:4.1}
Assuming that nodes in $\tilde{G}^{(t)}$ are organized in ascending order, ${\bf P}'$ degenerates into the identity matrix ${\bf I}$.
Let $\hat{\bf P}$ be the row-normalized version of ${\bf P}$. Then, the objective in Eq.~\eqref{eq:simDM_obj} can be upper-bounded by:
\begin{equation}
\label{eq_simDM2}
{\mathop{\mathbb{E}}_{{\mathbf{\Psi}} \sim \Phi}} \left \| {\bf{H}}'{\mathbf{\Psi}}- \hat{\bf P}{\bf{H}}{\mathbf{\Psi}} \right \|^{2}
\leq
{\mathop{\mathbb{E}}_{{\mathbf{\Psi} } \sim \Phi}} \left \| {\bf{H}}'- \hat{\bf P}{\bf{H}} \right \|^{2} \left \| {\mathbf{\Psi}}  \right \|^{2}.
\end{equation}
\end{proposition}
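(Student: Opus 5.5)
The plan is to split the argument into a bookkeeping reduction of the objective in Eq.~\eqref{eq:simDM_obj} to the left-hand side of Eq.~\eqref{eq_simDM2}, followed by a pointwise submultiplicativity estimate integrated against $\Phi$.

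\textbf{Step 1 (reduction).} Justify that ${\bf P}'={\bf I}$. In the memory $\tilde{G}^{(t)}$, each node is already a sub-class representative. Once the nodes are listed in ascending order of their (class, sub-class) index, the aggregation map on the memory side sends each node to itself, so the block-diagonal aggregator has one block per class, and each block is an identity matrix. On the original-graph side, we take ${\bf P}$ in its row-normalized form $\hat{\bf P}=\text{diag}(\hat{\bf R}^1,\dots,\hat{\bf R}^C)$, exactly as in the construction of the aggregated embeddings ${\bf H}'_i=\hat{\bf R}^i{\bf H}_i$. This assumes the nodes of ${\bf H}$ are permuted so that they are grouped class by class. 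Any such permutation matrix can be absorbed into $\hat{\bf P}$ without changing any norm. Substituting both facts turns the integrand into $\|{\bf H}'\mathbf{\Psi}-\hat{\bf P}{\bf H}\mathbf{\Psi}\|^2$.

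\textbf{Step 2 (pointwise bound).} Fix $\mathbf{\Psi}$ and factor the difference as $({\bf H}'-\hat{\bf P}{\bf H})\mathbf{\Psi}$. The two terms are compatible: ${\bf H}'$ and $\hat{\bf P}{\bf H}$ are both $N_t'\times d$, and $\mathbf{\Psi}$ is $d\times d'$. Then apply submultiplicativity, $\|{\bf M}\mathbf{\Psi}\|\le\|{\bf M}\|\,\|\mathbf{\Psi}\|$. This holds for the Frobenius norm via Cauchy--Schwarz applied row by row, $\|m_i^\top\mathbf{\Psi}\|_2\le\|m_i\|_2\|\mathbf{\Psi}\|_2\le\|m_i\|_2\|\mathbf{\Psi}\|_F$. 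It also holds for the spectral norm. Squaring both sides, which is monotone on nonnegative reals, gives
\[
\|({\bf H}'-\hat{\bf P}{\bf H})\mathbf{\Psi}\|^2\le\|{\bf H}'-\hat{\bf P}{\bf H}\|^2\|\mathbf{\Psi}\|^2 .
\]

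\textbf{Step 3 (expectation).} Take $\mathbb{E}_{\mathbf{\Psi}\sim\Phi}$ of both sides and use monotonicity of expectation. This assumes $\mathbb{E}\|\mathbf{\Psi}\|^2<\infty$, which I would state explicitly. Since ${\bf H}'-\hat{\bf P}{\bf H}$ does not depend on $\mathbf{\Psi}$, the right-hand side equals $\|{\bf H}'-\hat{\bf P}{\bf H}\|^2\,\mathbb{E}\|\mathbf{\Psi}\|^2$, which is the stated bound. It also yields the Remark's conclusion that minimizing $\|{\bf H}'-\hat{\bf P}{\bf H}\|^2$ minimizes the bound.

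\textbf{Main obstacle.} The analytic content, Steps 2--3, is routine. The delicate part is Step 1: making precise why ${\bf P}'$ ``degenerates'' to ${\bf I}$ and why ${\bf P}$ can be replaced by its row-normalized version. I would handle this by defining the memory node ordering to match the block structure of $\hat{\bf P}$. With that convention the identification holds by construction rather than as an approximation.
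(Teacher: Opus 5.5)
Your proposal is correct and matches the paper's proof essentially step for step. Both reduce the objective via ${\bf P}'={\bf I}$ and the row-normalized $\hat{\bf P}$, set ${\bf M}={\bf H}'-\hat{\bf P}{\bf H}$, apply submultiplicativity $\|{\bf M}\mathbf{\Psi}\|\le\|{\bf M}\|\,\|\mathbf{\Psi}\|$, square, and take the expectation using that ${\bf M}$ does not depend on $\mathbf{\Psi}$; your added details (the row-wise Cauchy--Schwarz justification for the Frobenius norm and the explicit assumption $\mathbb{E}\|\mathbf{\Psi}\|^2<\infty$) are sound refinements that the paper leaves implicit.
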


\begin{proof}
We start from Eq.~\eqref{eq:simDM_obj}. Under the canonical ordering of nodes in $\tilde{G}^{(t)}$, we have ${\bf P}'={\bf I}$.
By the construction of the aggregation matrix in the main text, ${\bf P}$ aggregates nodes within each class-wise subclass, and $\hat{\bf P}$ is its row-normalized version.
Thus Eq.~\eqref{eq:simDM_obj} reduces to
\begin{equation}
{\mathop{\mathbb{E}}_{{\mathbf{\Psi} } \sim \Phi}} \left \| {\bf{H}}'{\mathbf{\Psi}}- \hat{\bf P}{\bf{H}}{\mathbf{\Psi}} \right \|^{2}.
\label{eq:simDM_reduced}
\end{equation}
Let ${\bf M}\doteq {\bf H}'-\hat{\bf P}{\bf H}$. Then
\begin{equation}
\left\| {\bf{H}}'{\mathbf{\Psi}}- \hat{\bf P}{\bf{H}}{\mathbf{\Psi}} \right\|^{2}
=
\|{\bf M}\mathbf{\Psi}\|^{2}.
\end{equation}
By sub-multiplicativity of matrix norms, for any compatible matrices ${\bf M}$ and $\mathbf{\Psi}$,
\begin{equation}
\|{\bf M}\mathbf{\Psi}\| \le \|{\bf M}\|\,\|\mathbf{\Psi}\|.
\end{equation}
Squaring both sides yields
\begin{equation}
\|{\bf M}\mathbf{\Psi}\|^{2} \le \|{\bf M}\|^{2}\,\|\mathbf{\Psi}\|^{2}.
\end{equation}
Taking expectation over ${\mathbf{\Psi}}\sim\Phi$ on both sides gives
\begin{equation}
\mathbb{E}_{\mathbf{\Psi}\sim\Phi}\|{\bf M}\mathbf{\Psi}\|^{2}
\le
\mathbb{E}_{\mathbf{\Psi}\sim\Phi}\left(\|{\bf M}\|^{2}\,\|\mathbf{\Psi}\|^{2}\right)
=
\|{\bf M}\|^{2}\,\mathbb{E}_{\mathbf{\Psi}\sim\Phi}\|\mathbf{\Psi}\|^{2},
\end{equation}
where we used that ${\bf M}={\bf H}'-\hat{\bf P}{\bf H}$ is independent of $\mathbf{\Psi}$.
Substituting back ${\bf M}$ and rewriting yields Eq.~\eqref{eq_simDM2}, completing the proof.
\end{proof}

\noindent \textbf{Remark.}
The objective in Eq.~\eqref{eq:simDM_obj} is upper-bounded by Eq.~\eqref{eq_simDM2}, where ${\mathbf{\Psi}}$ is the (random) weight matrix that transforms the $K$-th order propagated features ${\bf{H}}'$ and ${\bf{H}}$.
Here ${\hat{\bf{A}}'} = {\tilde{\bf{D}}}'^{\frac{1}{2}}{\tilde{\bf{A}}}'{\tilde{\bf{D}}}'^{\frac{1}{2}}$ and ${\hat{\bf{A}}}={\tilde{\bf{D}}}^{\frac{1}{2}}{\tilde{\bf{A}}}{\tilde{\bf{D}}}^{\frac{1}{2}}$ represent the symmetric normalized adjacency matrices, where ${\tilde{\bf{A}}'}$ and ${\tilde{\bf{A}}}$ are adjacency matrices with self-loops, and ${\tilde{\bf{D}}'}$ and ${\tilde{\bf{D}}}$ are degree matrices for ${\tilde{\bf{A}}'}$ and ${\tilde{\bf{A}}}$.
Since $\mathbf{\Psi}$ is independent of both $\tilde{G}^{(t)}$ and $\hat{\bf P}$, minimizing the upper bound reduces to the deterministic matching problem:
\vspace{-1mm}
\begin{equation}
\arg\min_{\tilde{G}^{(t)}, \hat{\bf P}} \left\| {\bf H}' - \hat{\bf P} {\bf H} \right\|^{2}.
\label{eq:det_matching}
\end{equation}

According to Proposition~\ref{ap-prop:4.1}, we expect a symmetric encoding procedure between the original ${G}^{(t)}$ and the crystallized memory $\tilde{G}^{(t)}$.
Therefore, our goal is to construct $\bf{A}'$ and ${\bf X}'$ satisfying ${\hat{\bf{A}}}'^{K}{\bf{X}'} = {\bf{H}'}$.
To this end, we utilize a pre-defined graph structure and compute ${\bf X}'$ in closed form.
Specifically, we construct the structure via a similarity-based thresholding scheme:
\begin{equation}
\label{eq_adj_ap}
{\bf A}'_{i,j} =
\begin{cases}
1 & \text{if } \cos({\bf H}'_i, {\bf H}'_j) > \tau, \\
0 & \text{otherwise},
\end{cases}
\end{equation}
where $\cos(\cdot, \cdot)$ measures the cosine similarity and $\tau$ is the hyper-parameter for graph sparsification.
To ensure that reconstructed features change smoothly between connected nodes, we introduce the Dirichlet energy constraint~\cite{kalofolias2016learn} into the feature reconstruction loss to quantify the smoothness of graph signals. The optimization objective for ${\bf X}'$ is:
\begin{equation}
\arg\min_{{\bf X}'} \left\| {\hat{\bf A}}'^K {\bf X}' - {\bf H}' \right\|^2 + \alpha \operatorname{tr}({\bf X}'^\top {\bf L}' {\bf X}'),
\label{app:eq_feat}
\end{equation}
where ${\bf L}' = {\bf D}' - {\bf A}'$ is the Laplacian matrix, ${\bf D}'$ is the degree matrix, and $\alpha$ controls the smoothness strength.

\subsection{Proof of Proposition~\ref{ap-prop-4.2}} \label{ap:prop-1}

\begin{proposition}
\label{ap-prop-4.2}
Assume that $\tilde{G}^{(t)} = \{ {\bf A}', {\bf X}' \}$ is the crystallized memory of an input graph; the closed-form solution of Eq.~\eqref{app:eq_feat} takes the form ${\bf X}'=({\bf Q}^{\top}{\bf Q}+\alpha {\bf L}')^{-1}{\bf Q}^{\top}{\bf H}'$, where ${\bf Q} = \hat{\bf{A}}'^{K}$.
\end{proposition}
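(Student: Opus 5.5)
The plan is the standard first-order argument for a regularized least-squares problem, with the Frobenius norm as the norm in Eq.~\eqref{app:eq_feat}. Write ${\bf Q} = \hat{\bf A}'^{K}$ and define
\begin{equation*}
J({\bf X}') = \operatorname{tr}\big(({\bf Q}{\bf X}' - {\bf H}')^{\top}({\bf Q}{\bf X}' - {\bf H}')\big) + \alpha \operatorname{tr}({\bf X}'^{\top}{\bf L}'{\bf X}').
\end{equation*}
Expanding with the cyclic property of the trace gives
\begin{equation*}
J({\bf X}') = \operatorname{tr}\big({\bf X}'^{\top}({\bf Q}^{\top}{\bf Q}+\alpha{\bf L}'){\bf X}'\big) - 2\operatorname{tr}({\bf X}'^{\top}{\bf Q}^{\top}{\bf H}') + \operatorname{tr}({\bf H}'^{\top}{\bf H}').
\end{equation*}
Here ${\bf Q}$ is symmetric, because $\hat{\bf A}'$ is a symmetric normalization of a symmetric matrix. ${\bf L}'$ is symmetric positive semidefinite, since it is a combinatorial Laplacian of the undirected graph built in Eq.~\eqref{eq_adj_ap}. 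Hence ${\bf M} \doteq {\bf Q}^{\top}{\bf Q}+\alpha{\bf L}'$ is symmetric PSD for $\alpha \ge 0$.

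\textbf{Step 1 (first-order condition).} Differentiate using $\partial_{\bf X}\operatorname{tr}({\bf X}^{\top}{\bf M}{\bf X}) = 2{\bf M}{\bf X}$ for symmetric ${\bf M}$ and $\partial_{\bf X}\operatorname{tr}({\bf X}^{\top}{\bf B}) = {\bf B}$. This gives $\nabla J = 2{\bf M}{\bf X}' - 2{\bf Q}^{\top}{\bf H}'$. Setting it to zero yields the normal equations ${\bf M}{\bf X}' = {\bf Q}^{\top}{\bf H}'$.

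\textbf{Step 2 (convexity).} The Hessian, acting column-wise, is $2({\bf I}_d\otimes{\bf M})$, which is PSD. So $J$ is convex, and any stationary point is a global minimizer.

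\textbf{Step 3 (invertibility and uniqueness).} This is the main obstacle: showing ${\bf M}$ is positive definite, so that the inverse in the statement exists and the minimizer is unique.

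\begin{itemize}[leftmargin=*,noitemsep,topsep=1.5pt]
  \item Since $\tilde{\bf A}' = {\bf A}'+{\bf I}$ has strictly positive degrees, $\hat{\bf A}'$ is well defined.
  \item ${\bf Q}$ is nonsingular iff $\hat{\bf A}'$ has no zero eigenvalue. If so, ${\bf Q}^{\top}{\bf Q}\succ 0$ and therefore ${\bf M}\succ 0$.
  \item Otherwise, suppose $v^{\top}{\bf M}v=0$. Then ${\bf Q}v=0$ and $v^{\top}{\bf L}'v=0$. The second condition forces $v$ to be constant on each connected component of ${\bf A}'$.
  \item The normalized matrix $\hat{\bf A}'$ has eigenvalue $1$ (not $0$) on the vector ${\tilde{\bf D}}'^{1/2}\mathbf{1}_{\mathcal C}$ of each component $\mathcal{C}$. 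The plan is to exclude nonzero $v$ in $\ker{\bf Q}$ that are piecewise constant on components, for example by noting that a component-wise constant vector has nonzero image under the positive matrix $\tilde{\bf A}'$.
  \item If this argument does not go through cleanly, fall back on stating the mild assumption that ${\bf M}$ is invertible, which holds generically. With $\alpha>0$ the only potentially degenerate directions are component-constant vectors.
\end{itemize}

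\textbf{Conclusion.} Given ${\bf M}\succ 0$, the normal equations give ${\bf X}' = ({\bf Q}^{\top}{\bf Q}+\alpha{\bf L}')^{-1}{\bf Q}^{\top}{\bf H}'$. This is the unique global minimizer of Eq.~\eqref{app:eq_feat}, as claimed.
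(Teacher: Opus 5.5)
Your Steps 1--2 match the paper's proof (gradient $2({\bf Q}^{\top}{\bf Q}+\alpha{\bf L}'){\bf X}'-2{\bf Q}^{\top}{\bf H}'$, PSD Hessian, normal equations), and Step 3 covers something the paper skips: it asserts a unique minimizer from a merely PSD Hessian and calls ${\bf Q}$ itself PSD, which can fail for odd $K$ because $\hat{\bf A}'$ may have negative eigenvalues, whereas ${\bf Q}^{\top}{\bf Q}\succeq 0$, as you use, is what the argument actually needs. Your sketched Step 3 closes without the fallback assumption: for $\alpha>0$, ${\bf M}v=0$ forces $v=\sum_{\mathcal C}c_{\mathcal C}\mathbf{1}_{\mathcal C}$ and $\hat{\bf A}'v=0$ (since $\ker\hat{\bf A}'^{K}=\ker\hat{\bf A}'$ for symmetric $\hat{\bf A}'$), yet $\tilde{\bf A}'$ is block-diagonal over components, nonnegative, with positive diagonal, so on each $\mathcal C$ the vector $\hat{\bf A}'v$ is $c_{\mathcal C}$ times an entrywise positive vector, giving every $c_{\mathcal C}=0$ and hence ${\bf M}\succ 0$.
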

\begin{proof}
Let us denote the objective in Eq.~\eqref{app:eq_feat} as
$\mathcal{I} = \left\| {\hat{\bf A}}'^K {\bf X}' - {\bf H}' \right\|^2 + \alpha \operatorname{tr}({\bf X}'^\top {\bf L}' {\bf X}')$.
To solve the optimization problem in Eq.~\eqref{app:eq_feat}, we compute the first- and second-order derivatives of $\mathcal{I}$ with respect to ${\bf X}'$:
\begin{equation}
\begin{aligned}
\frac{\partial \mathcal{I}}{\partial {\bf X}'} &= 2{\bf Q}^{\top}({\bf Q}{\bf X}' - {\bf H}') + \alpha ({\bf L}' + {\bf L}'^{\top}){\bf X}'= 2({\bf Q}^{\top}{\bf Q} + \alpha {\bf L}'){\bf X}' - 2{\bf Q}^{\top}{\bf H}',\\
\frac{\partial^2 \mathcal{I}}{\partial {\bf X}'^2} &= 2({\bf Q}^{\top}{\bf Q} + \alpha {\bf L}').
\end{aligned}
\end{equation}
According to the definition of ${\bf A}'$ in Eq.~\eqref{eq_adj_ap}, both ${\bf L}'$ and ${\bf Q}$ are positive semi-definite matrices.
Therefore, the second-order derivative $\frac{\partial^2 \mathcal{I}}{\partial {\bf X}'^2}$ is positive semi-definite, and the objective function $\mathcal{I}$ is convex.
This implies that the first-order derivative $\frac{\partial \mathcal{I}}{\partial {\bf X}'}$ corresponds to a convex function with a unique minimum.
Setting the gradient to zero yields the closed-form solution:
\begin{equation}
{\bf X}' = ({\bf Q}^{\top}{\bf Q} + \alpha {\bf L}')^{-1} {\bf Q}^{\top} {\bf H}',
\end{equation}
where ${\bf Q} = \hat{\bf A}'^{K}$.
\end{proof}

\noindent \textbf{Remark.} The closed-form solution operates on a compact $N_t' \times N_t'$ system over the crystallized memory rather than on the full $N_t \times N_t$ system over the input graph, with $N_t' \ll N_t$. This decoupling between the construction cost and the input scale is the key reason \ourmethod attains a much smaller memory footprint than methods that train auxiliary modules on the full input graph.

\subsection{Proof of Proposition~\ref{prop:4-3}} \label{ap:prop-3}
\input{chapter/add-proof}

\subsection{Proof of Error Upper Bounds}

\begin{definition}[Classification error]
The classification error of the function $\hat{h}$ under domain $\mathcal{D}_i$ is defined as
\begin{equation}
\epsilon_i(\hat{h}) = \mathbb{E}_{X \sim \mathcal{D}_i}  |\hat{h}(X) - h(X)| .
\end{equation}
For binary classification functions $h$ and $\hat{h}$, we have:
\begin{equation}
\begin{aligned}
\epsilon_i(\hat{h}) = \epsilon_i(\hat{h}, h) &= \mathbb{E}_{X \sim \mathcal{D}_i}  |\hat{h}(X) - h(X)| \\ &= \mathbb{P}_{X \sim \mathcal{D}_i}(\hat{h}(X) \ne h(X)).
\end{aligned}
\end{equation}
\end{definition}

\begin{definition}[$\mathcal{H}$-divergence]
Given two induced domain feature space distributions $\tilde{\mathcal{D}}_s, \tilde{\mathcal{D}}_t$ and a hypothesis space $\mathcal{H}$, the $\mathcal{H}$-divergence between $\tilde{\mathcal{D}}_s$ and $\tilde{\mathcal{D}}_t$ is defined as:
\begin{equation}
\begin{aligned}
d_{\mathcal{H}}(\tilde{\mathcal{D}}_s, \tilde{\mathcal{D}}_t) = 2 \sup_{h \in \mathcal{H}} \left| \mathbb{E}_{X \sim \tilde{\mathcal{D}}_s}[h(X) = 1] - \mathbb{E}_{X \sim \tilde{\mathcal{D}}_t}[h(X) = 1] \right|.
\end{aligned}
\end{equation}
\end{definition}

\noindent\textbf{Setting for the bounds.}
We adopt the standard domain-adaptation framework~\cite{zhao2019learning} as the analytical setting, whose bounds are applied to each single adaptation step, where $\mathbb{P}_\mathcal{S}$ refers to the source distribution from which $\mathbf{\Theta}_0$ is pre-trained and $\mathbb{P}_\mathcal{T}$ refers to the empirical distribution of the current target graph $G^t$. The bounds therefore characterize per-step generalization behavior; the continual aspect across $\{G^t\}_{t=1}^T$ is handled separately through the state-evolving design in Theorem~\ref{bound3} and the stability bound in Proposition~\ref{prop:kl_stability}, which together control how predictive behavior on past memories evolves as $\mathbf{\Theta}_{t-1}\to\mathbf{\Theta}_t$.

\begin{lemma}[Error Upper Bound for Domain Shift~\cite{zhao2019learning}]
\label{ap-bound1}
Let $\tilde{\mathbb{P}}_\mathcal{S}$ and $\tilde{\mathbb{P}}_\mathcal{T}$ denote the induced distributions over the feature space for $\mathbb{P}_\mathcal{S}$ and $\mathbb{P}_\mathcal{T}$. The following inequality holds for the risk $\epsilon_{0}^\mathcal{T}(\hat{h})$ with single step adaptation on the target domain $\mathbb{P}_\mathcal{T}$:
\begin{equation}
\begin{aligned}
\epsilon_{0}^\mathcal{T}(\hat{h}) \leq \min\{&\epsilon^\mathcal{S}(h^\mathcal{S}, h^\mathcal{T}), \epsilon^\mathcal{T}(h^\mathcal{S}, h^\mathcal{T})\} + \epsilon^\mathcal{S}(\hat{h}) + d_{\mathcal{H}}(\tilde{\mathbb{P}}_\mathcal{S}, \tilde{\mathbb{P}}_\mathcal{T}),
\end{aligned}
\end{equation}
where $d_{\mathcal{H}}$ is the $\mathcal{H}$-divergence.
\end{lemma}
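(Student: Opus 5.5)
The plan follows the standard Ben-David / Zhao et al. argument, since Lemma~\ref{ap-bound1} is cited from \cite{zhao2019learning}. Throughout, treat hypotheses as binary classifiers on the induced feature space and use the disagreement risk $\epsilon^{\mathcal{D}}(h,h') = \mathbb{P}_{X\sim\tilde{\mathcal{D}}}(h(X)\neq h'(X))$. Take the true labeling function on the target to be $h^\mathcal{T}$. This is legitimate under the covariate-shift assumption $\mathbb{P}_\mathcal{S}(Y|G)=\mathbb{P}_\mathcal{T}(Y|G)$, and it lets us write $\epsilon_0^\mathcal{T}(\hat h)=\epsilon^\mathcal{T}(\hat h, h^\mathcal{T})$. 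Similarly, $\epsilon^\mathcal{S}(\hat h)=\epsilon^\mathcal{S}(\hat h,h^\mathcal{S})$.

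\textbf{Step 1 (triangle inequality).} The disagreement risk is a pseudo-metric: $\mathbf 1[a\neq c]\le \mathbf 1[a\neq b]+\mathbf 1[b\neq c]$ pointwise, then take expectations. From this we get
\begin{equation*}
\epsilon^\mathcal{T}(\hat h,h^\mathcal{T}) \le \epsilon^\mathcal{T}(\hat h,h^\mathcal{S}) + \epsilon^\mathcal{T}(h^\mathcal{S},h^\mathcal{T}).
\end{equation*}

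\textbf{Step 2 (change of measure).} The task is to pass from $\epsilon^\mathcal{T}(\hat h,h^\mathcal{S})$ to $\epsilon^\mathcal{S}(\hat h,h^\mathcal{S})$. The disagreement region $\{\hat h\neq h^\mathcal{S}\}$ is itself the positive set of the classifier $\mathbf 1[\hat h\neq h^\mathcal{S}]$. Provided this symmetric-difference classifier lies in $\mathcal{H}$, the definition of $d_\mathcal{H}$ gives
\begin{equation*}
|\epsilon^\mathcal{T}(\hat h,h^\mathcal{S})-\epsilon^\mathcal{S}(\hat h,h^\mathcal{S})|\le \tfrac12 d_\mathcal{H}(\tilde{\mathbb P}_\mathcal{S},\tilde{\mathbb P}_\mathcal{T})\le d_\mathcal{H}(\tilde{\mathbb P}_\mathcal{S},\tilde{\mathbb P}_\mathcal{T}).
\end{equation*}
Combining with Step 1 yields
\begin{equation*}
\epsilon_0^\mathcal{T}(\hat h)\le \epsilon^\mathcal{S}(\hat h)+\epsilon^\mathcal{T}(h^\mathcal{S},h^\mathcal{T})+d_\mathcal{H}(\tilde{\mathbb P}_\mathcal{S},\tilde{\mathbb P}_\mathcal{T}).
\end{equation*}

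\textbf{Step 3 (the other branch of the min).} Apply the same change-of-measure step to the pair $(h^\mathcal{S},h^\mathcal{T})$, which gives $\epsilon^\mathcal{T}(h^\mathcal{S},h^\mathcal{T})\le \epsilon^\mathcal{S}(h^\mathcal{S},h^\mathcal{T})+\tfrac12 d_\mathcal{H}$. This step needs care with the constants. Plugging it naively into Step 2 would produce $\tfrac12 d_\mathcal{H}$ twice, totalling $d_\mathcal{H}$, which still fits the stated bound, because the lemma uses $d_\mathcal{H}$ rather than $\tfrac12 d_\mathcal{H}$. Using the factor-$2$ convention in the definition of $d_\mathcal{H}$, each change of measure costs $\tfrac12 d_\mathcal{H}$. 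Both branches therefore hold with total penalty at most $d_\mathcal{H}$, and taking the minimum of the two gives the claimed inequality.

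\textbf{Main obstacle.} Step 2 requires the disagreement sets $\{h\neq h'\}$ to be representable by classifiers in $\mathcal{H}$. Without this, one needs the $\mathcal{H}\Delta\mathcal{H}$-divergence instead. I would state this as the standard closure assumption on $\mathcal{H}$, as in \cite{zhao2019learning}, and otherwise interpret $d_\mathcal{H}$ as computed over $\mathcal{H}\Delta\mathcal{H}$.
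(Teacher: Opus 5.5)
Your proof is correct for binary hypotheses, given two things you rely on: the paper's normalization of $d_{\mathcal H}$ (with the factor $2$) and the closure assumption you flag.

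It reaches the $\epsilon^{\mathcal S}(h^{\mathcal S},h^{\mathcal T})$ branch differently from the argument of \cite{zhao2019learning}. The paper cites that argument in lieu of a proof and only proves the auxiliary inequality $|\epsilon^{\mathcal S}(h^{\mathcal S},h^{\mathcal T})-\epsilon^{\mathcal T}(h^{\mathcal S},h^{\mathcal T})|\le d_{\mathcal H}$. In the cited argument you reverse the order of your two steps: first change measure on $(\hat h,h^{\mathcal T})$, then apply the triangle inequality under the source, $\epsilon^{\mathcal S}(\hat h,h^{\mathcal T})\le\epsilon^{\mathcal S}(\hat h,h^{\mathcal S})+\epsilon^{\mathcal S}(h^{\mathcal S},h^{\mathcal T})$. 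Each branch then pays the divergence exactly once, so the lemma holds under either normalization of $d_{\mathcal H}$.

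Your route adds a second change of measure, on $(h^{\mathcal S},h^{\mathcal T})$, to the first branch. It therefore closes only because you keep the sharp constant $\tfrac12 d_{\mathcal H}$ in both steps. With a divergence lacking the factor $2$, or with the auxiliary inequality at the constant the paper states, that branch would carry more than $d_{\mathcal H}$. In exchange you get a sharper first branch, and the lemma reduces cleanly to that branch plus a sharp form of the paper's auxiliary inequality.

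The other difference is scope. The paper, following \cite{zhao2019learning}, works with $[0,1]$-valued hypotheses. It uses $\mathbb E|h-h'|=\int_0^1\Pr(|h-h'|>z)\,dz$ and the thresholded class $\{\mathrm{sgn}(|h-h'|-z)\}$, whereas your symmetric-difference argument covers only binary hypotheses. The $\hat h$ the paper actually uses with this lemma is the convex combination $w_0\hat h^{\mathcal S}+w_1\hat h^{\mathcal T}$, so you should replace Step~2 by that layer-cake bound. Your chaining then goes through unchanged. Your explicit closure assumption is the precise version of the step the paper writes as $d_{\hat{\mathcal H}}\le d_{\mathcal H}$.
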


\noindent \textbf{Remark.}
Note that this bound was first derived by~\cite{zhao2019learning}.
The first term measures the disagreement between the source and target labeling functions, the second term is the source error, and the third term quantifies the discrepancy between the marginal feature distributions. The bound identifies three key factors for successful domain adaptation: small source risk, close marginal distributions, and consistent labeling functions across domains.

\begin{lemma}
Let $\tilde{\mathbb{P}}_\mathcal{S}$ and $\tilde{\mathbb{P}}_\mathcal{T}$ denote the induced distributions over the feature space for $\mathbb{P}_\mathcal{S}$ and $\mathbb{P}_\mathcal{T}$. Then for any $h^\mathcal{S} \in \mathcal{H}, h^\mathcal{T} \in \mathcal{H}$, we have
\begin{equation}
    |\epsilon^\mathcal{S}(h^\mathcal{S}, h^\mathcal{T}) - \epsilon^\mathcal{T}(h^\mathcal{S}, h^\mathcal{T})| \leq d_{\mathcal{H}}(\tilde{\mathbb{P}}_\mathcal{S}, \tilde{\mathbb{P}}_\mathcal{T}).
\end{equation}
\end{lemma}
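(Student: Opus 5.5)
The plan is to write the two disagreement errors as probabilities of one and the same event under the two feature distributions, and then bound their difference by the supremum defining $d_{\mathcal{H}}$. This is the standard argument of Ben-David et al., adapted to the induced distributions $\tilde{\mathbb{P}}_\mathcal{S}$ and $\tilde{\mathbb{P}}_\mathcal{T}$.

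\textbf{Step 1: rewrite both errors as probabilities of one set.} Fix $h^\mathcal{S},h^\mathcal{T}\in\mathcal{H}$. These are binary classifiers, so by the definition of classification error,
\begin{equation*}
\epsilon^\mathcal{S}(h^\mathcal{S},h^\mathcal{T})=\mathbb{P}_{X\sim\tilde{\mathbb{P}}_\mathcal{S}}\big(h^\mathcal{S}(X)\neq h^\mathcal{T}(X)\big),
\end{equation*}
and likewise for $\mathcal{T}$. Define the disagreement indicator $g(X)=\mathbb{1}[h^\mathcal{S}(X)\neq h^\mathcal{T}(X)]=|h^\mathcal{S}(X)-h^\mathcal{T}(X)|$. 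Then
\begin{equation*}
|\epsilon^\mathcal{S}-\epsilon^\mathcal{T}|=\big|\mathbb{E}_{\tilde{\mathbb{P}}_\mathcal{S}}[g(X)=1]-\mathbb{E}_{\tilde{\mathbb{P}}_\mathcal{T}}[g(X)=1]\big|.
\end{equation*}

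\textbf{Step 2: bound the difference by the divergence.} If $g$ belongs to the class over which the supremum in $d_{\mathcal{H}}$ is taken, the difference above is at most $\tfrac12 d_{\mathcal{H}}(\tilde{\mathbb{P}}_\mathcal{S},\tilde{\mathbb{P}}_\mathcal{T})$, which is in turn at most $d_{\mathcal{H}}(\tilde{\mathbb{P}}_\mathcal{S},\tilde{\mathbb{P}}_\mathcal{T})$. This already gives the claimed inequality, with a factor of two to spare.

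\textbf{The main obstacle: the class membership of $g$.} Strictly, $g=h^\mathcal{S}\oplus h^\mathcal{T}$ lies in the symmetric-difference class $\mathcal{H}\Delta\mathcal{H}$, not necessarily in $\mathcal{H}$ itself. I would handle this in one of two ways.
\begin{enumerate}
\item Interpret $d_{\mathcal{H}}$ in the statement as being taken over $\mathcal{H}\Delta\mathcal{H}$. This is the convention of Ben-David et al.\ and of \cite{zhao2019learning}, from which Lemma~\ref{ap-bound1} is imported, so the lemma would be stated consistently with that convention.
\item Alternatively, assume $\mathcal{H}$ is closed under the XOR operation, as happens for instance when it is rich enough, so that $\mathcal{H}\Delta\mathcal{H}\subseteq\mathcal{H}$.
\end{enumerate}
Either way, the step reduces to checking membership, and the spare factor of $\tfrac12$ absorbs the normalizing constant $2$ in the definition. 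I would state this convention explicitly, since it is the only non-mechanical part of the argument.
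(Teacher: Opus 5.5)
Your argument is correct under the convention you state explicitly. It is a more elementary and less general route than the paper's. You use the binary form of the error to write both disagreement errors as masses of the single set $\{h^\mathcal{S}\neq h^\mathcal{T}\}$. The bound is then one application of the supremum, and it comes with the sharp constant $\tfrac12$.

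The paper instead follows \cite{zhao2019learning}. It allows $[0,1]$-valued hypotheses and writes $\mathbb{E}|h^\mathcal{S}-h^\mathcal{T}|=\int_0^1\Pr(|h^\mathcal{S}-h^\mathcal{T}|>z)\,dz$ via Fubini. It bounds the integral by the supremum over $z$, and then takes the supremum over the thresholded class $\hat{\mathcal{H}}=\{\mathrm{sgn}(|h^\mathcal{S}-h^\mathcal{T}|-z)\}$. That route covers real-valued hypotheses, which the paper actually needs. In the comparison step of the proof of Theorem~\ref{ap-bound2}, the inequality is applied to the pair $(\hat h,h^\mathcal{S})$ with $\hat h=w_0\hat h^\mathcal{S}+w_1\hat h^\mathcal{T}$. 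That $\hat h$ is not binary in general, so your Step~1 would not cover that use.

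The paper's route does not, however, avoid the obstacle you identified. It closes with $d_{\hat{\mathcal{H}}}\le d_{\mathcal{H}}$ without justification. For binary hypotheses, the events $\{\hat h=1\}$ with $\hat h\in\hat{\mathcal{H}}$ are exactly the disagreement sets (plus $\emptyset$). Hence $d_{\hat{\mathcal{H}}}=d_{\mathcal{H}\Delta\mathcal{H}}$, and the paper is silently assuming the same membership condition you make explicit. Separately, its identification of $\sup_z|\cdot|$ with $2\sup_{\hat h}|\cdot|$ is off by a factor of $2$. This is harmless, since it errs in the safe direction, and your $\tfrac12$ is the right constant.

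Your caution is necessary, not pedantic. Take $\mathcal{H}=\{\mathbf{1}_{\{a,b\}},\mathbf{1}_{\{a,c\}}\}$ on $\{a,b,c,d\}$, with $\tilde{\mathbb{P}}_\mathcal{S}$ uniform on $\{a,d\}$ and $\tilde{\mathbb{P}}_\mathcal{T}$ uniform on $\{b,c\}$. Both hypotheses have mass $\tfrac12$ under both distributions, so $d_{\mathcal{H}}=0$. Yet the disagreement errors are $0$ and $1$. So the lemma is false with the supremum over $\mathcal{H}$ alone, and a convention like your (1) or (2) is unavoidable.
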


\begin{proof}
By definition, for any $h^\mathcal{S} \in \mathcal{H}, h^\mathcal{T} \in \mathcal{H}$, we have:
\begin{equation}\label{eq:4}
\begin{aligned}
&\left|\epsilon^\mathcal{S}(h^\mathcal{S}, h^\mathcal{T}) - \epsilon^\mathcal{T}(h^\mathcal{S}, h^\mathcal{T}) \right| \\
= \sup_{h \in \mathcal{H}} &\left| \mathbb{E}_{X \sim \mathbb{P}_\mathcal{S}} |h^\mathcal{S}(X) - h^\mathcal{T}(X)|  - \mathbb{E}_{X \sim \mathbb{P}_\mathcal{T}} |h^\mathcal{S}(X) - h^\mathcal{T}(X)|  \right| \\
= \sup_{h \in \mathcal{H}} &\left| \mathbb{E}_{X \sim \mathbb{P}_\mathcal{S}} [h^\mathcal{S}(X) - h^\mathcal{T}(X)] - \mathbb{E}_{X \sim \mathbb{P}_\mathcal{T}} [h^\mathcal{S}(X) - h^\mathcal{T}(X)] \right|.
\end{aligned}
\end{equation}

Since $\| \cdot \|_{\infty} \leq 1$ for all $h \in \mathcal{H}$, we have $0 \leq |h^\mathcal{S}(X) - h^\mathcal{T}(X)| \leq 1$ for all $X \in \mathbb{P}_\mathcal{S}/\mathbb{P}_\mathcal{T}$.
Define a hypothesis space $\hat{\mathcal{H}} := \left\{ \mathrm{sgn} ( |h^\mathcal{S}(X) - h^\mathcal{T}(X)| - z ) \mid h^\mathcal{S}, h^\mathcal{T} \in \mathcal{H}, 0 \leq z \leq 1 \right\}$.
Then we use Fubini's theorem to bound:
\begin{equation} \label{eq:5}
\begin{aligned}
&\left| \mathbb{E}_{X \sim \mathbb{P}_\mathcal{S}} [|h^\mathcal{S}(X) - h^\mathcal{T}(X)|] - \mathbb{E}_{X \sim \mathbb{P}_\mathcal{T}} [|h^\mathcal{S}(X) - h^\mathcal{T}(X)|] \right| \\
= &\left| \int_{0}^{1} ( \text{Pr}^\mathcal{S} ( |h^\mathcal{S} - h^\mathcal{T}|>z ) - \text{Pr}^\mathcal{T} ( |h^\mathcal{S} - h^\mathcal{T}|>z ) ) dz \right| \\
\leq &\int_{0}^{1} \left| \text{Pr}^\mathcal{S} ( |h^\mathcal{S} - h^\mathcal{T}|>z ) - \text{Pr}^\mathcal{T} ( |h^\mathcal{S} - h^\mathcal{T}|>z ) \right| dz \\
= &\sup_{z \in [0, 1]} \left| \text{Pr}^\mathcal{S} ( |h^\mathcal{S} - h^\mathcal{T}|>z ) - \text{Pr}^\mathcal{T} ( |h^\mathcal{S} - h^\mathcal{T}|>z ) \right|.
\end{aligned}
\end{equation}

Combining Eq.~\eqref{eq:4} and~\eqref{eq:5}, and in view of the definition of $\mathcal{H}$-divergence:
\begin{equation}
\begin{aligned}
    &\sup_{h^\mathcal{S}, h^\mathcal{T} \in \mathcal{H}}\sup_{z \in [0, 1]} \left| \text{Pr}^\mathcal{S} ( |h^\mathcal{S} - h^\mathcal{T}|>z ) - \text{Pr}^\mathcal{T} ( |h^\mathcal{S} - h^\mathcal{T}|>z ) \right| \\
    &= 2 \sup_{\hat{h} \in \hat{\mathcal{H}}} \left| \text{Pr}^\mathcal{S}(\hat{h}(X) = 1) - \text{Pr}^\mathcal{T}(\hat{h}(X) = 1) \right| \\
    &= 2 \sup_{\hat{h} \in \hat{\mathcal{H}}} \left| \text{Pr}^\mathcal{S}(\hat{h}) - \text{Pr}^\mathcal{T}(\hat{h}) \right| \\
    &= d_{\hat{\mathcal{H}}}(\tilde{\mathbb{P}}_\mathcal{S}, \tilde{\mathbb{P}}_\mathcal{T})
    \;\leq\; d_{\mathcal{H}}(\tilde{\mathbb{P}}_\mathcal{S}, \tilde{\mathbb{P}}_\mathcal{T}).
\end{aligned}
\end{equation}
\end{proof}

\begin{theorem} [Error Upper Bound with Crystallized Memories]
\label{ap-bound2}
Let $\tilde{\mathbb{P}}_\mathcal{S}$ and $\tilde{\mathbb{P}}_\mathcal{T}$ denote the induced distributions over the feature space for $\mathbb{P}_\mathcal{S}$ and $\mathbb{P}_\mathcal{T}$. The risk $\epsilon^\mathcal{T}(\hat{h})$ with crystallized memories (with mixed weight $w_0 + w_1 = 1$, $w_0 \geq 0$, $w_1 \geq 0$, and $\hat{h} = w_0 \hat{h}^\mathcal{S} + w_1 \hat{h}^\mathcal{T}$) is upper bounded by:
\begin{equation}
\epsilon^\mathcal{T}(\hat{h}) \leq \epsilon^\mathcal{T}(\hat{h}, h^\mathcal{S}) + \epsilon^\mathcal{T}(h^\mathcal{S}, h^\mathcal{T}).
\end{equation}
\end{theorem}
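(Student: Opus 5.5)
The bound follows from the triangle inequality for the disagreement pseudo-metric $\epsilon^\mathcal{T}(\cdot,\cdot)$ on the target domain. I would first recall from the definition of classification error that $\epsilon^\mathcal{T}(\hat{h}) = \epsilon^\mathcal{T}(\hat{h}, h^\mathcal{T})$. Here $h^\mathcal{T}$ plays the role of the target labeling function, as in Lemma~\ref{ap-bound1}, and
\[
\epsilon^\mathcal{T}(f_1,f_2)=\mathbb{E}_{X\sim\tilde{\mathbb{P}}_\mathcal{T}}|f_1(X)-f_2(X)| .
\]
The whole argument then reduces to inserting the source hypothesis $h^\mathcal{S}$ as an intermediate point.

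The key step is the pointwise inequality
\[
|\hat{h}(X)-h^\mathcal{T}(X)| \le |\hat{h}(X)-h^\mathcal{S}(X)| + |h^\mathcal{S}(X)-h^\mathcal{T}(X)| ,
\]
valid for every $X$. I would integrate it against $\tilde{\mathbb{P}}_\mathcal{T}$ and use linearity of expectation to obtain $\epsilon^\mathcal{T}(\hat{h},h^\mathcal{T}) \le \epsilon^\mathcal{T}(\hat{h},h^\mathcal{S}) + \epsilon^\mathcal{T}(h^\mathcal{S},h^\mathcal{T})$, which is exactly the claimed inequality. Since the $\mathcal{H}$-divergence never enters, no change of measure between $\tilde{\mathbb{P}}_\mathcal{S}$ and $\tilde{\mathbb{P}}_\mathcal{T}$ is needed.

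The only point needing care is that $\hat{h} = w_0\hat{h}^\mathcal{S} + w_1\hat{h}^\mathcal{T}$ is a convex combination. It is therefore real-valued with range in $[0,1]$ rather than binary, so $\epsilon^\mathcal{T}(\hat{h},\cdot)$ must be read as the expected absolute difference, not a disagreement probability. I would note that, because $w_0,w_1\ge 0$ and $w_0+w_1=1$, $\hat{h}$ takes values in $[0,1]$ whenever $\hat{h}^\mathcal{S}$ and $\hat{h}^\mathcal{T}$ do. All expectations are then finite, and the triangle inequality for $|\cdot|$ on $\mathbb{R}$ applies without modification.

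For the comparison stated in the Remark, one could further bound $\epsilon^\mathcal{T}(\hat{h},h^\mathcal{S}) \le w_0\,\epsilon^\mathcal{T}(\hat{h}^\mathcal{S},h^\mathcal{S}) + w_1\,\epsilon^\mathcal{T}(\hat{h}^\mathcal{T},h^\mathcal{S})$ by convexity of $|\cdot|$, but this is not needed for the theorem itself. I expect the main obstacle to be bookkeeping rather than mathematics: keeping the roles of $h^\mathcal{S}$ and $h^\mathcal{T}$ as labeling functions consistent with Lemma~\ref{ap-bound1}, so that the two bounds can legitimately be compared.
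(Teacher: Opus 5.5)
Your proposal is correct and matches the paper's proof: identify $\epsilon^\mathcal{T}(\hat{h})$ with $\epsilon^\mathcal{T}(\hat{h}, h^\mathcal{T})$, insert $h^\mathcal{S}$ inside the absolute value, then apply the pointwise triangle inequality and linearity of expectation under the target distribution. You also note that the convex combination $\hat{h}$ is $[0,1]$-valued, so $\epsilon^\mathcal{T}(\hat{h},\cdot)$ must be read as an expected absolute difference rather than a disagreement probability; this is a useful clarification that the paper leaves implicit.
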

\begin{proof}
The proof of the above theorem is as follows:
\begin{equation}
\begin{aligned}
\epsilon^\mathcal{T}(\hat{h}) &= \epsilon^\mathcal{T}(\hat{h}, h^\mathcal{T}) \\
&= \mathbb{E}_{x \sim \mathbb{P}_\mathcal{T}}[|\hat{h}(x) - h^\mathcal{T}(x)|] \\
&= \mathbb{E}_{x \sim \mathbb{P}_\mathcal{T}}[|\hat{h}(x) - h^\mathcal{S}(x) + h^\mathcal{S}(x) - h^\mathcal{T}(x)|] \\
&\leq \mathbb{E}_{x \sim \mathbb{P}_\mathcal{T}}[|\hat{h}(x) - h^\mathcal{S}(x)|] + \mathbb{E}_{x \sim \mathbb{P}_\mathcal{T}}[|h^\mathcal{S}(x) - h^\mathcal{T}(x)|] \\
&= \epsilon^\mathcal{T}(\hat{h}, h^\mathcal{S}) + \epsilon^\mathcal{T}(h^\mathcal{S}, h^\mathcal{T}).
\end{aligned}
\end{equation}

\noindent \textbf{Comparison of Error Upper Bounds.}
From Lemma~\ref{ap-bound1}, our bound is further bounded by:

\begin{equation}
\begin{aligned}
\epsilon^\mathcal{T}(\hat{h}) &\leq \epsilon^\mathcal{T}(\hat{h}, h^\mathcal{S}) + \epsilon^\mathcal{T}(h^\mathcal{S}, h^\mathcal{T}) \\
&\leq \epsilon^\mathcal{S}(\hat{h}, h^\mathcal{S}) + d_{\mathcal{H}}(\tilde{\mathbb{P}}_\mathcal{S}, \tilde{\mathbb{P}}_\mathcal{T}) + \epsilon^\mathcal{T}(h^\mathcal{S}, h^\mathcal{T}) \\
&= \epsilon^\mathcal{S}(\hat{h}) + \epsilon^\mathcal{T}(h^\mathcal{S}, h^\mathcal{T}) + d_{\mathcal{H}}(\tilde{\mathbb{P}}_\mathcal{S}, \tilde{\mathbb{P}}_\mathcal{T}).
\end{aligned}
\end{equation}

We further introduce a density ratio $\frac{\text{Pr}^\mathcal{T}(x)}{\text{Pr}^\mathcal{S}(x)}$ to represent the divergence between the source and target domains.
Concretely, assume that $\Pr^\mathcal{T}(x)/\Pr^\mathcal{S}(x) \le M$ for some finite constant $M$ on the support of $\mathbb{P}_\mathcal{S}$. This is a standard regularity condition in the domain-adaptation literature~\cite{zhao2019learning} and is automatically satisfied in our setting because $\mathbb{P}_\mathcal{T}$ is the empirical distribution over a finite target graph $G^t$ with $N_t < \infty$ nodes, so the ratio is bounded by $N_t \cdot \max_x \Pr^\mathcal{S}(x)^{-1}$ on the support.

\begin{equation}
\begin{aligned}
\epsilon^\mathcal{T}(\hat{h}) &= \int |\hat{h}(x) - h^\mathcal{S}(x)| \cdot \text{Pr}^\mathcal{T}(x) dx + \epsilon^\mathcal{T}(h^\mathcal{S}, h^\mathcal{T}) \\
&= \int |\hat{h}(x) - h^\mathcal{S}(x)| \cdot \frac{\text{Pr}^\mathcal{T}(x)}{\text{Pr}^\mathcal{S}(x)} \cdot \text{Pr}^\mathcal{S}(x) dx + \epsilon^\mathcal{T}(h^\mathcal{S}, h^\mathcal{T}) \\
&= \mathbb{E}_{x \sim \mathbb{P}_\mathcal{S}} \frac{\text{Pr}^\mathcal{T}(x)}{\text{Pr}^\mathcal{S}(x)} \cdot |\hat{h}(x) - h^\mathcal{S}(x)|  + \epsilon^\mathcal{T}(h^\mathcal{S}, h^\mathcal{T}) \\
&=\frac{\text{Pr}^\mathcal{T}(x)}{\text{Pr}^\mathcal{S}(x)}\epsilon^\mathcal{S}(\hat{h}) + \epsilon^\mathcal{T}(h^\mathcal{S}, h^\mathcal{T})\\
&=\epsilon^\mathcal{S}(\hat{h}) + \epsilon^\mathcal{T}(h^\mathcal{S}, h^\mathcal{T}),
\end{aligned}
\end{equation}

where the density ratio $\frac{\text{Pr}^\mathcal{T}(x)}{\text{Pr}^\mathcal{S}(x)}$ is generally unobservable and is commonly treated as a constant.
The structural difference between Theorem~\ref{ap-bound2} and Lemma~\ref{ap-bound1} is the absence of the explicit $\mathcal{H}$-divergence penalty $d_{\mathcal{H}}(\tilde{\mathbb{P}}_\mathcal{S}, \tilde{\mathbb{P}}_\mathcal{T})$ in the prototype-based bound. To verify that this structural difference indeed yields a tighter bound, we examine both possible orderings of $\epsilon^\mathcal{S}(h^\mathcal{S}, h^\mathcal{T})$ and $\epsilon^\mathcal{T}(h^\mathcal{S}, h^\mathcal{T})$, since Lemma~\ref{ap-bound1} takes the minimum of these two terms.

\textbf{Case 1:} If $\epsilon^\mathcal{S}(h^\mathcal{S}, h^\mathcal{T}) \leq \epsilon^\mathcal{T}(h^\mathcal{S}, h^\mathcal{T})$, then the direct adaptation bound from Lemma~\ref{ap-bound1} becomes:
\begin{equation}
\epsilon_{0}^\mathcal{T}(\hat{h}) \leq \epsilon^\mathcal{S}(h^\mathcal{S}, h^\mathcal{T}) + \epsilon^\mathcal{S}(\hat{h}) + d_{\mathcal{H}}(\tilde{\mathbb{P}}_\mathcal{S}, \tilde{\mathbb{P}}_\mathcal{T}).
\end{equation}
Since $\epsilon^\mathcal{T}(h^\mathcal{S}, h^\mathcal{T}) \le \epsilon^\mathcal{S}(h^\mathcal{S}, h^\mathcal{T}) + d_{\mathcal{H}}(\tilde{\mathbb{P}}_\mathcal{S}, \tilde{\mathbb{P}}_\mathcal{T})$, we obtain:
\begin{equation}
\begin{aligned}
\epsilon^\mathcal{T}(\hat{h}) &\leq \epsilon^\mathcal{S}(\hat{h}) + \epsilon^\mathcal{T}(h^\mathcal{S}, h^\mathcal{T}) \\
&\leq \epsilon^\mathcal{S}(h^\mathcal{S}, h^\mathcal{T}) + \epsilon^\mathcal{S}(\hat{h}) + d_{\mathcal{H}}(\tilde{\mathbb{P}}_\mathcal{S}, \tilde{\mathbb{P}}_\mathcal{T})
= \epsilon_0^\mathcal{T}(\hat{h}).
\end{aligned}
\end{equation}

\textbf{Case 2:} If $\epsilon^\mathcal{S}(h^\mathcal{S}, h^\mathcal{T}) \geq \epsilon^\mathcal{T}(h^\mathcal{S}, h^\mathcal{T})$, then $\min\{\cdot\} = \epsilon^\mathcal{T}(h^\mathcal{S}, h^\mathcal{T})$, and:
\begin{equation}
\epsilon^\mathcal{T}(\hat{h}) \leq \epsilon^\mathcal{S}(\hat{h}) + \epsilon^\mathcal{T}(h^\mathcal{S}, h^\mathcal{T}) = \epsilon_0^\mathcal{T}(\hat{h}) - d_{\mathcal{H}}(\tilde{\mathbb{P}}_\mathcal{S}, \tilde{\mathbb{P}}_\mathcal{T}) \leq \epsilon_0^\mathcal{T}(\hat{h}).
\end{equation}

In both cases, the memory-based bound dominates $\epsilon_0^\mathcal{T}(\hat{h})$ by exactly $d_{\mathcal{H}}(\tilde{\mathbb{P}}_\mathcal{S}, \tilde{\mathbb{P}}_\mathcal{T}) \geq 0$. The fact that the same dominance margin holds under both orderings confirms that the improvement stems from the structural removal of $d_\mathcal{H}$ in Theorem~\ref{ap-bound2}, rather than from the choice of which term Lemma~\ref{ap-bound1} happens to minimize over.
\end{proof}

\begin{theorem}[Continual Error Upper Bound with State-Evolving Memories]
\label{ap-bound3}
Let $\tilde{\mathbb{P}}_{G^{t-1}}$ and $\tilde{\mathbb{P}}_{G^t}$ denote the induced feature distributions of two consecutive target domains at steps $t{-}1$ and $t$. The risk $\epsilon^{t}(\hat{h})$ with state-evolving memories (with mixed weight $w_{t-1} + w_t = 1$, $w_{t-1}, w_t \geq 0$, and $\hat{h} = w_{t-1} \hat{h}^{t-1} + w_t \hat{h}^t$) is upper bounded by:
\begin{equation}
\epsilon^t(\hat{h}) \leq \epsilon^t(\hat{h}, h^{t-1}) + \epsilon^t(h^{t-1}, h^t).
\end{equation}
\end{theorem}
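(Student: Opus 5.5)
The plan is to copy the argument of Theorem~\ref{ap-bound2}, with the source/target pair $(\mathcal{S},\mathcal{T})$ replaced by the consecutive target domains $(t-1,t)$. The roles change as follows: $h^{t-1}$ is the labeling function induced at step $t-1$, $h^t$ is the true labeling function on $\tilde{\mathbb{P}}_{G^t}$, and the risk $\epsilon^t(\cdot,\cdot)$ is the expected disagreement under $X\sim\tilde{\mathbb{P}}_{G^t}$.

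First, by the definition of classification error, $\epsilon^t(\hat h)=\epsilon^t(\hat h,h^t)=\mathbb{E}_{X\sim\tilde{\mathbb{P}}_{G^t}}|\hat h(X)-h^t(X)|$. Inside the absolute value I add and subtract $h^{t-1}(X)$, writing
\begin{equation*}
|\hat h(X)-h^t(X)|=|(\hat h(X)-h^{t-1}(X))+(h^{t-1}(X)-h^t(X))|.
\end{equation*}
The pointwise triangle inequality then gives
\begin{equation*}
|\hat h(X)-h^t(X)|\le|\hat h(X)-h^{t-1}(X)|+|h^{t-1}(X)-h^t(X)|.
\end{equation*}

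Next I take expectations under $\tilde{\mathbb{P}}_{G^t}$ and use linearity. This yields $\epsilon^t(\hat h)\le\epsilon^t(\hat h,h^{t-1})+\epsilon^t(h^{t-1},h^t)$, which is Eq.~\eqref{eq:thm-bound3}.

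The convex-combination structure $\hat h=w_{t-1}\hat h^{t-1}+w_t\hat h^t$ with $w_{t-1},w_t\ge0$ summing to $1$ matters only to make $\hat h$ well-defined with values in $[0,1]$. This keeps every disagreement term in $[0,1]$, so all expectations are finite. The inequality itself holds for any valid weights, in line with the remark after Eq.~\eqref{eq-8}.

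The one point that needs care is interpretive rather than technical. The risk $\epsilon^t(\hat h)$ must be read as the disagreement with the current labeling function $h^t$ under the current domain's induced distribution. Under the covariate-shift assumption $\mathbb{P}(Y|G)$ is shared across domains, so $h^{t-1}$ and $h^t$ live in the same hypothesis space $\mathcal{H}$ with $\|\cdot\|_\infty\le1$, and the triangle-inequality step is legitimate.

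After the main inequality, I would mirror the comparison in the proof of Theorem~\ref{ap-bound2}. Applying the lemma $|\epsilon^{t-1}(h,h')-\epsilon^t(h,h')|\le d_{\mathcal H}(\tilde{\mathbb{P}}_{G^{t-1}},\tilde{\mathbb{P}}_{G^t})$ to the first term gives $\epsilon^t(\hat h,h^{t-1})\le\epsilon^{t-1}(\hat h)+d_{\mathcal H}(\tilde{\mathbb{P}}_{G^{t-1}},\tilde{\mathbb{P}}_{G^t})$. A two-case split on the ordering of $\epsilon^{t-1}(h^{t-1},h^t)$ and $\epsilon^t(h^{t-1},h^t)$ then shows the new bound is dominated by the per-step analogue of Lemma~\ref{ap-bound1}. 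That comparison is supplementary, though; the theorem as stated follows from the single triangle-inequality step.
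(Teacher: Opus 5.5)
Your proof is correct and is essentially the paper's argument. The paper obtains this bound by substituting $\mathcal{S}\to t-1$ and $\mathcal{T}\to t$ in Theorem~\ref{ap-bound2}, whose proof is exactly the add-and-subtract pointwise triangle inequality followed by an expectation under the target distribution; you simply write that step out directly for the pair $(t-1,t)$, noting in passing that the step needs no covariate-shift justification because $|a-c|\le|a-b|+|b-c|$ holds pointwise for any bounded real-valued functions.
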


\begin{proof}
By substituting $\mathcal{T}$ with $t$ and $\mathcal{S}$ with $t-1$ in Theorem~\ref{ap-bound2}, we obtain the following inequality, which holds throughout the continual adaptation process:
\begin{equation}
\epsilon^t(\hat{h}) \leq \epsilon^t(\hat{h}, h^{t-1}) + \epsilon^t(h^{t-1}, h^t).
\end{equation}

\noindent \textbf{Comparison of Error Upper Bounds.}
Theorem~\ref{ap-bound3} can be regarded as a special case of Theorem~\ref{ap-bound2}, where the $t$-th domain serves as the initial target domain for adaptation.
Therefore, the same ordering of error upper bounds holds, and the bound remains tighter than that of $\epsilon_{0}^{\mathcal{T}}(\hat{h})$.
\end{proof}

\noindent \textbf{Remark.}
Lemma~\ref{ap-bound1} provides a generalization error bound for adaptation using samples from the target domain.
Theorem~\ref{ap-bound2} gives the error bound when adapting a model from the source domain to the target domain with the help of crystallized memories.
Theorem~\ref{ap-bound3} further refines this by providing the error bound when modeling inter-domain relations through state-evolving memories during continual adaptation.
We arrive at the following conclusions:

\begin{itemize}[leftmargin=*,noitemsep,topsep=1.5pt]
    \item From Theorem~\ref{ap-bound2}, the error bound $\epsilon^{\mathcal{T}}(\hat{h})$ using crystallized memories is \textit{\textbf{tighter}} than the bound $\epsilon_{0}^{\mathcal{T}}(\hat{h})$ for direct source-to-target adaptation, and the gap is at least $d_{\mathcal{H}}(\tilde{\mathbb{P}}_\mathcal{S}, \tilde{\mathbb{P}}_\mathcal{T}) \ge 0$.
    \item According to Theorem~\ref{ap-bound3}, the error bound $\epsilon^{t}(\hat{h})$ using state-evolving memories is also \textit{\textbf{tighter}} than $\epsilon_{0}^{\mathcal{T}}(\hat{h})$ throughout continual adaptation.
\end{itemize}

\subsection{Proof of Proposition~\ref{prop:kl_stability}} \label{apsec-kl_stability}
\begin{proposition}[Stability Bound of the Adaptation Loss]
\label{ap-prop:kl_stability}
Consider the adaptation step from $\mathbf{\Theta}_{t-1}$ to $\mathbf{\Theta}_{t}$.
For each previously encountered domain indexed by $(t)\in\{1,\ldots,t-1\}$,
$\mathcal{D}_{\mathrm{KL}}(\hat{\tilde{G}}^{(t)}\|\tilde{G}^{(t)})$ can be interpreted as the divergence between the
predictive distributions induced by the model before and after adaptation
on the same crystallized memory $\tilde{G}^{(t)}$.
Moreover, assume that:
(i) the predictive distribution is given by a softmax with temperature $\tau>0$;
(ii) all predicted class probabilities are lower-bounded by $\gamma>0$; and
(iii) the encoder $h(\cdot)$ and the classifier $g(\cdot)$ are Lipschitz continuous \emph{with respect to the model parameters}.
Then the following upper bound holds:
\begin{equation}
\mathcal{D}_{\mathrm{KL}}(\hat{\tilde{G}}^{(t)}\|\tilde{G}^{(t)})
\;\le\;
\frac{L_g^2 L_h^2}{4\,\gamma\,\tau^2}\,
\big\|\mathbf{\Theta}_{t}-\mathbf{\Theta}_{t-1}\big\|_2^2,
\end{equation}
where $L_h$ and $L_g$ are the parameter-space Lipschitz constants of $h(\cdot)$ and $g(\cdot)$.
\end{proposition}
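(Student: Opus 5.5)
The bound follows from a chain of three inequalities. First, KL is controlled by the logit perturbation through a reverse-Pinsker / $\chi^2$ argument that uses the lower bound $\gamma$. Second, the softmax Jacobian supplies the factor $1/(4\tau^2)$. Third, the logit perturbation is controlled by the parameter change via the Lipschitz constants $L_g L_h$.

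Concretely, fix a memory node (or the pooled memory) of $\tilde{G}^{(t)}$. Let $p=\mathrm{softmax}(\mathbf{u}/\tau)$ and $q=\mathrm{softmax}(\mathbf{v}/\tau)$ be the predictive distributions under $\mathbf{\Theta}_t$ and $\mathbf{\Theta}_{t-1}$. Here $\mathbf{u}=g(h(\tilde{G}^{(t)};\mathbf{\Theta}_t))$ and $\mathbf{v}$ is the analogous logit vector under $\mathbf{\Theta}_{t-1}$. The stored anchor $\tilde{G}^{(t)}$ is identified with the frozen prediction $q$, which gives the stated interpretation of the KL term.

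\textbf{Step 1 (KL to $\chi^2$).} Using $\log x\le x-1$,
\begin{equation*}
\mathcal{D}_{\mathrm{KL}}(p\|q)\le\chi^2(p\|q)=\sum_c \frac{(p_c-q_c)^2}{q_c}\le \frac{1}{\gamma}\|p-q\|_2^2,
\end{equation*}
where the last inequality uses assumption (ii), $q_c\ge\gamma$.

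\textbf{Step 2 (softmax Lipschitzness).} The Jacobian of $\mathbf{z}\mapsto\mathrm{softmax}(\mathbf{z}/\tau)$ is $\tau^{-1}(\mathrm{diag}(p)-pp^\top)$. Its spectral norm is at most $\max_c p_c(1-p_c)$ up to a factor, which I will bound by $1/4$ in the binary case and in general via the Gershgorin estimate $\le\tfrac12$. By the mean value theorem along the segment between $\mathbf{u}$ and $\mathbf{v}$,
\begin{equation*}
\|p-q\|_2\le \frac{1}{2\tau}\|\mathbf{u}-\mathbf{v}\|_2.
\end{equation*}
Squaring this gives the $\frac{1}{4\tau^2}$ factor.

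\textbf{Step 3 (parameter Lipschitzness).} Under assumption (iii), compose the two Lipschitz maps. Write $\mathbf{\Theta}$ for the joint encoder/classifier parameters; the classifier is fixed by the design in Section~\ref{sec:adapt}, so only the encoder moves. Then
\begin{equation*}
\|\mathbf{u}-\mathbf{v}\|_2\le L_g\|h(\tilde{G}^{(t)};\mathbf{\Theta}_t)-h(\tilde{G}^{(t)};\mathbf{\Theta}_{t-1})\|\le L_gL_h\|\mathbf{\Theta}_t-\mathbf{\Theta}_{t-1}\|_2 .
\end{equation*}
Chaining Steps 1–3 yields the claimed constant $\frac{L_g^2L_h^2}{4\gamma\tau^2}$. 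If the KL is summed or averaged over memory nodes, averaging preserves the bound; summing would introduce a factor $N_t'$, so I will take the averaged (normalized) KL.

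\textbf{Main obstacle.} The main obstacle is Step 2, getting exactly the constant $1/4$ rather than $1/2$ in the multiclass case. The spectral norm of $\mathrm{diag}(p)-pp^\top$ is bounded by $\max_c p_c\le 1$ trivially, and the sharp multiclass bound is $\tfrac12$. My first attempt will be the direct estimate $\|\mathrm{diag}(p)-pp^\top\|_2\le\max_c p_c(1-p_c)+\ldots\le\tfrac14$ on the direction orthogonal to $\mathbf{1}$. If that fails, I will either state the result for the binary setting used in Twitch and Elliptic, or absorb the discrepancy into the Lipschitz constant $L_g$ so that the displayed form is retained.
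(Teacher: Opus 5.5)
Your Steps 1--3 follow the paper's proof. The inequality $\log x\le x-1$ reduces the KL to $\chi^2$, the floor $\gamma$ gives $\frac{1}{\gamma}\|p-q\|_2^2$, the tempered softmax is $\frac{1}{2\tau}$-Lipschitz, and the Lipschitz chain supplies $L_gL_h$. Your Gershgorin estimate $\|\operatorname{diag}(p)-pp^\top\|_2\le\max_c 2p_c(1-p_c)\le\tfrac12$ actually justifies the $\frac{1}{2\tau}$ constant, which the paper only asserts. Orienting the KL opposite to the paper's $\mathcal{D}_{\mathrm{KL}}(p_{t-1}\|p_t)$ is harmless, because (ii) bounds both distributions below.

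Your ``main obstacle'' is a misreading. The $\tfrac14$ in $\frac{1}{4\gamma\tau^2}$ is simply $(\tfrac12)^2$ from squaring the Lipschitz constant, so Step 2 as written already suffices. The sharper spectral bound $\tfrac14$ you plan to try is false even with two classes: at $p=(\tfrac12,\tfrac12)$ the matrix $\operatorname{diag}(p)-pp^\top$ has eigenvalue $\tfrac12$ on $(1,-1)^\top\perp\mathbf{1}$. Drop that paragraph, the ``$1/4$ in the binary case'' remark, and both fallbacks.

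Finally, state explicitly that in Step 3 your $L_g$ is the input-space Lipschitz constant of the fixed classifier. That is what the composition needs. Assumption (iii) as literally stated, and the paper's Step 4, use a parameter-space constant for $g$, which alone does not yield the product $L_gL_h$.
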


\begin{proof}
We derive the stability bound step by step.

\textbf{Step 1: Rewriting the KL divergence.}
For each $(t)\in\{1,\ldots,t-1\}$, define the predictive distributions on the
crystallized memory $\tilde{G}^{(t)}$ as
\begin{equation}
p_{t-1}^{(t)}(\cdot)
=
\operatorname{Softmax}\!\left(
g\!\left(h(\tilde{G}^{(t)};\mathbf{\Theta}_{t-1})\right)/\tau
\right),
\quad
p_{t}^{(t)}(\cdot)
=
\operatorname{Softmax}\!\left(
g\!\left(h(\tilde{G}^{(t)};\mathbf{\Theta}_{t})\right)/\tau
\right).
\end{equation}
By definition, the KL divergence in Eq.~\eqref{eq:loss-2} can be written as
\begin{equation}
\mathcal{D}_{\mathrm{KL}}(\hat{\tilde{G}}^{(t)}\|\tilde{G}^{(t)})
=
\mathcal{D}_{\mathrm{KL}}\!\left(
p_{t-1}^{(t)} \,\|\, p_{t}^{(t)}
\right).
\end{equation}

\textbf{Step 2: Upper bounding KL divergence by probability differences.}
Using the inequality $\log x \le x-1$ and the assumption $p_{t}^{(t)}(y)\ge \gamma$ for all classes $y$,
\begin{align}
\mathcal{D}_{\mathrm{KL}}\!\left(
p_{t-1}^{(t)} \,\|\, p_{t}^{(t)}
\right)
&= \sum_y p_{t-1}^{(t)}(y)\log\frac{p_{t-1}^{(t)}(y)}{p_{t}^{(t)}(y)} \\
&\le \sum_y p_{t-1}^{(t)}(y)
\left(
\frac{p_{t-1}^{(t)}(y)}{p_{t}^{(t)}(y)}-1
\right)
\le
\frac{1}{\gamma}
\big\|p_{t-1}^{(t)}-p_{t}^{(t)}\big\|_2^2.
\end{align}

\textbf{Step 3: Lipschitz continuity of the softmax operator.}
Let $\mathbf{s}_{t-1}^{(t)} = g(h(\tilde{G}^{(t)};\mathbf{\Theta}_{t-1}))$ and $\mathbf{s}_{t}^{(t)} = g(h(\tilde{G}^{(t)};\mathbf{\Theta}_{t}))$ denote the corresponding logits.
The softmax operator with temperature $\tau$ is $1/(2\tau)$-Lipschitz, yielding
\begin{equation}
\big\|p_{t-1}^{(t)}-p_{t}^{(t)}\big\|_2
\le
\frac{1}{2\tau}
\big\|\mathbf{s}_{t-1}^{(t)}-\mathbf{s}_{t}^{(t)}\big\|_2.
\end{equation}

\textbf{Step 4: Lipschitz chain with respect to model parameters.}
By the parameter-space Lipschitz continuity of $g(\cdot)$ and $h(\cdot)$,
\begin{equation}
\big\|\mathbf{s}_{t-1}^{(t)}-\mathbf{s}_{t}^{(t)}\big\|_2
\le
L_g L_h\,
\big\|\mathbf{\Theta}_{t-1}-\mathbf{\Theta}_{t}\big\|_2.
\end{equation}

\textbf{Step 5: Combining all bounds.}
Substituting the above inequalities yields
\begin{equation}
\mathcal{D}_{\mathrm{KL}}(\hat{\tilde{G}}^{(t)}\|\tilde{G}^{(t)})
\le
\frac{L_g^2 L_h^2}{4\,\gamma\,\tau^2}\,
\big\|\mathbf{\Theta}_{t}-\mathbf{\Theta}_{t-1}\big\|_2^2,
\end{equation}
which completes the proof.
\end{proof}

\noindent\textbf{Remark on the Lipschitz Assumption.}
Assumption~(iii) is parameter-space Lipschitz continuity, namely that the network output changes smoothly as the parameters $\mathbf{\Theta}$ vary for a fixed input. This is fundamentally different from input-space Lipschitz continuity (which would require explicit spectral normalization). For any bounded-weight network, the parameter-space sensitivity satisfies $\|\partial h/\partial\mathbf{\Theta}\| \le \prod_{\ell=1}^{L}\|\mathbf{W}^\ell\|_2 \cdot \|x\|$, which is finite on compact inputs and is directly bounded by weight decay through $\|\mathbf{W}^\ell\|_2 \le \|\mathbf{W}^\ell\|_F$~\cite{bartlett2017spectrally,neyshabur2018pac}. Therefore, standard backbones such as GCN and GraphSAGE satisfy Assumption~(iii) under the weight decay used in our experiments, without any additional spectral normalization. Importantly, the proposition does not require $L_h L_g$ to be tight: it only asserts that small parameter updates lead to small predictive shifts on stored memories, which is sufficient to justify the KL regularization in Eq.~\eqref{eq:loss-2} as an anti-forgetting mechanism.

\noindent\textbf{Interpretation of the Two Loss Terms.}
Proposition~\ref{ap-prop:kl_stability} provides an explicit interpretation of the adaptation loss in Eq.~\eqref{eq:loss-2} and clarifies how the two components in the overall objective serve complementary purposes during adaptation.

\begin{itemize}[leftmargin=*,noitemsep,topsep=1.5pt]
    \item \textbf{Adaptation to the current domain.}
    The entropy minimization term $H(G^{t})$ encourages confident predictions on the current test graph. By reducing predictive uncertainty on $G^{t}$, this term drives the model to better align with the data distribution of the newly arriving domain. This component focuses exclusively on the current domain.

    \item \textbf{Stability on previously encountered domains.}
    The KL divergence term $\mathcal{D}_{\mathrm{KL}}(\hat{\tilde{G}}^{(t)}\|\tilde{G}^{(t)})$ does not compare different domains. Instead, for each previously stored memory $\tilde{G}^{(t)}$, it constrains the change in predictive distributions induced by updating the model from $\mathbf{\Theta}_{t-1}$ to $\mathbf{\Theta}_{t}$. By Proposition~\ref{ap-prop:kl_stability}, this divergence admits an explicit quadratic upper bound with respect to the parameter update magnitude, implying that small updates lead to bounded changes in predictive behavior on past domains.

    \item \textbf{Mitigating catastrophic forgetting through controlled transfer.}
    Together, the two terms balance adaptation and stability: entropy minimization promotes rapid adaptation to the current domain, while the KL regularization prevents excessive drift on previously encountered domains. The adaptation loss can thus be viewed as a mechanism that regulates cross-domain knowledge transfer during continual adaptation.
\end{itemize}

Overall, Proposition~\ref{ap-prop:kl_stability} formalizes the intuition that test-time adaptation under non-stationary drift should not only fit the current domain but also preserve predictive consistency on past domains.

\section{Additional Experiments}

\input{table/ap-dataset-info-tab}
\subsection{Dataset Description} \label{app:datasets}

We evaluate \ourmethod on four benchmark node classification datasets, encompassing two distinct types of distribution shifts: \textbf{regional shifts} and \textbf{temporal shifts}. This setting follows the continual adaptation protocol in GCAL~\cite{qiao2025gcal}, where each graph is treated as an individual domain. The statistical summary of all datasets is provided in Table~\ref{table_dataset_intro}.

\textbf{Regional Shifts.}
\textit{Facebook-100}~\cite{fb100} contains 100 ego-centric Facebook friendship networks collected from U.S. universities in 2005. Each network represents a unique university and varies significantly in node count, density, and degree distribution. We select three university graphs (Amherst41, Caltech36, and Johns Hopkins55) for initial training, and perform continual adaptation on eleven other university graphs without labels, following the protocol in~\cite{EERM}.
\textit{Twitch-Explicit}~\cite{twitch} includes seven social graphs of Twitch users from different regions (DE, ENGB, ES, FR, PTBR, RU, TW). Each graph contains users as nodes and friendship relations as edges. The DE network is used for pretraining, and the remaining regional graphs are used for continual test-time adaptation, also without labels.

\textbf{Temporal Shifts.}
\textit{OGB-Arxiv}~\cite{ogbn-arxiv} is a citation network of Arxiv CS papers across 40 research areas. Nodes represent papers, and edges indicate citation links. We pretrain the model on papers published before 2011 and adapt it sequentially to papers published from 2011 onward, capturing the temporal evolution of scientific collaborations.
\textit{Elliptic}~\cite{elliptic} consists of 49 graph snapshots of Bitcoin transaction networks. Nodes represent transactions, and edges denote payment flows. Approximately 20\% of nodes are labeled as licit or illicit. To avoid extreme class imbalance, we use snapshots 7--9 for pretraining and adapt the model continually using the remaining graphs, reflecting the temporal dynamics of illicit activity detection.

Across all datasets, we follow a consistent strategy: pretrain a GNN on a small set of labeled graphs, then continually adapt to the remaining unlabeled graphs in a test-time online setting. To simplify notation in our tables and figures, we refer to Twitch-Explicit as \textbf{Twitch}, Facebook-100 as \textbf{Facebook}, and OGB-Arxiv as \textbf{Arxiv}.

\subsection{Baseline Details}

We compare \ourmethod against ten representative baseline methods.
The ``Base Model'' refers to the GNN pre-trained on source domains and directly applied to target domains without any adaptation, serving as a lower-bound reference.
These methods are grouped into two main categories: (1) one-step test-time adaptation methods and (2) continual test-time adaptation methods. To ensure fair comparison, all non-graph-specific methods are adapted to GNN-based architectures.
\begin{itemize}[leftmargin=*,noitemsep,topsep=1.5pt]

\item \textbf{One-step test-time adaptation methods} perform adaptation using only the current domain without memory or explicit long-term modeling.
DANN~\cite{DANN} is a classical domain adaptation approach that minimizes domain discrepancy via adversarial training using source and target data.
Tent~\cite{TENT} minimizes the entropy of predictions on unlabeled target samples to adapt the model.
BN Stats Adapt~\cite{Norm} updates network weights and batch normalization statistics based on the current test batch.
EERM~\cite{EERM}, designed specifically for graphs, maximizes environment risk variance to address domain shift and out-of-distribution challenges.
GTRANS~\cite{jin2022empowering} refines both graph structures and node features using a contrastive surrogate loss to improve test-time performance.

\item \textbf{Continual test-time adaptation methods} aim to maintain performance under sequential distribution shifts.
CoTTA~\cite{CoTTA} combines weight-averaged predictions and stochastic neuron restoration to mitigate error accumulation and catastrophic forgetting.
EATA~\cite{EATA} improves adaptation efficiency through entropy-based updates and a Fisher-based regularizer for stability.
GCAL~\cite{qiao2025gcal} enhances cross-domain adaptation by training an auxiliary VGAE to generate memory graphs, but its reliance on a separately trained generator limits scalability in long-term scenarios.
STEM~\cite{luo2026space} introduces a controller-worker architecture to dynamically generate adapter parameters for a frozen base GNN.

\end{itemize}

\subsection{Experimental Setting} \label{app:setting}

In this work, we are the first to investigate the generalizability and adaptability of graph models under non-stationary distribution shifts at test time, while also addressing the challenge of catastrophic forgetting.

We distinguish between two key concepts: \textbf{generalizability} and \textbf{adaptability}. Generalizability refers to the model's inherent ability to perform well on previously seen domains, even while being adapted to new domains. This emphasizes the model's capacity to retain knowledge throughout the continual adaptation process. In contrast, adaptability reflects how effectively a model can enhance its performance on a newly encountered domain through test-time optimization using unlabeled data. In other words, generalizability focuses on performance retention, while adaptability emphasizes performance gain under novel domain shifts. Figure~\ref{fig:metric} provides a visual comparison of these two metrics, with exact formulations introduced below.

\begin{figure}[t]
\centering
\includegraphics[width=0.25\linewidth]{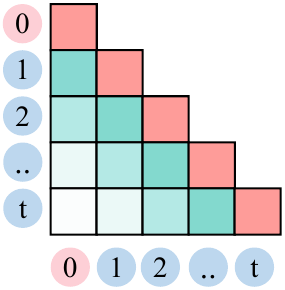}
\caption{An example of the performance matrix. For the current $t$-th domain:
TDG is computed as the mean of the green elements;
TDA is the mean of the pink elements;
AP is the average of the elements in the last row;
AF is the average of the differences between each element in the last row (excluding the last one) and the corresponding element in the previous row.
}
\label{fig:metric}
\end{figure}

For our experimental configuration, we use a 2-layer GCN as the backbone for three datasets, except for OGB-Arxiv where we adopt GraphSAGE~\cite{hamilton2017inductive}. For pretraining, we split the source graphs into 60\% training, 20\% validation, and 20\% testing. The pretraining phase uses a learning rate of 1e-04 and a weight decay of 5e-04. During adaptation, the learning rate is set to 1e-03 with the same weight decay. Pretraining lasts for 100--200 epochs depending on the dataset, while adaptation uses 1--10 epochs per domain. All hyperparameter choices follow the protocol of GCAL~\cite{qiao2025gcal}.
The number of nodes in each crystallized memory $N_t'$ is set to 1\% of the total number of nodes in the input graph. We analyze the sensitivity of hyperparameters $\alpha$ and $\beta$ in the main paper, with extended analyses on $\tau$ and the memory size factor $\lambda$ in Section~\ref{ap:sensi}. The convex-combination weights $w_{t-1}, w_t$ are fixed hyperparameters satisfying $w_{t-1}+w_t=1$ and selected by grid search over $w_t\in\{0.3, 0.5, 0.7, 0.9\}$. All results are averaged over five independent runs, with standard deviations reported.

To measure continual adaptation performance, we construct a performance matrix $M \in \mathbb{R}^{T \times T}$, where $M_{i,j}$ (for $i \geq j$) represents the model's performance on domain $j$ after training on domain $i$. Following~\cite{jin2022empowering,EERM}, we use ROC-AUC for Twitch-Explicit, Accuracy for Facebook-100 and OGB-Arxiv, and F1 Score for Elliptic.
We report the following metrics derived from $M$:

\begin{itemize}[leftmargin=1.5em]
    \item \textbf{Average Performance (AP)}: $\text{AP} = \frac{1}{T} \sum_{i=1}^{T} M_{T,i}$, reflecting the model's overall adaptation ability after completing all domain shifts.
    \item \textbf{Average Forgetting (AF)}: $\text{AF} = \frac{1}{T-1} \sum_{i=1}^{T-1} (M_{T,i} - M_{i,i})$, quantifying the degradation in performance on past domains.
    \item \textbf{Target Domain Generalization (TDG)}~\cite{liu2023deja}: $\text{TDG} = \frac{1}{T-1} \sum_{i=2}^{T} M_{i-1,i}$, measuring model generalization to unseen domains before adaptation.
    \item \textbf{Target Domain Adaptation (TDA)}~\cite{liu2023deja}: $\text{TDA} = \frac{1}{T} \sum_{i=1}^{T} M_{i,i}$, measuring performance on each domain after adaptation.
\end{itemize}

\subsection{Configurations}
We conduct the experiments with:
\begin{itemize}[leftmargin=1.5em]
    \item Operating System: Ubuntu 20.04 LTS.
    \item CPU: Intel(R) Xeon(R) Gold 6240 CPU @ 2.60GHz, 256GB RAM.
    \item GPU: Tesla V100 PCIe 32GB GPU.
    \item Software: Python 3.7, Pytorch 1.8, CUDA 11.0, and Pytorch-Geometric 2.0.1.
\end{itemize}

\subsection{Performance Comparison under the GCAL Setting} \label{ap:full-baseline}

We provide the complete comparison summarized in Section~\ref{sec:overall} between \ourmethod and the 10 representative baselines under the GCAL setting in Table~\ref{tab:all_performance}. 
We use GCN as the backbone on Twitch-explicit, Facebook-100, and Elliptic, and GraphSAGE on OGB-Arxiv. The baselines span both \textit{one-step test-time adaptation} (DANN, Norm, TENT, EERM, GTrans, LEBED) and \textit{continual test-time adaptation} (CoTTA, EATA, GCAL, STEM). For STEM, we report results reproduced from the official codebase, which only supports Twitch-explicit and OGB-Arxiv, and the remaining two datasets are therefore marked as Not Applicable. 

Results show that \ourmethod achieves the best AP on three of the four datasets (Facebook-100, Elliptic, OGB-Arxiv) and the best AF on all four. On Twitch-explicit, STEM attains a slightly higher AP but exhibits a negative AF, indicating that its gain on the latest domain comes at the cost of forgetting earlier ones. By contrast, \ourmethod consistently maintains a positive AF across all datasets, confirming that the closed-form memory crystallization and state-evolving design jointly support adaptability without sacrificing knowledge retention.

\begin{table*}[!t]
\caption{Performance comparison ($\%$, mean $\pm$ standard deviation for five runs) across node-level (Twitch-explicit, OGB-Arxiv) and graph-level (Mutagenicity, FRANKENSTEIN) datasets under continual distribution shifts. ``Avg.'' is computed uniformly as the mean of TDG and TDA. ``$\uparrow$'' indicates higher is better and ``$\downarrow$'' indicates lower is better. The best results are highlighted in \textbf{bold} and the runner-ups are \underline{underlined}.}
\centering
\resizebox{\textwidth}{!}{
\begin{tabular}{c|c|cccccccccccc}
\toprule
\multirow{2}[4]{*}{\textbf{Dataset}} & \multirow{2}[4]{*}{\textbf{Methods}} & \multicolumn{6}{c}{\textbf{Backbone: GCN}} & \multicolumn{6}{c}{\textbf{Backbone: SAGE}} \\
\cmidrule(lr){3-8} \cmidrule(lr){9-14}
& & AP ($\uparrow$) & AF ($\uparrow$) & TDG ($\uparrow$) & TDA ($\uparrow$) & Avg.\ ($\uparrow$) & GPU ($\downarrow$) & AP ($\uparrow$) & AF ($\uparrow$) & TDG ($\uparrow$) & TDA ($\uparrow$) & Avg.\ ($\uparrow$) & GPU ($\downarrow$) \\
\midrule
\multicolumn{14}{c}{\textit{Node-level Adaptation}} \\
\midrule

\multirow{3}{*}{\textbf{Twitch-explicit}}
& GCAL & 55.65{\scriptsize$\pm$0.09} & \underline{0.42{\scriptsize$\pm$0.13}} & 54.78{\scriptsize$\pm$0.60} & 53.20{\scriptsize$\pm$0.62} & 53.99 & 11159.80
       & 58.20{\scriptsize$\pm$0.35} & \underline{0.79{\scriptsize$\pm$0.13}} & 59.65{\scriptsize$\pm$0.39} & \underline{57.02{\scriptsize$\pm$0.49}} & 58.34 & 8580.63 \\
& STEM & \textbf{56.66{\scriptsize$\pm$0.05}} & -0.07{\scriptsize$\pm$0.17} & \textbf{57.35{\scriptsize$\pm$0.07}} & \textbf{55.87{\scriptsize$\pm$0.15}} & \textbf{56.61} & \underline{7393.27}
       & \underline{59.49{\scriptsize$\pm$0.06}} & -0.45{\scriptsize$\pm$0.02} & \textbf{63.56{\scriptsize$\pm$0.01}} & 56.33{\scriptsize$\pm$0.05} & \underline{59.95} & \underline{7961.84} \\
& \cellcolor{myblue}\textbf{\ourmethod} & \cellcolor{myblue}\underline{55.95{\scriptsize$\pm$0.15}} & \cellcolor{myblue}\textbf{0.47{\scriptsize$\pm$0.08}} & \cellcolor{myblue}\underline{55.56{\scriptsize$\pm$0.13}} & \cellcolor{myblue}\underline{55.83{\scriptsize$\pm$0.14}} & \cellcolor{myblue}\underline{55.70} & \cellcolor{myblue}\textbf{854.97}
       & \cellcolor{myblue}\textbf{61.62{\scriptsize$\pm$0.12}} & \cellcolor{myblue}\textbf{0.86{\scriptsize$\pm$0.04}} & \cellcolor{myblue}\underline{62.02{\scriptsize$\pm$0.13}} & \cellcolor{myblue}\textbf{60.36{\scriptsize$\pm$0.16}} & \cellcolor{myblue}\textbf{61.19} & \cellcolor{myblue}\textbf{2833.18} \\

\midrule

\multirow{3}{*}{\textbf{OGB-Arxiv}}
& GCAL & \underline{14.15{\scriptsize$\pm$6.94}} & \underline{-1.80{\scriptsize$\pm$2.06}} & \underline{17.90{\scriptsize$\pm$8.58}} & \underline{13.77{\scriptsize$\pm$5.57}} & \underline{15.83} & 22342.47
       & \underline{45.22{\scriptsize$\pm$0.17}} & \underline{0.76{\scriptsize$\pm$0.05}} & \underline{42.40{\scriptsize$\pm$7.74}} & \underline{41.80{\scriptsize$\pm$6.59}} & \underline{42.10} & 22365.84 \\
& STEM & 10.25{\scriptsize$\pm$0.01} & -2.79{\scriptsize$\pm$0.01} & 12.75{\scriptsize$\pm$0.01} & 10.56{\scriptsize$\pm$0.01} & 11.66 & \underline{7010.74}
       & 31.71{\scriptsize$\pm$0.14} & -4.40{\scriptsize$\pm$0.19} & 34.19{\scriptsize$\pm$0.25} & 32.77{\scriptsize$\pm$0.18} & 33.48 & \underline{7012.99} \\
& \cellcolor{myblue}\textbf{\ourmethod} & \cellcolor{myblue}\textbf{19.32{\scriptsize$\pm$1.57}} & \cellcolor{myblue}\textbf{3.38{\scriptsize$\pm$1.99}} & \cellcolor{myblue}\textbf{19.39{\scriptsize$\pm$1.47}} & \cellcolor{myblue}\textbf{20.39{\scriptsize$\pm$1.71}} & \cellcolor{myblue}\textbf{19.89} & \cellcolor{myblue}\textbf{299.18}
       & \cellcolor{myblue}\textbf{46.07{\scriptsize$\pm$1.70}} & \cellcolor{myblue}\textbf{0.99{\scriptsize$\pm$0.66}} & \cellcolor{myblue}\textbf{47.11{\scriptsize$\pm$1.80}} & \cellcolor{myblue}\textbf{43.52{\scriptsize$\pm$1.25}} & \cellcolor{myblue}\textbf{45.31} & \cellcolor{myblue}\textbf{391.02} \\

\midrule
\midrule
\multicolumn{14}{c}{\textit{Graph-level Adaptation}} \\
\midrule

\multirow{3}{*}{\textbf{Mutagenicity}}
& GCAL & \underline{67.96{\scriptsize$\pm$0.10}} & \underline{-1.55{\scriptsize$\pm$0.14}} & \underline{67.59{\scriptsize$\pm$0.06}} & \underline{69.13{\scriptsize$\pm$0.04}} & \underline{68.36} & \underline{3805.50}
       & 65.49{\scriptsize$\pm$0.78} & \underline{-0.70{\scriptsize$\pm$0.54}} & 64.22{\scriptsize$\pm$0.63} & 66.19{\scriptsize$\pm$0.33} & 65.20 & \underline{1221.09} \\
& STEM & 64.48{\scriptsize$\pm$0.26} & -1.60{\scriptsize$\pm$0.22} & 66.73{\scriptsize$\pm$0.11} & 64.72{\scriptsize$\pm$0.13} & 65.72 & 10361.79
       & \textbf{67.58{\scriptsize$\pm$0.71}} & -3.92{\scriptsize$\pm$0.77} & \textbf{70.42{\scriptsize$\pm$0.50}} & \textbf{70.33{\scriptsize$\pm$0.15}} & \textbf{70.38} & 10357.06 \\
& \cellcolor{myblue}\textbf{\ourmethod} & \cellcolor{myblue}\textbf{68.40{\scriptsize$\pm$1.66}} & \cellcolor{myblue}\textbf{-0.85{\scriptsize$\pm$0.61}} & \cellcolor{myblue}\textbf{69.94{\scriptsize$\pm$0.81}} & \cellcolor{myblue}\textbf{69.71{\scriptsize$\pm$1.22}} & \cellcolor{myblue}\textbf{69.82} & \cellcolor{myblue}\textbf{85.14}
       & \cellcolor{myblue}\underline{67.15{\scriptsize$\pm$0.25}} & \cellcolor{myblue}\textbf{-0.66{\scriptsize$\pm$0.07}} & \cellcolor{myblue}\underline{66.34{\scriptsize$\pm$0.07}} & \cellcolor{myblue}\underline{66.68{\scriptsize$\pm$0.21}} & \cellcolor{myblue}\underline{66.51} & \cellcolor{myblue}\textbf{111.61} \\

\midrule

\multirow{3}{*}{\textbf{FRANKENSTEIN}}
& GCAL & 59.26{\scriptsize$\pm$0.66} & \underline{-0.22{\scriptsize$\pm$0.21}} & 59.53{\scriptsize$\pm$0.33} & 59.32{\scriptsize$\pm$0.52} & 59.42 & \underline{4950.38}
       & \textbf{58.34{\scriptsize$\pm$0.70}} & \textbf{0.40{\scriptsize$\pm$0.64}} & \textbf{56.87{\scriptsize$\pm$0.46}} & \textbf{57.44{\scriptsize$\pm$0.17}} & \textbf{57.16} & \underline{2085.58} \\
& STEM & \underline{61.61{\scriptsize$\pm$0.41}} & \textbf{1.41{\scriptsize$\pm$0.25}} & \underline{60.61{\scriptsize$\pm$0.38}} & \underline{59.81{\scriptsize$\pm$0.21}} & \underline{60.21} & 8322.35
       & 54.36{\scriptsize$\pm$0.33} & -0.96{\scriptsize$\pm$0.86} & \underline{53.91{\scriptsize$\pm$0.67}} & 54.15{\scriptsize$\pm$0.59} & 54.03 & 10461.86 \\
& \cellcolor{myblue}\textbf{\ourmethod} & \cellcolor{myblue}\textbf{61.86{\scriptsize$\pm$0.67}} & \cellcolor{myblue}-0.45{\scriptsize$\pm$0.15} & \cellcolor{myblue}\textbf{61.25{\scriptsize$\pm$0.15}} & \cellcolor{myblue}\textbf{61.11{\scriptsize$\pm$0.75}} & \cellcolor{myblue}\textbf{61.18} & \cellcolor{myblue}\textbf{201.65}
       & \cellcolor{myblue}\underline{56.42{\scriptsize$\pm$0.10}} & \cellcolor{myblue}\underline{-0.43{\scriptsize$\pm$0.08}} & \cellcolor{myblue}53.80{\scriptsize$\pm$0.13} & \cellcolor{myblue}\underline{55.38{\scriptsize$\pm$0.05}} & \cellcolor{myblue}\underline{54.59} & \cellcolor{myblue}\textbf{222.39} \\

\bottomrule
\end{tabular}}
\label{tab:stem-comparison}
\end{table*}

\subsection{Extended Performance Comparison across Node- and Graph-level Tasks} \label{ap:stem-comparison}

To further situate \ourmethod against the very recent state-of-the-art STEM~\cite{luo2026space}, we extend the comparison to four datasets covering both node-level adaptation (Twitch-explicit, OGB-Arxiv) and graph-level adaptation (Mutagenicity, FRANKENSTEIN), evaluated under two backbones (GCN and GraphSAGE). We compare \ourmethod against the recent strong baselines GCAL~\cite{qiao2025gcal} and STEM~\cite{luo2026space}. Results are reported in Table~\ref{tab:stem-comparison}, where ``Avg.\ '' denotes the mean of TDG and TDA and serves as a balanced summary of pre- and post-adaptation behavior.

\textbf{Effectiveness across node-level and graph-level tasks.}
\ourmethod attains the best Avg.\ on five out of eight (dataset, backbone) combinations and the runner-up on the remaining three, demonstrating consistent effectiveness beyond the node-level setting that the prior literature on graph continual test-time adaptation has primarily focused on. The pattern of best results also covers both task granularities: \ourmethod leads on three of four node-level entries and on two of four graph-level entries, indicating that the closed-form memory crystallization transfers naturally from node-level to graph-level adaptation without any task-specific modification. We further note that \ourmethod consistently achieves the best AF in most settings, confirming that the state-evolving design effectively mitigates forgetting under continual drift, whereas STEM's stronger adaptation on certain domains is often accompanied by a negative AF (\eg, -3.92 on Mutagenicity-GraphSAGE, -4.40 on Arxiv-GraphSAGE), reflecting a less favorable retention--adaptation trade-off.

\textbf{Substantial Efficiency Advantage.}
The most pronounced gap appears in GPU memory consumption, where the closed-form construction in \ourmethod removes the need for any auxiliary trainable component at test time. On node-level tasks, \ourmethod is roughly an order of magnitude more memory-efficient than STEM and over an order of magnitude lighter than GCAL. 
The advantage widens on graph-level tasks, where \ourmethod operates within a few hundred MiB while STEM exceeds ten thousand MiB on both Mutagenicity and FRANKENSTEIN, a difference of two orders of magnitude. Such a gap is structural rather than incidental: STEM and GCAL both rely on auxiliary modules whose memory cost grows with graph size, whereas the closed-form memory in \ourmethod is solved analytically over a fraction of the input nodes ($\rhd$ Section~\ref{sec:proto}), yielding a footprint that stays compact regardless of the underlying task granularity. These results confirm that \ourmethod delivers competitive or superior effectiveness while remaining the most memory-efficient option among recent continual test-time adaptation methods, making it well-suited for prolonged adaptation under non-stationary shifts.

\subsection{Additional Performance Matrices} \label{ap:addition-result}
We extend the performance-matrix visualization in Figure~\ref{fig:matrix} of the main paper to the Twitch-Explicit and Elliptic datasets in Figure~\ref{fig:ap-matrix}. Across both datasets, \ourmethod maintains consistently strong performance across the diagonal and lower-triangular entries, indicating that the model not only adapts to each newly arriving domain but also retains its competence on previously encountered ones. In contrast, GCAL exhibits visibly degraded performance on earlier domains as adaptation proceeds, particularly on Elliptic, where the temporal drift can be abrupt.

We further note an observation on Facebook-100 in Figure~\ref{fig:matrix}: the performance of \ourmethod on the 6th target domain is slightly lower than on neighboring domains. We attribute this to a larger covariate shift between this domain and the rest of the sequence, which makes single-pass adaptation under the test-time setting more challenging. Importantly, this localized dip does not propagate: the performance on subsequent domains recovers, and the overall AP and AF on Facebook-100 in Table~\ref{tab:more-metric} remain ahead of all baselines, confirming that \ourmethod is robust to occasional outlier domains under continual drift.

\begin{figure*}[htbp]
\centering
\hspace{-0.28cm}
 \subfigure[GCAL on Twitch-explicit]{
   \includegraphics[width=0.24\linewidth]{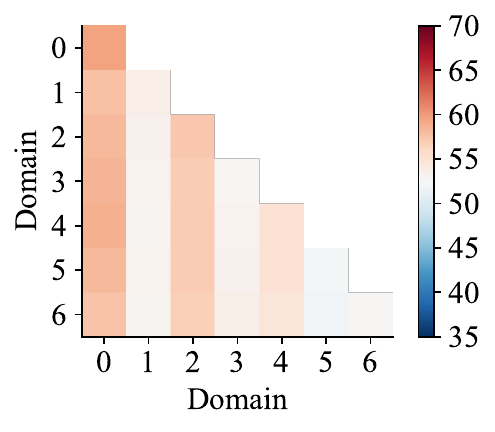}}
\hspace{-0.28cm}
 \subfigure[\ourmethod on Twitch-explicit]{
   \includegraphics[width=0.24\linewidth]{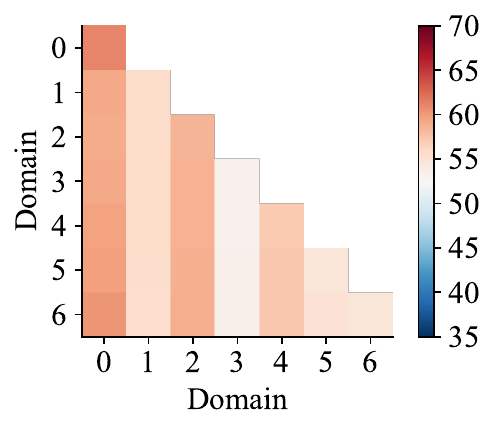}}
\hspace{-0.28cm}
 \subfigure[GCAL on Elliptic]{
   \includegraphics[width=0.24\linewidth]{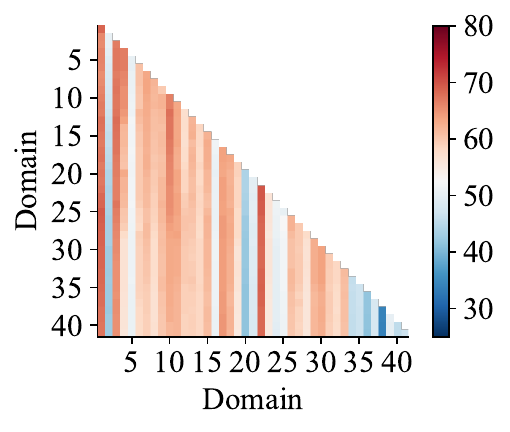}}
\hspace{-0.28cm}
 \subfigure[\ourmethod on Elliptic]{
   \includegraphics[width=0.24\linewidth]{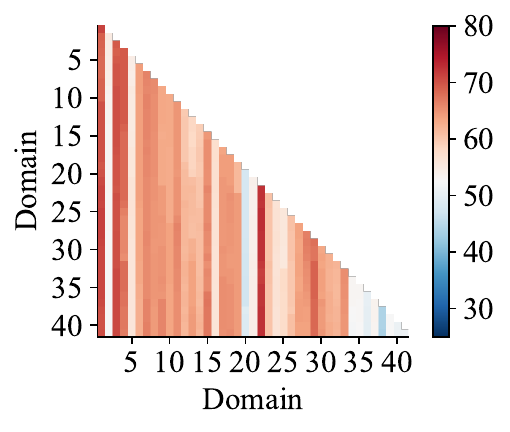}}
\vspace{-2.5mm}
\caption{Performance matrices of \ourmethod and GCAL on different datasets.
}
\vspace{-1em}
\label{fig:ap-matrix}
\end{figure*}

\subsection{Robustness to Different Domain-Shift Patterns} \label{ap:cyclic}
The state evolving field is a first-order linear interpolation, which prompts a natural question: does \ourmethod remain effective when distribution shifts arrive in a non-monotonic or cyclic pattern? We note that Theorem~\ref{bound3} holds for \emph{any} convex combination satisfying $w_{t-1}+w_t=1$, so the theoretical tightness does not depend on a stronger linearity assumption. Empirically, the memory at each step is reconstructed from the current domain rather than carried forward, the state evolving field interpolates only encoded representations of adjacent domains, and entropy minimization keeps the model calibrated regardless of the order of arrival.

We further verify this with two stress tests on Twitch-Explicit with GCN backbone: (i) varying the order of the six target domains to break monotonicity, and (ii) cyclic shifts where target domains repeat. Tables~\ref{tab:cyclic-1} and~\ref{tab:cyclic-2} show that \ourmethod outperforms GCAL across all orderings and cyclic patterns, with much lower runtime and GPU memory.

\begin{table*}[t]
\caption{Robustness to non-monotonic domain orderings on Twitch-Explicit.}
  \centering
  \resizebox{.9\textwidth}{!}{
    \begin{tabular}{l|l|cccccc}
    \toprule
    \textbf{Domain Order} & \textbf{Method} & AP & AF & TDG & TDA & Time (s) & GPU (MiB) \\
    \midrule
    \multirow{2}{*}{\texttt{ENGB, ES, FR, PTBR, RU, TW}} & GCAL & 55.65 & 0.42 & 54.78 & 53.20 & 217.51 & 11159.8 \\
    & \cellcolor{myblue}\textbf{\ourmethod} & \cellcolor{myblue}\textbf{55.95} & \cellcolor{myblue}\textbf{0.47} & \cellcolor{myblue}\textbf{55.56} & \cellcolor{myblue}\textbf{55.83} & \cellcolor{myblue}\textbf{74.64} & \cellcolor{myblue}\textbf{854.97} \\
    \midrule
    \multirow{2}{*}{\texttt{RU, FR, ES, ENGB, TW, PTBR}} & GCAL & 53.58 & 0.18 & 52.67 & 53.32 & 217.11 & 12225.69 \\
    & \cellcolor{myblue}\textbf{\ourmethod} & \cellcolor{myblue}\textbf{57.04} & \cellcolor{myblue}\textbf{0.36} & \cellcolor{myblue}\textbf{55.69} & \cellcolor{myblue}\textbf{57.46} & \cellcolor{myblue}\textbf{75.92} & \cellcolor{myblue}\textbf{832.58} \\
    \midrule
    \multirow{2}{*}{\texttt{FR, PTBR, TW, ENGB, RU, ES}} & GCAL & 53.37 & 0.12 & 52.47 & 53.93 & 216.35 & 10304.73 \\
    & \cellcolor{myblue}\textbf{\ourmethod} & \cellcolor{myblue}\textbf{57.12} & \cellcolor{myblue}\textbf{0.46} & \cellcolor{myblue}\textbf{55.68} & \cellcolor{myblue}\textbf{58.20} & \cellcolor{myblue}\textbf{73.75} & \cellcolor{myblue}\textbf{648.62} \\
    \midrule
    \multirow{2}{*}{\texttt{FR, PTBR, ES, RU, TW, ENGB}} & GCAL & 53.42 & 0.02 & 52.63 & 54.10 & 217.56 & 10677.54 \\
    & \cellcolor{myblue}\textbf{\ourmethod} & \cellcolor{myblue}\textbf{56.97} & \cellcolor{myblue}\textbf{0.30} & \cellcolor{myblue}\textbf{55.67} & \cellcolor{myblue}\textbf{58.32} & \cellcolor{myblue}\textbf{77.85} & \cellcolor{myblue}\textbf{830.88} \\
    \bottomrule
    \end{tabular}
  }
  \label{tab:cyclic-1}
\end{table*}

\begin{table*}[t]
\caption{Robustness to cyclic domain shifts on Twitch-Explicit. Domains repeat to test stability under non-monotonic drift patterns.}
  \centering
  \resizebox{.95\textwidth}{!}{
    \begin{tabular}{l|l|cccccc}
    \toprule
    \textbf{Domain Sequence} & \textbf{Method} & AP & AF & TDG & TDA & Time (s) & GPU (MiB) \\
    \midrule
    \multirow{2}{*}{\texttt{ENGB, ES, FR, ENGB, ES, FR}} & GCAL & 53.78 & -0.21 & 53.27 & 54.62 & 209.09 & 10116.01 \\
    & \cellcolor{myblue}\textbf{\ourmethod} & \cellcolor{myblue}\textbf{54.00} & \cellcolor{myblue}\textbf{0.06} & \cellcolor{myblue}\textbf{53.39} & \cellcolor{myblue}\textbf{54.94} & \cellcolor{myblue}\textbf{56.67} & \cellcolor{myblue}\textbf{605.80} \\
    \midrule
    \multirow{2}{*}{\texttt{PTBR, RU, TW, PTBR, RU, TW}} & GCAL & 52.84 & -0.06 & 52.03 & 54.04 & 223.54 & 14332.55 \\
    & \cellcolor{myblue}\textbf{\ourmethod} & \cellcolor{myblue}\textbf{53.02} & -0.17 & \cellcolor{myblue}\textbf{52.67} & 53.96 & \cellcolor{myblue}\textbf{95.79} & \cellcolor{myblue}\textbf{828.53} \\
    \midrule
    \multirow{2}{*}{\texttt{RU, FR, ES, RU, FR, ES}} & GCAL & 52.11 & 0.00 & 51.25 & 52.56 & 218.63 & 12559.53 \\
    & \cellcolor{myblue}\textbf{\ourmethod} & \cellcolor{myblue}\textbf{52.22} & -0.06 & \cellcolor{myblue}\textbf{51.39} & \cellcolor{myblue}\textbf{52.80} & \cellcolor{myblue}\textbf{77.21} & \cellcolor{myblue}\textbf{774.43} \\
    \midrule
    \multirow{2}{*}{\texttt{ENGB, TW, PTBR, ENGB, TW, PTBR}} & GCAL & \textbf{54.80} & 0.02 & \textbf{54.24} & 54.54 & 214.80 & 10802.73 \\
    & \cellcolor{myblue}\textbf{\ourmethod} & \cellcolor{myblue}54.61 & \cellcolor{myblue}\textbf{0.20} & \cellcolor{myblue}53.84 & \cellcolor{myblue}\textbf{55.09} & \cellcolor{myblue}\textbf{82.64} & \cellcolor{myblue}\textbf{768.01} \\
    \midrule
    \multirow{2}{*}{\makecell[l]{\texttt{ENGB, ES, FR, PTBR, RU, TW,}\\\texttt{ENGB, ES, FR, PTBR, RU, TW}}} & GCAL & 52.87 & -0.35 & 52.71 & 53.27 & 444.15 & 20293.89 \\
    & \cellcolor{myblue}\textbf{\ourmethod} & \cellcolor{myblue}\textbf{53.16} & \cellcolor{myblue}\textbf{0.05} & \cellcolor{myblue}\textbf{52.84} & \cellcolor{myblue}\textbf{53.54} & \cellcolor{myblue}\textbf{152.72} & \cellcolor{myblue}\textbf{785.17} \\
    \bottomrule
    \end{tabular}
  }
  \label{tab:cyclic-2}
\end{table*}

\begin{table*}[t]
\caption{Comparison with alternative training-free memory construction schemes on Twitch-Explicit with GCN backbone.}
  \centering
  \resizebox{.7\linewidth}{!}{
    \begin{tabular}{l|ccccc}
    \toprule
    \textbf{Method} & AP ($\uparrow$) & AF ($\uparrow$) & TDG ($\uparrow$) & Time ($\downarrow$) & GPU ($\downarrow$)\\
    \midrule
Base Model                & 53.88 & NA    & NA    & NA     & NA \\
GCAL with VGAE            & 55.65 & 0.42  & 54.78 & 217.51 & 11159.8 \\
Random subgraph sampling  & 51.26 & -1.45 & 51.32 & 67.81  & 800.96 \\
Feature averaging         & 52.91 & -0.85 & 52.19 & 65.56  & 783.13 \\
\rowcolor{myblue}\textbf{\ourmethod} & \textbf{55.95} & \textbf{0.47} & \textbf{55.56} & \textbf{74.64} & \textbf{854.97} \\
\bottomrule
\end{tabular}}
\label{tab:abla-mem}
\end{table*}

\label{ap:more-vis}

\subsection{Memory Construction Alternatives}
To further verify that \ourmethod's efficiency advantage stems from \emph{both} eliminating auxiliary generative training \emph{and} producing informative memories, we compare against two simpler training-free memory schemes that replace GCAL's VGAE: \textit{Random subgraph sampling} and \textit{Feature averaging}. As shown in Table~\ref{tab:abla-mem}, both alternatives achieve comparable cost to \ourmethod but suffer substantial performance drops, with feature averaging falling below the base model. This confirms that the closed-form crystallization in \ourmethod produces memories that are not only cheap but also semantically faithful.

\subsection{Additional Visualizations of Crystallized Memories}
We complement the t-SNE projection in Figure~\ref{fig:tsne-twitch-gcn} of the main paper with structural visualizations of the original graphs and their crystallized memories on the continuous-domain Twitch graphs, rendered with the NetworkX library from Figure~\ref{fig:networkx-twitch-gcn} to Figure~\ref{fig:networkx-end}. To keep individual figures legible, the original graph is rendered after a fixed-rate node sampling, while the crystallized memory is rendered in full. Across the visualized domains, the structural layout of each crystallized memory closely traces that of its original graph, while the t-SNE projection in the main paper shows that the two also remain aligned in the embedding space. Together, the two views support the design intuition that memory crystallization captures the semantically informative content of each domain in a compact, training-free manner.

\begin{figure*}[t]
    \centering
    \includegraphics[width=\linewidth]{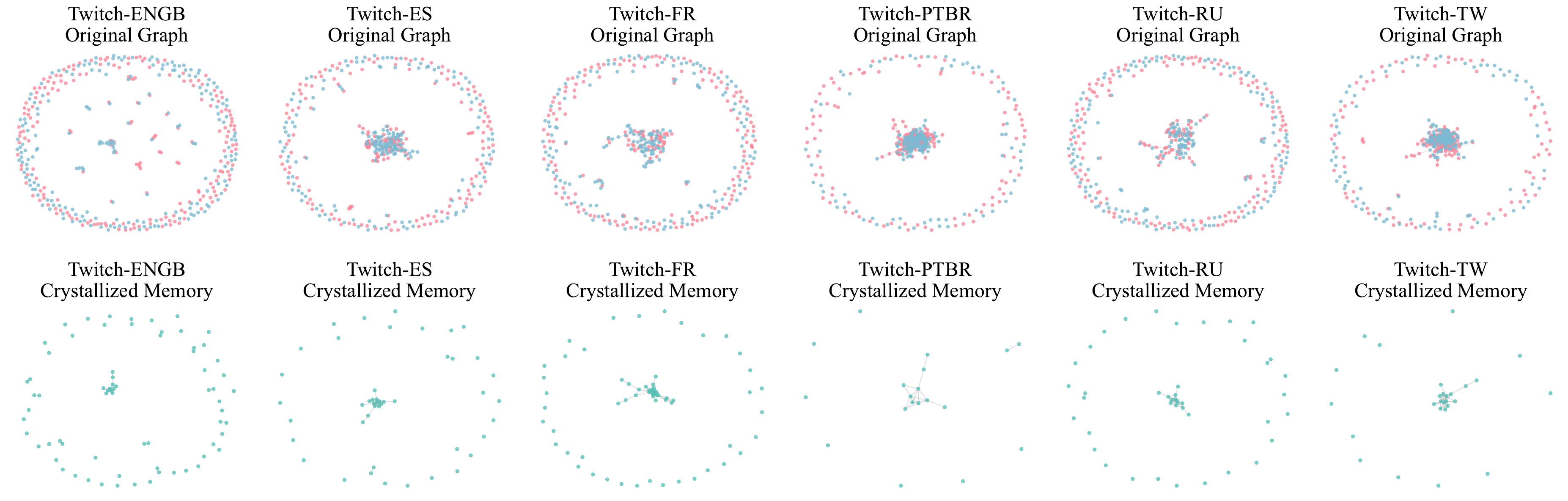}
    \caption{Structural visualization of the original graph and its crystallized memory on continuous-domain Twitch graphs with GCN backbone.}
    \label{fig:networkx-twitch-gcn}
    \vspace{-1em}
\end{figure*}

\begin{figure*}[t]
    \centering
    \includegraphics[width=\linewidth]{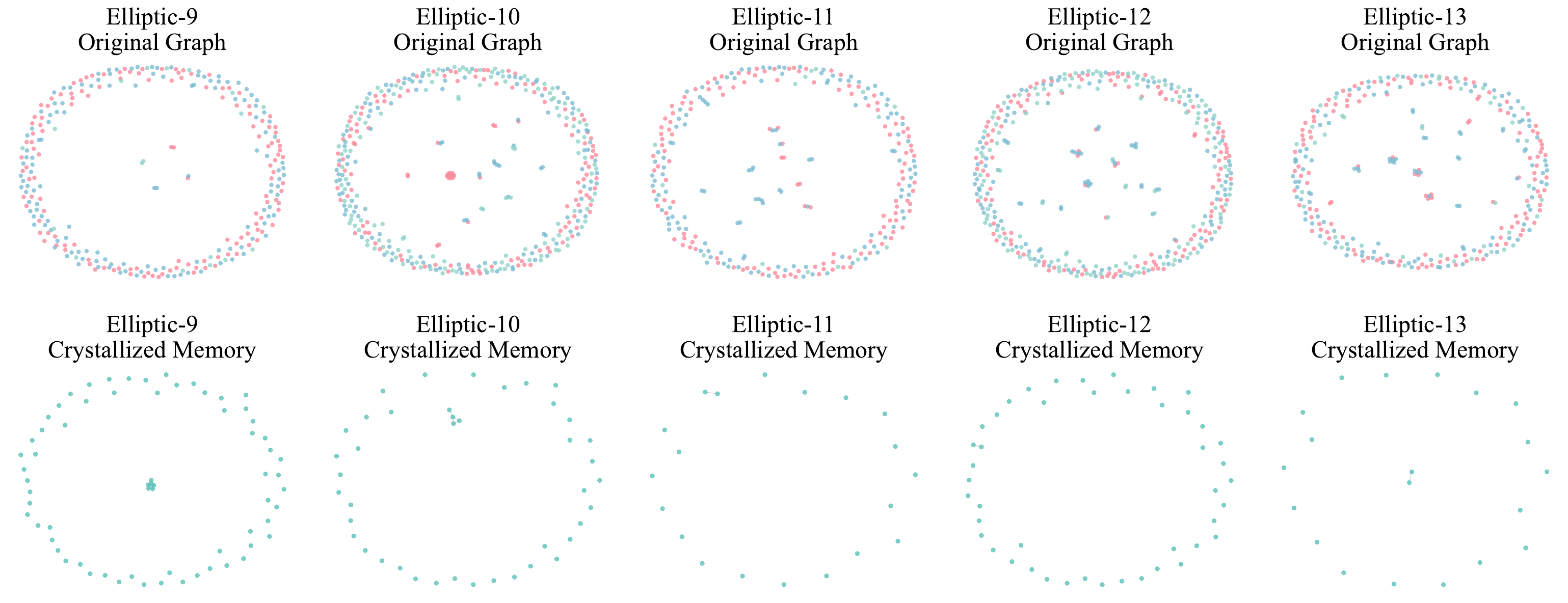}
    \caption{Structural visualization of the original graph and its crystallized memory on continuous-domain Elliptic graphs with GCN backbone (Part I).}
    \label{fig:networkx-elliptic-gcn-1}
    \vspace{-1em}
\end{figure*}

\begin{figure*}[t]
    \centering
    \includegraphics[width=\linewidth]{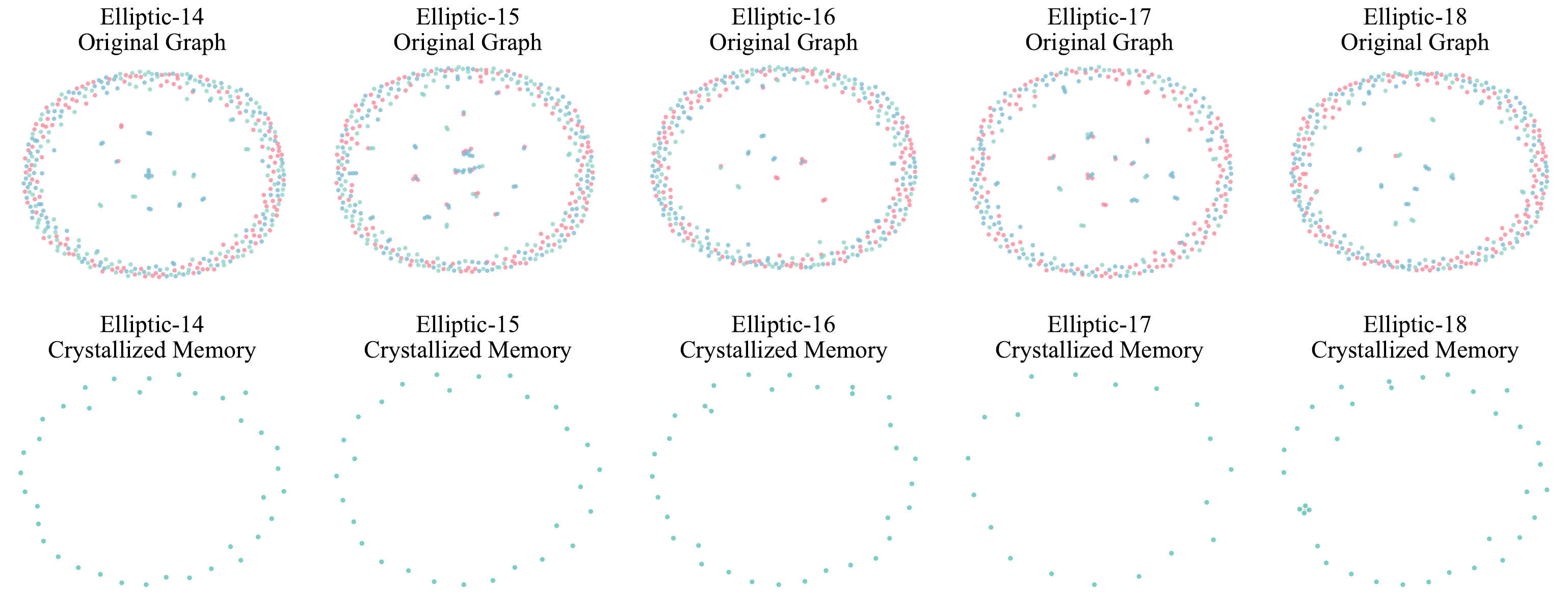}
    \caption{Structural visualization of the original graph and its crystallized memory on continuous-domain Elliptic graphs with GCN backbone (Part II).}
    \label{fig:networkx-elliptic-gcn-2}
    \vspace{-1em}
\end{figure*}

\begin{figure*}[t]
    \centering
    \includegraphics[width=\linewidth]{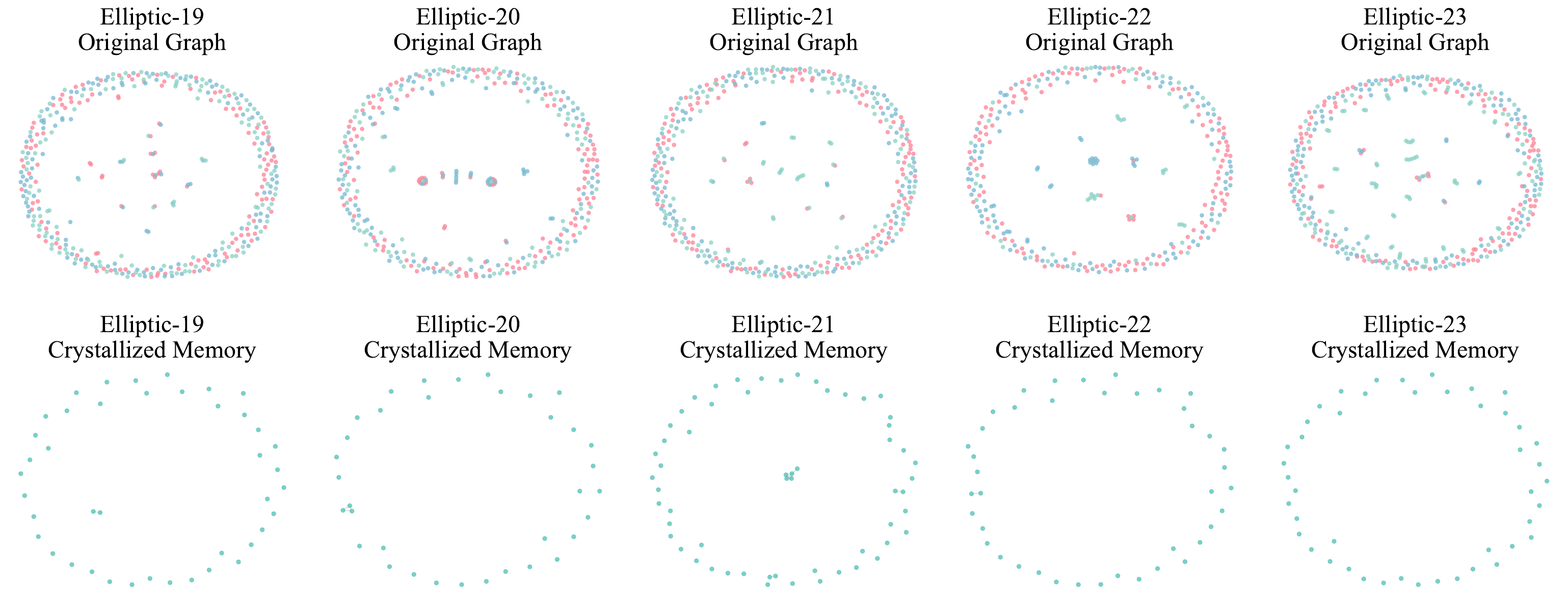}
    \caption{Structural visualization of the original graph and its crystallized memory on continuous-domain Elliptic graphs with GCN backbone (Part III).}
    \label{fig:networkx-elliptic-gcn-3}
    \vspace{-1em}
\end{figure*}

\begin{figure*}[t]
    \centering
    \includegraphics[width=\linewidth]{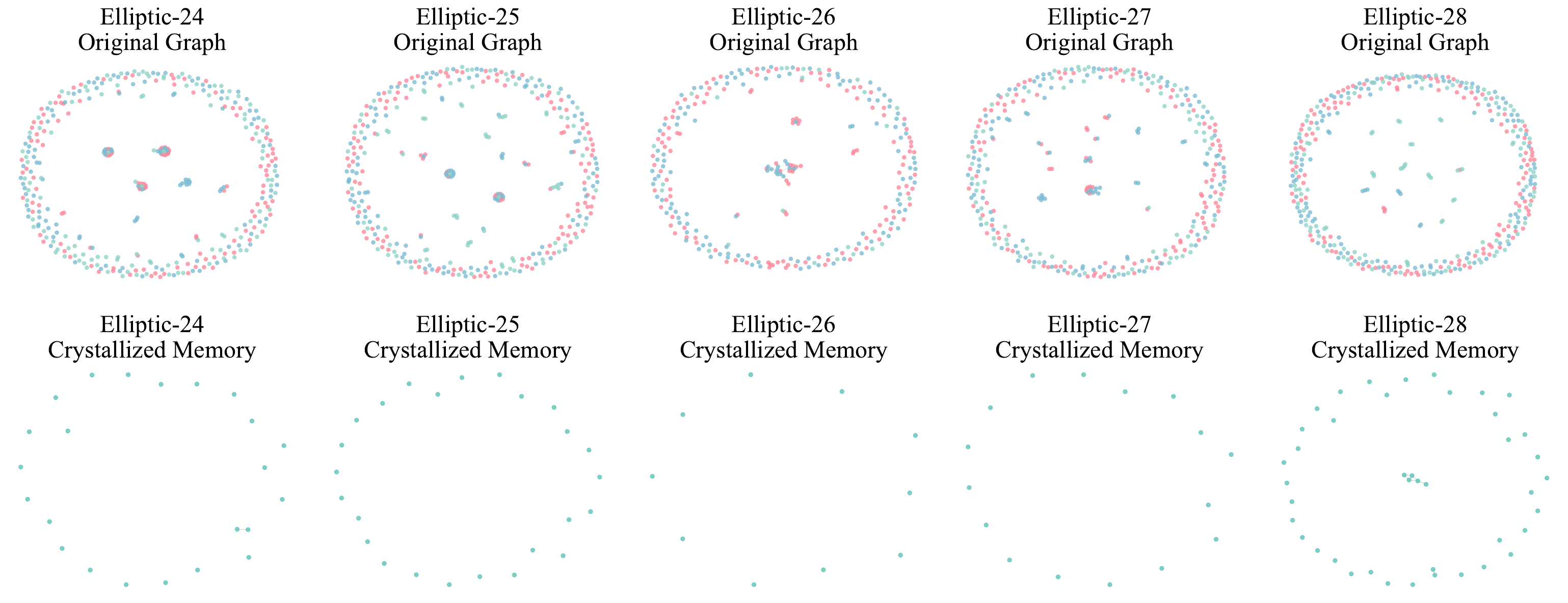}
    \caption{Structural visualization of the original graph and its crystallized memory on continuous-domain Elliptic graphs with GCN backbone (Part IV).}
    \label{fig:networkx-elliptic-gcn-4}
    \vspace{-1em}
\end{figure*}

\begin{figure*}[t]
    \centering
    \includegraphics[width=\linewidth]{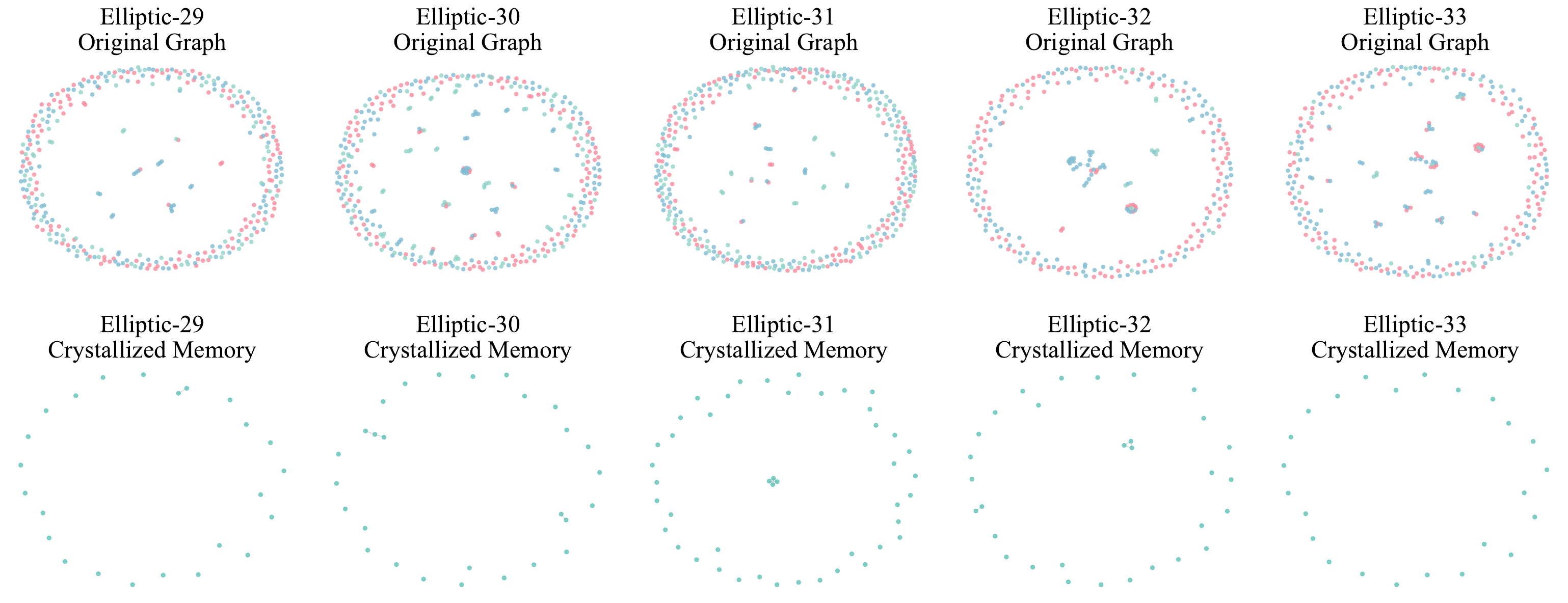}
    \caption{Structural visualization of the original graph and its crystallized memory on continuous-domain Elliptic graphs with GCN backbone (Part V).}
    \label{fig:networkx-elliptic-gcn-5}
    \vspace{-1em}
\end{figure*}

\begin{figure*}[t]
    \centering
    \includegraphics[width=\linewidth]{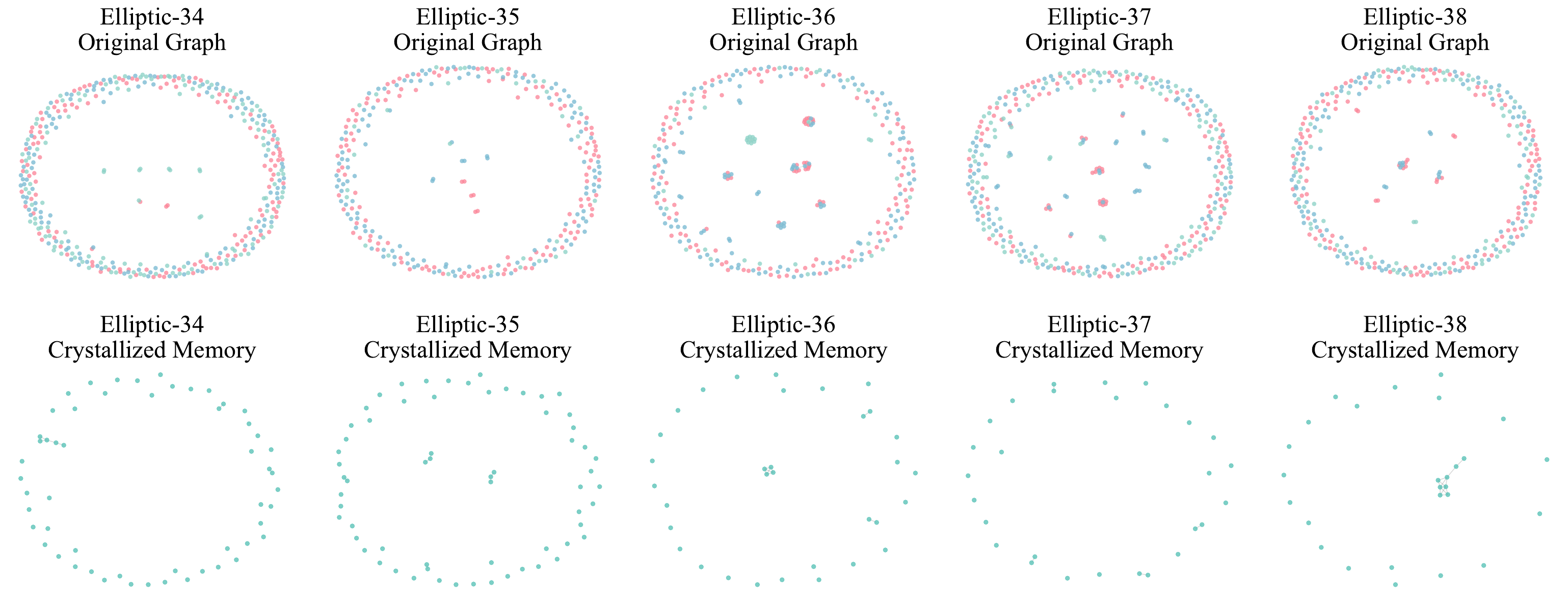}
    \caption{Structural visualization of the original graph and its crystallized memory on continuous-domain Elliptic graphs with GCN backbone (Part VI).}
    \label{fig:networkx-elliptic-gcn-6}
    \vspace{-1em}
\end{figure*}

\begin{figure*}[t]
    \centering
    \includegraphics[width=\linewidth]{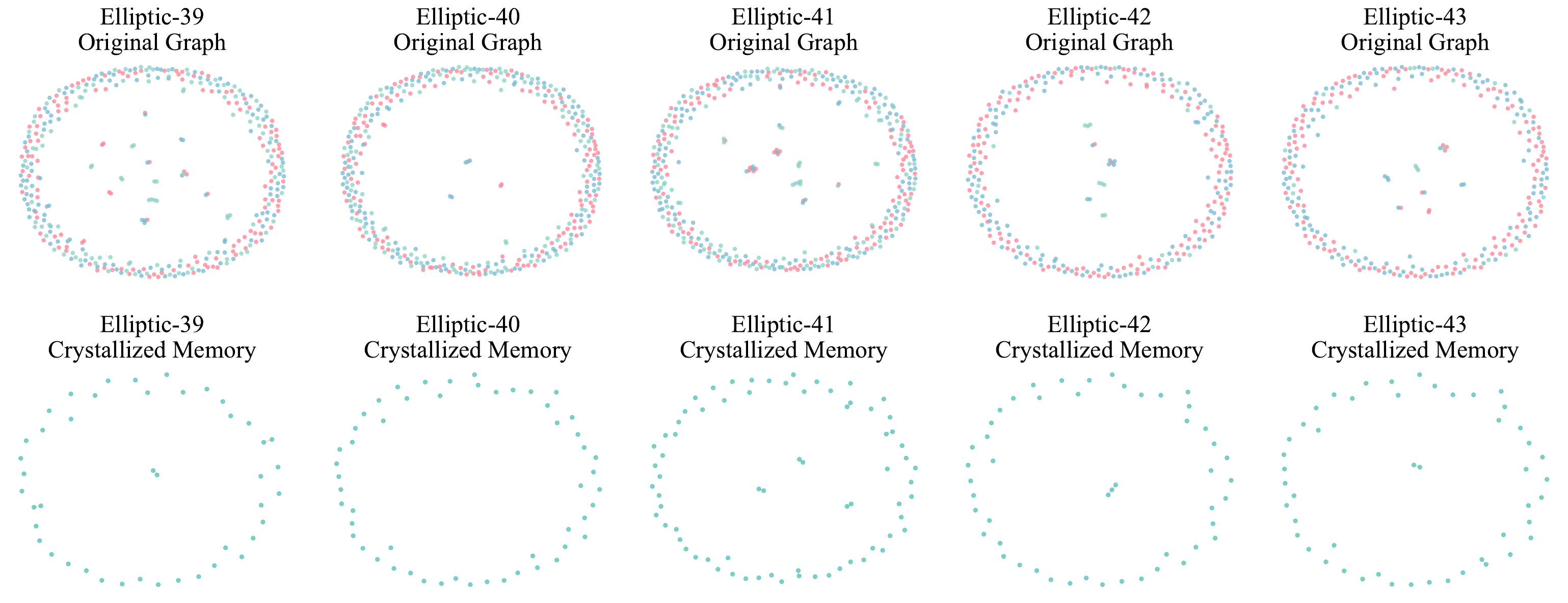}
    \caption{Structural visualization of the original graph and its crystallized memory on continuous-domain Elliptic graphs with GCN backbone (Part VII).}
    \label{fig:networkx-elliptic-gcn-7}
    \vspace{-1em}
\end{figure*}

\begin{figure*}[t]
    \centering
    \includegraphics[width=\linewidth]{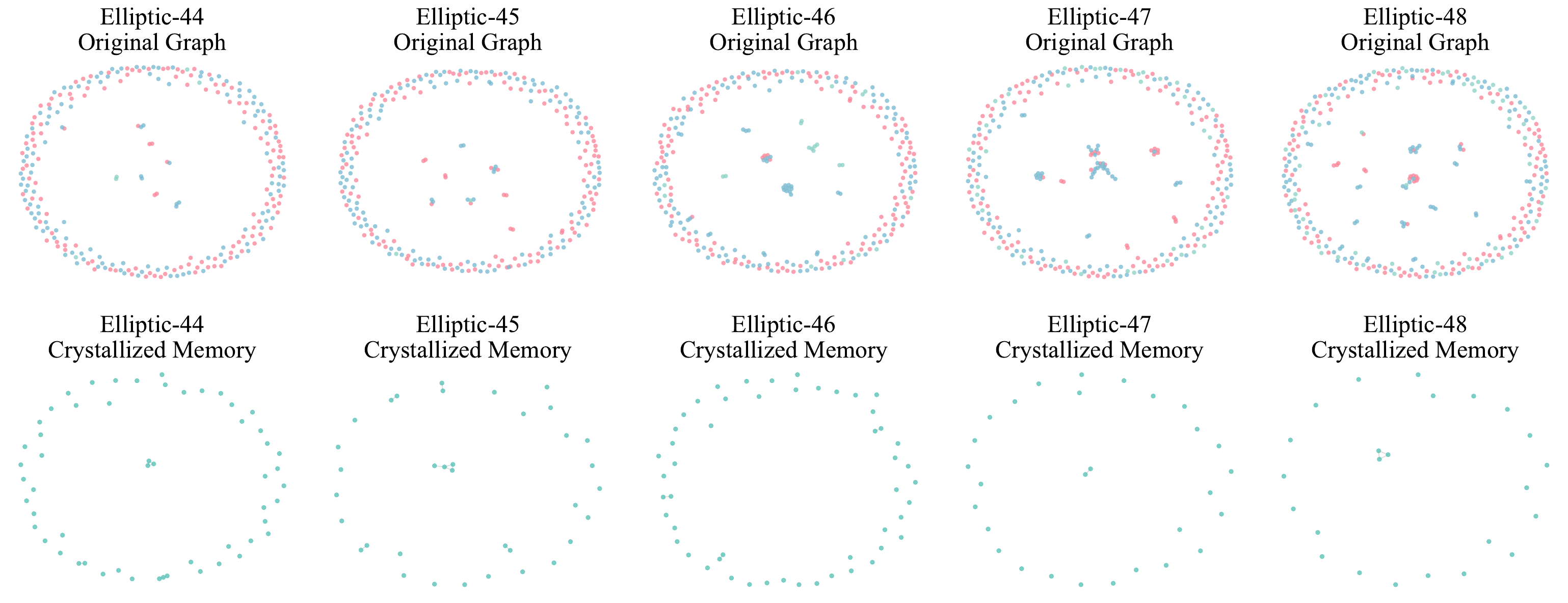}
    \caption{Structural visualization of the original graph and its crystallized memory on continuous-domain Elliptic graphs with GCN backbone (Part VIII).}
    \label{fig:networkx-elliptic-gcn-8}
    \vspace{-1em}
\end{figure*}

\begin{figure*}[t]
    \centering
    \includegraphics[width=\linewidth]{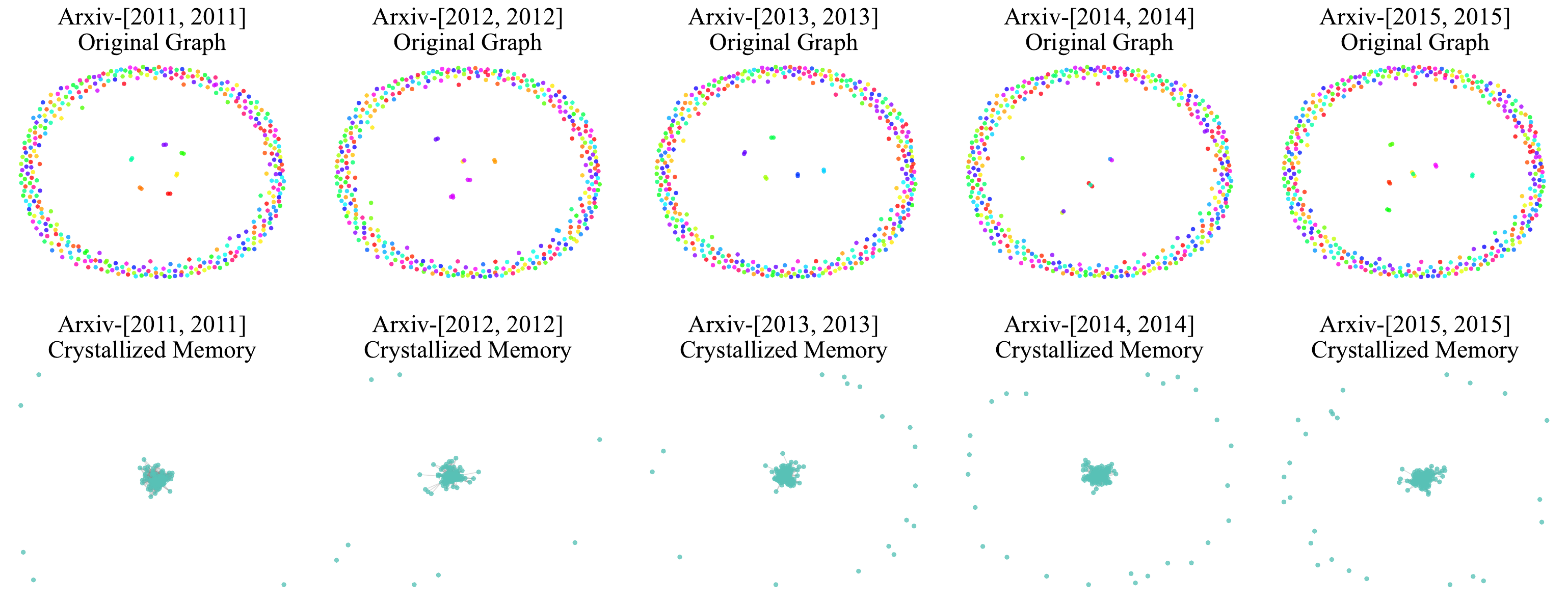}
    \caption{Structural visualization of the original graph and its crystallized memory on continuous-domain OGB-Arxiv graphs with GCN backbone (Part I).}
    \vspace{-1em}
\end{figure*}

\begin{figure*}[t]
    \centering
    \includegraphics[width=\linewidth]{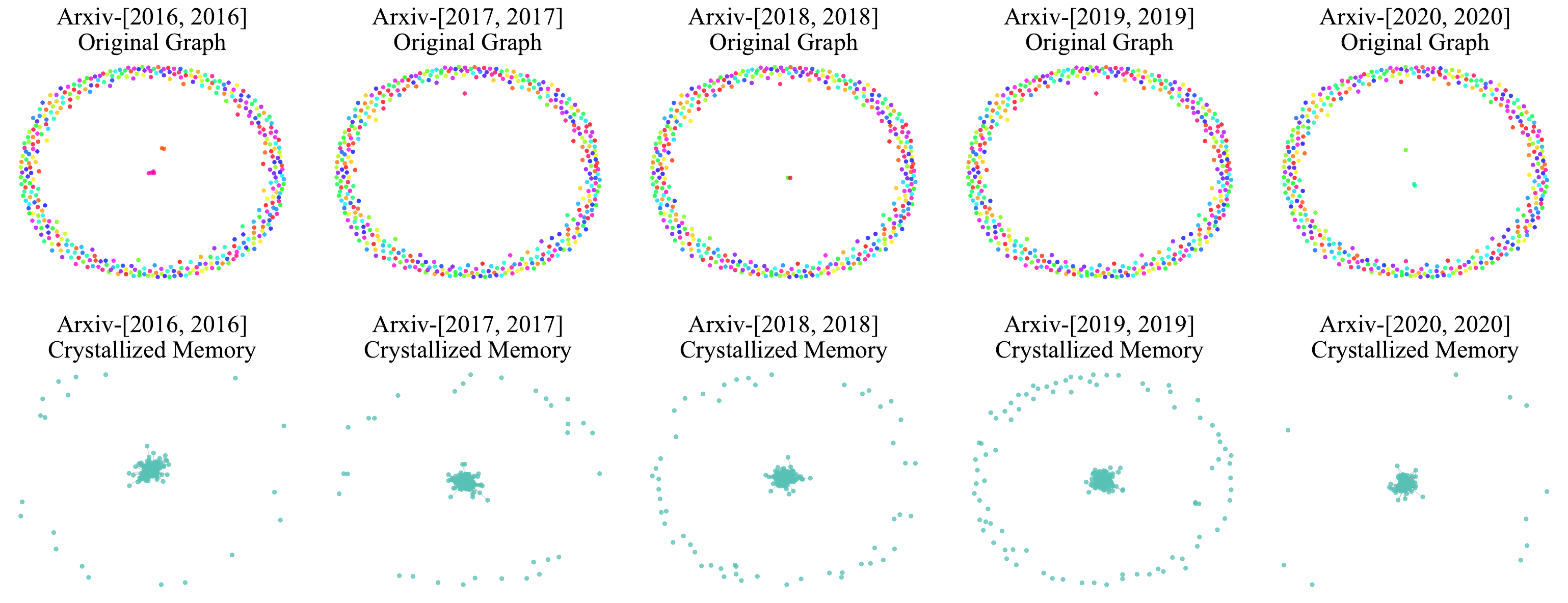}
    \caption{Structural visualization of the original graph and its crystallized memory on continuous-domain OGB-Arxiv graphs with GCN backbone (Part II).}
    \label{fig:networkx-end}
    \vspace{-1em}
\end{figure*}

\input{table/ap-effi}

\subsection{Additional Scaling Efficiency Analysis} \label{ap:scale}
To evaluate the efficiency of \ourmethod, we construct datasets by generating Erd\H{o}s--R\'enyi graphs with varying node and domain scales, based on the size characteristics of existing datasets.
In Figures~\ref{fig:eff2} and~\ref{fig:eff1} of the main paper, we report the runtime and peak GPU memory usage under increasing node and domain scales, where a red cross indicates an Out-Of-Memory (OOM) failure.
We provide detailed experimental settings below.

\textbf{Node Scaling Analysis.}
For the node scaling analysis, we fix the number of target domains to 10 and gradually increase the number of nodes per graph from 1k to 40k.
The node feature dimension is set to 1288, consistent with OGB-Arxiv.
The numerical results are reported in Table~\ref{tab:eff-1}.
This experiment analyzes how the efficiency of \ourmethod changes as each test sample increases in size.
Specifically, when the number of nodes exceeds 40k, GCAL fails due to OOM, while \ourmethod remains within GPU memory limits even at the largest tested scale.

\textbf{Domain Scaling Analysis.}
For the domain scaling analysis, we set the initial number of target domains to 10, with each graph containing 10k nodes and 128-dimensional node features (again aligned with OGB-Arxiv).
We then gradually increase the number of target domains by a scaling factor.
Detailed results are shown in Table~\ref{tab:eff-2}.
Compared to GCAL, which runs out of memory when the number of target domains reaches 50 or more, \ourmethod maintains stable runtime and GPU usage across all scales.

We also benchmark the memory crystallization step alone on a single graph up to \textbf{200k nodes} in Appendix~\ref{ap:scalability}, confirming that the closed-form inversion remains under one second even at this scale and is therefore not the bottleneck of the pipeline.

\textbf{Storage Cost of Crystallized Memories.} We additionally report the storage cost of the crystallized memories as the stream length increases in Table~\ref{tab:reb-1}.
Although the storage cost of crystallized memories grows with the number of domains, it remains small in absolute size. EMC maintains a low GPU memory footprint as the stream grows, requiring only 145.59 MB with 80 domains, while the crystallized memories occupy 39.07 MB. In comparison, GCAL exhibits substantially higher memory growth and runs out of GPU memory when the number of domains reaches 50.

\begin{table*}[t]
\centering
\caption{Scaling efficiency of GPU memory consumption and crystallized memory storage.}
\label{tab:reb-1}
\resizebox{\linewidth}{!}{
\begin{tabular}{l|cccccccc}
\toprule
\textbf{\#Domain} & 10 & 20 & 30 & 40 & 50 & 60 & 70 & 80 \\
\midrule
GCAL GPU (MiB)
& 6892.91 & 14239.83 & 21586.75 & 28933.67
& OOM & OOM & OOM & OOM \\
EMC GPU (MiB)
& 81.02 & 85.92 & 90.80 & 95.71
& 104.52 & 118.21 & 131.90 & 145.59 \\
Crystallized Memory (MiB)
& 4.88 & 9.76 & 14.65 & 19.53
& 24.41 & 29.30 & 34.18 & 39.07 \\
\bottomrule
\end{tabular}
}
\end{table*}

\textbf{Discussion on Memory Budgeting.}
A natural strategy is to cap the buffer at $K$ memories and manage it as in agent-memory mechanisms, for example keeping the most recent ones in a first-in-first-out manner, retaining those most representative by similarity, or maintaining a weighted moving average of past memories. Because crystallization is a closed-form operation on node features, the oldest memories can also be merged and re-crystallized into a single memory at negligible cost, yielding constant storage, which is not available to generative replay without retraining the generator.

\subsection{Additional Results of Parameter Study} \label{ap:sensi}
We provide additional results on hyperparameter sensitivity in Figure~\ref{fig:a} and Figure~\ref{fig:b} to further support our analysis.

\noindent\textbf{Sensitivity Analysis of $\tau$.}
We investigate the impact of $\tau$, the hyperparameter for graph sparsification, by varying it from 0.5 to 1.0.
As shown in Figure~\ref{fig:t}, \ourmethod exhibits robustness to a wide range of $\tau$ values, while proper tuning is still beneficial for optimal performance.

\begin{figure*}[t]
    \begin{minipage}{0.5\textwidth}
        \centering
        \includegraphics[width=\linewidth]{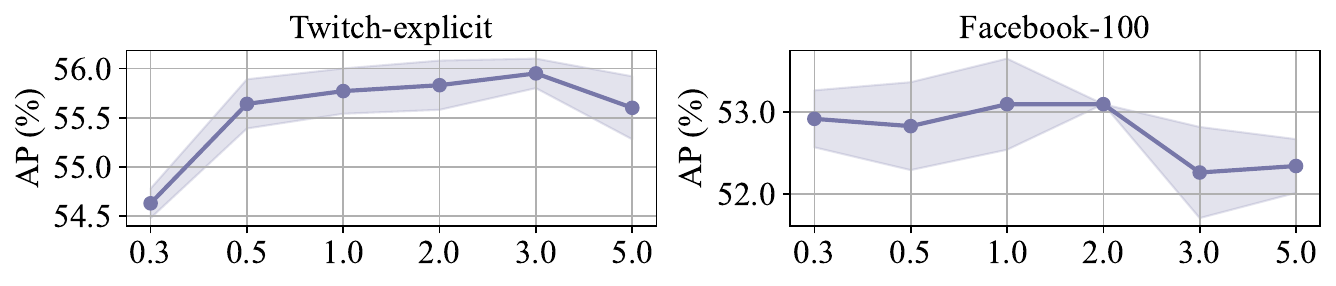}
        \caption{The sensitivity of hyperparameter $\alpha$.}
        \label{fig:a}
    \end{minipage}
    \begin{minipage}{0.5\textwidth}
        \centering
        \includegraphics[width=\linewidth]{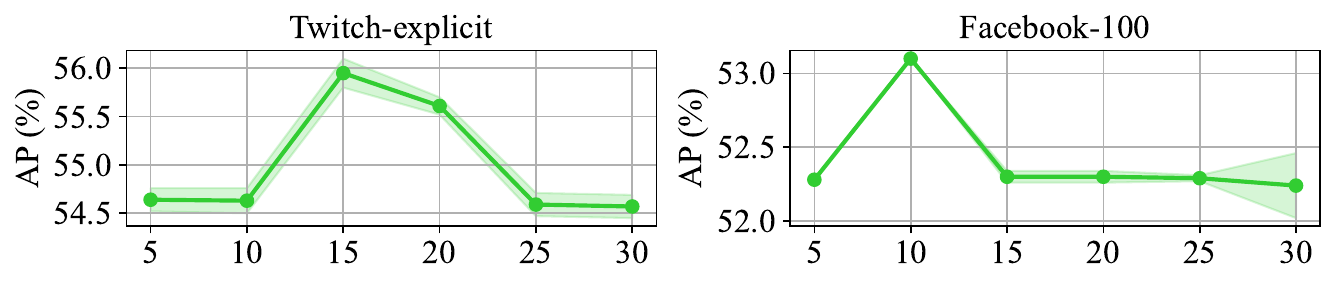}
        \caption{The sensitivity of hyperparameter $\beta$.}
        \label{fig:b}
    \end{minipage}
\end{figure*}

\begin{figure*}[t]
\centering
\includegraphics[width=0.5\linewidth]{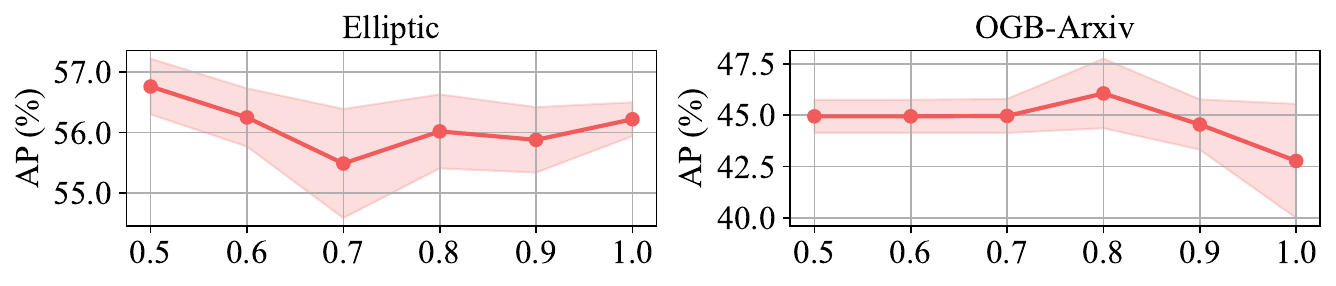}
\hspace{-0.2cm}
\includegraphics[width=0.5\linewidth]{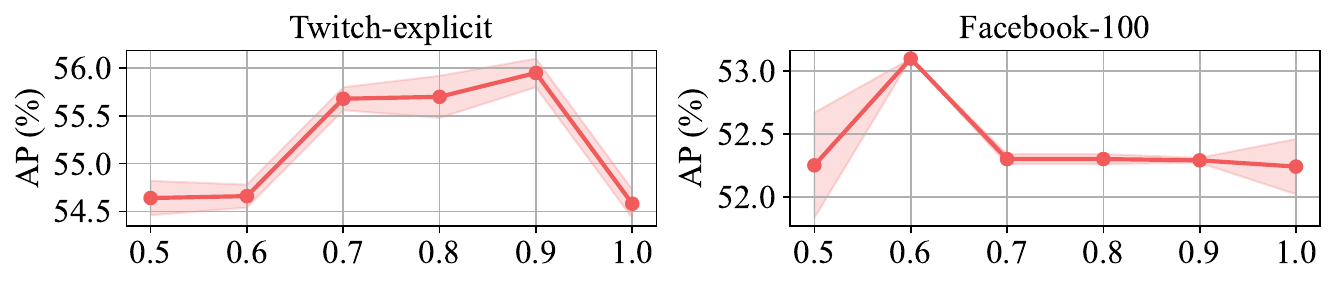}
\caption{The sensitivity of hyperparameter $\tau$.}
\label{fig:t}
\end{figure*}

\input{table/ap-lam}

\noindent\textbf{Analysis of Memory Size.}
To study the influence of the memory size on model performance, we define the relationship between the number of memory nodes $N'_{t}$ and the number of original sample nodes $N_{t}$ via a scaling factor $\lambda$, such that $N_{t} = \lambda^{-1} N'_{t}$.
In the main paper, we set $\lambda = 0.01$. We further analyze the sensitivity of $\lambda$ in Table~\ref{tab:sensi-N}.
The size of the memory has a minor effect on adaptation performance: the model maintains both generalizability and adaptability regardless of the scaling factor between $N'_{t}$ and $N_{t}$, demonstrating the robustness of \ourmethod.
Moreover, different values of $\lambda$ incur only negligible variation in runtime and GPU memory cost, indicating that constructing crystallized memories introduces minimal and stable computational overhead during adaptation.

\section{Broader Impacts}
\label{Impacts}
Distribution shifts arise pervasively in real-world graph applications, including social networks, recommendation systems, citation analysis, and financial transaction monitoring, where the underlying data evolves continually over time. By enabling graph models to adapt under such non-stationary shifts in a training-free and memory-efficient manner, \ourmethod contributes to deploying reliable graph learning systems at lower computational and energy cost, which we view as a positive societal outcome. We do not foresee specific negative societal impacts beyond those general to applied machine learning.

\textbf{Future Directions.}
A broader direction suggested by our results is to distinguish task-relevant structural evolution from redundant variations under continual distribution shifts. \ourmethod already follows this principle implicitly by crystallizing each incoming domain into a compact memory that preserves predictive information while suppressing redundant domain-specific content. Recent studies on graph OOD detection~\cite{hou2025structural,hou2026redundancy,hou2025test,hou2026graph} and anomaly detection~\cite{cai2025sdm,cai2026moegad,zhang2026escaping} provide complementary evidence that explicitly separating essential graph structure from redundancy can improve robustness under distribution shifts. Structural-entropy-based~\cite{hou2025structural,wu2022structural,wu2023sega} and redundancy-aware approaches~\cite{hou2026redundancy} aim to preserve informative structural patterns while filtering irrelevant topology, whereas test-time calibration methods~\cite{hou2025test,hou2026graph} dynamically summarize evolving distributional factors through adaptive dictionaries. These perspectives suggest a unified future direction for continual graph adaptation: first extract the task-relevant structural information of each incoming domain, then measure distribution drift within this essential representation, and finally use the estimated drift to control memory construction and adaptation strength. Such a design could allow graph models to respond more strongly to meaningful structural changes while remaining stable against redundant or transient fluctuations. Extending this framework to open-world streams with emerging semantic classes could further connect continual domain adaptation with novel class discovery \cite{hou2024nc2d}.

\textbf{Limitations.}
Despite its practical advantages, \ourmethod has two main limitations:
\begin{itemize}[leftmargin=1.5em]
    \item \textbf{Fixed hyperparameter configuration.} Our current experiments use a single fixed hyperparameter configuration for each dataset. The hyperparameters are not dynamically adjusted for each incoming domain, which may limit adaptation when the characteristics of distribution shifts vary substantially over time.
    
    \item \textbf{Lack of explicit drift measurement.} \ourmethod adapts sequentially under non-stationary distribution shifts, but does not explicitly estimate the magnitude of domain drift to determine the appropriate adaptation strength. Incorporating explicit drift estimation and drift-aware adaptation is a promising direction for future work.
\end{itemize}

%% file: chapter/add-proof.tex
The following result identifies the precise condition under which marginal alignment is equivalent to joint alignment, motivating the state-evolving design that follows.

\begin{proposition}
\label{ap-prop:4-3}
Given any pair of sequences of samples from the target domain, $\mathcal{T} = \{G^t\}_{t=1}^T$ and the corresponding crystallized memories $\tilde{\mathcal{T}} = \{\tilde{G}^t\}_{t=1}^T$, following the joint distributions $\mathbb{P}_{G^{1:T}}$ and $\mathbb{Q}_{\tilde{G}^{1:T}}$ respectively, with each $G^t$ and $\tilde{G}^t$ following the marginal distributions $\mathbb{P}_{G^t}$ and $\mathbb{Q}_{\tilde{G}^t}$. Then the following relation holds:
\vspace{-1em}
\begin{equation}
\mathcal{D}_{\text{KL}} (\mathbb{P}_{G^{1:T}} \,\|\, \mathbb{Q}_{\tilde{G}^{1:T}})
\equiv \sum_{t=1}^T  \mathcal{D}_{\text{KL}}(\mathbb{P}_{G^t}\,\|\,\mathbb{Q}_{\tilde G^t} )\quad \text{iff}\quad \mathbb{P}_{G^{1:T}} = \prod_{t=1}^T \mathbb{P}_{G^t},\quad \mathbb{Q}_{\tilde{G}^{1:T}} = \prod_{t=1}^T \mathbb{Q}_{\tilde{G}^t},
\end{equation}
where $\mathcal{D}_{\text{KL}}$ denotes the Kullback--Leibler divergence and $\equiv$ indicates that the two expressions are identically equal.
\end{proposition}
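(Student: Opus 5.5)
The plan is to reduce everything to one exact decomposition of the joint KL divergence into the marginal terms plus two correction terms. Write $p$ and $q$ for the densities (or mass functions) of $\mathbb{P}_{G^{1:T}}$ and $\mathbb{Q}_{\tilde G^{1:T}}$, and $p_t$, $q_t$ for their marginals. We assume absolute continuity $\mathbb{P}_{G^{1:T}}\ll\mathbb{Q}_{\tilde G^{1:T}}$ so that every term is finite. Inserting $\prod_t p_t$ and $\prod_t q_t$ into the log-ratio gives
\begin{equation*}
\log\frac{p}{q}=\sum_{t=1}^T\log\frac{p_t}{q_t}+\log\frac{p}{\prod_t p_t}-\log\frac{q}{\prod_t q_t}.
\end{equation*}
Taking $\mathbb{E}_{\mathbb{P}}$ and using that the marginal of $\mathbb{P}_{G^{1:T}}$ on coordinate $t$ is $\mathbb{P}_{G^t}$, we obtain
\begin{equation*}
\mathcal{D}_{\mathrm{KL}}(\mathbb{P}_{G^{1:T}}\|\mathbb{Q}_{\tilde G^{1:T}})=\sum_{t=1}^T\mathcal{D}_{\mathrm{KL}}(\mathbb{P}_{G^t}\|\mathbb{Q}_{\tilde G^t})+C(\mathbb{P})-\mathbb{E}_{\mathbb{P}}\Big[\log\frac{q}{\prod_t q_t}\Big].
\end{equation*}
Here $C(\mathbb{P})=\mathcal{D}_{\mathrm{KL}}(\mathbb{P}_{G^{1:T}}\|\prod_t\mathbb{P}_{G^t})\ge0$ is the total correlation of $\mathbb{P}$. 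The whole proposition then reads: the gap $\Delta=C(\mathbb{P})-\mathbb{E}_{\mathbb{P}}[\log(q/\prod_t q_t)]$ vanishes identically if and only if both joints factorize.

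\textbf{Sufficiency.} If $p=\prod_t p_t$ and $q=\prod_t q_t$, both correction terms are zero pointwise, so the identity holds. Equivalently, the log-ratio splits into a sum of functions of single coordinates, and each expectation reduces to its marginal.

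\textbf{Necessity.} Here the meaning of ``$\equiv$'' matters. Pointwise equality for a single fixed pair cannot imply factorization: if $\mathbb{P}=\mathbb{Q}$ is any correlated law, both sides are $0$. I would therefore read ``identically equal'' as equality holding for every admissible partner with the prescribed marginals. That is, the identity holds for all $\mathbb{Q}$ with marginals $\{\mathbb{Q}_{\tilde G^t}\}$ when $\mathbb{P}$ is fixed, and symmetrically with the roles of $\mathbb{P}$ and $\mathbb{Q}$ exchanged.

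Under this reading, first take $\mathbb{Q}=\prod_t\mathbb{Q}_{\tilde G^t}$. The last term of $\Delta$ vanishes, so $\Delta=0$ forces $C(\mathbb{P})=0$. By Gibbs' inequality this gives $\mathbb{P}_{G^{1:T}}=\prod_t\mathbb{P}_{G^t}$.

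With $\mathbb{P}$ now a product, the identity reduces to $\mathbb{E}_{\prod_t\mathbb{P}_{G^t}}[\log(q/\prod_t q_t)]=0$. This must hold for every admissible product law with the prescribed marginals, in particular for $\mathbb{P}$ ranging over products whose coordinates include $\mathbb{Q}_{\tilde G^t}$-like choices. I would take $\mathbb{P}_{G^t}=\mathbb{Q}_{\tilde G^t}$ for all $t$. The condition then says
\begin{equation*}
\mathbb{E}_{\prod_t\mathbb{Q}_{\tilde G^t}}\big[\log(q/\textstyle\prod_t q_t)\big]=-\mathcal{D}_{\mathrm{KL}}\big(\prod_t\mathbb{Q}_{\tilde G^t}\,\|\,\mathbb{Q}_{\tilde G^{1:T}}\big)=0,
\end{equation*}
which gives $q=\prod_t q_t$ almost everywhere.

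\textbf{Main obstacle.} The hard step is justifying this quantification. For a single pair the ``only if'' direction is false, so the proof must state explicitly that ``$\equiv$'' means equality as an identity over the family of distributions sharing the given marginals. Equivalently, it means the joint-minus-marginal gap is zero for all such laws. Once that is fixed, the remaining steps are the chain-rule decomposition above and two applications of the nonnegativity of KL with its equality case. I would also record in a remark that correlated domains, as arise under continual drift, make $C(\mathbb{P})>0$ generically, so matching marginals alone cannot control the joint divergence.
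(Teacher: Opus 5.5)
Your decomposition $\mathcal{D}_{\mathrm{KL}}(\mathbb{P}_{G^{1:T}}\|\mathbb{Q}_{\tilde G^{1:T}})=\sum_t\mathcal{D}_{\mathrm{KL}}(\mathbb{P}_{G^t}\|\mathbb{Q}_{\tilde G^t})+C(\mathbb{P})-\mathbb{E}_{\mathbb{P}}[\log(q/\prod_t q_t)]$ is correct. It gives sufficiency just as the paper's chain-rule expansion into conditional KL terms does. Your counterexample is also right: for a fixed pair the ``only if'' is false, since a correlated $\mathbb{P}=\mathbb{Q}$ makes both sides zero. The same example defeats the paper's own necessity step. There every conditional and every marginal KL term vanishes, so no appeal to strict convexity can force the conditionals to equal the marginals.

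\textbf{The gap is in your repair:} the two directions end up proved for two different statements. Your sufficiency paragraph establishes the identity only for the given pair. Under the reading you adopt (the identity must hold against every partner with the prescribed marginals), sufficiency is false. Take $\mathbb{P}=\mathbb{Q}=\prod_t\mathbb{Q}_{\tilde G^t}$, so the factorization condition holds, and let $\mathbb{Q}'$ be any correlated law with the same marginals. The right side is then $0$, while the left side is $\mathcal{D}_{\mathrm{KL}}(\prod_t\mathbb{Q}_{\tilde G^t}\|\mathbb{Q}')>0$.

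Your second necessity step also leaves the family you declared. Once $\mathbb{P}$ is a product, the only product law with the prescribed marginals $\{\mathbb{P}_{G^t}\}$ is $\mathbb{P}$ itself, and setting $\mathbb{P}_{G^t}=\mathbb{Q}_{\tilde G^t}$ changes those marginals. Restricted as you wrote, the condition is just $\mathbb{E}_{\mathbb{P}}[\log(q/\prod_t q_t)]=0$, and this does not force $q=\prod_t q_t$. Take a correlated, fully supported $\mathbb{Q}$ on a finite space. The expectation is negative at $\prod_t\mathbb{Q}_{\tilde G^t}$ and positive at a point mass $\delta_x$ with $q(x)>\prod_t q_t(x_t)$. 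Hence it vanishes at some product law on the path $\prod_t((1-\lambda)\mathbb{Q}_{\tilde G^t}+\lambda\delta_{x_t})$, even though $\mathbb{Q}$ is correlated.

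\textbf{The fix} is to quantify over product partners rather than over partners with prescribed marginals. Read ``$\equiv$'' as ``the identity holds for $\mathbb{P}_{G^{1:T}}$ against every product law, and for $\mathbb{Q}_{\tilde G^{1:T}}$ against every product law.'' Sufficiency then holds, since every test pair consists of two product laws and your pointwise argument applies verbatim. Necessity follows from exactly your two test pairs. Against any product partner the gap is $C(\mathbb{P})$, and the pair $(\prod_t\mathbb{Q}_{\tilde G^t},\mathbb{Q}_{\tilde G^{1:T}})$ has gap $\mathcal{D}_{\mathrm{KL}}(\prod_t\mathbb{Q}_{\tilde G^t}\|\mathbb{Q}_{\tilde G^{1:T}})$.

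Your standing assumption $\mathbb{P}_{G^{1:T}}\ll\mathbb{Q}_{\tilde G^{1:T}}$ does not cover these two test pairs. Still, if absolute continuity fails for either, the left side is $+\infty$ while the right side is finite (it is $\sum_t\mathcal{D}_{\mathrm{KL}}(\mathbb{P}_{G^t}\|\mathbb{Q}_{\tilde G^t})$, resp.\ $0$). So the identity fails there and the conclusion is unaffected.

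Alternatively, keep only the fixed-pair ``if'' direction and replace the false ``only if'' by your exact gap formula. For an independent partner the gap equals $C(\mathbb{P})$, which is all the paper's subsequent remark needs: marginal alignment cannot control the joint divergence for correlated domains.
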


\begin{proof}
First, we prove sufficiency: if the snapshots are temporally independent, then the two expressions coincide. By the definition of KL divergence and its chain rule, the divergence between the joint distributions of $\mathcal{T}$ and $\tilde{\mathcal{T}}$ can be expanded as:
\begin{equation}
\begin{aligned}
&\mathcal{D}_{\text{KL}}(\mathbb{P}_{G^{1:T}}\|\,\mathbb{Q}_{\tilde G^{1:T}})\\
=& \sum_{G^{1:T}} \mathbb{P}_{G^{1:T}}\log\frac{\mathbb{P}_{G^{1:T}}}{\mathbb{Q}(\tilde G^{1:T})} \\
=& \sum_{G^{1:T}} \mathbb{P}_{G^{1:T}}\log\frac{\mathbb{P}(G^1)}{\mathbb{Q}(\tilde G^1)}
   + \sum_{G^{1:T}} \mathbb{P}_{G^{1:T}} \,\log\frac{\prod_{t=2}^T \mathbb{P}(G^t\mid G^{1:t-1})}
                                            {\prod_{t=2}^T \mathbb{Q}(\tilde G^t\mid \tilde G^{1:t-1})}\\
=& \sum_{G^1} \mathbb{P}(G^1)\,\log\frac{\mathbb{P}(G^1)}{\mathbb{Q}(\tilde G^1)}
+\sum_{t=2}^T \sum_{G^{1:t-1}} \mathbb{P}(G^{1:t-1})
      \Bigl[\sum_{g^t} \mathbb{P}(G^t\mid G^{1:t-1})
             \log\frac{\mathbb{P}(G^t\mid G^{1:t-1})}{\mathbb{Q}(\tilde G^t\mid \tilde G^{1:t-1})}\Bigr]\\
=& \mathcal{D}_{\text{KL}}(\mathbb{P}_{G^1}\|\,\mathbb{Q}_{\tilde G^1})
   + \sum_{t=2}^T \mathcal{D}_{\text{KL}}(\mathbb{P}_{G^t\mid G^{1:t-1}}\|\,\mathbb{Q}_{\tilde G^t\mid \tilde G^{1:t-1}}).
\end{aligned}
\end{equation}

Under the assumption of temporal independence, \ie, $\mathbb{P}_{G^{1:T}} = \prod_{t=1}^T \mathbb{P}_{G^t}$ and $\mathbb{Q}_{\tilde{G}^{1:T}} = \prod_{t=1}^T \mathbb{Q}_{\tilde{G}^t}$, each conditional distribution $\mathbb{P}_{G^t \mid G^{1:t-1}}$ reduces to the marginal $\mathbb{P}_{G^t}$, and likewise for $\mathbb{Q}$. As a result, the sum of conditional KL terms collapses to $\sum_{t=2}^T \mathcal{D}_{\text{KL}}(\mathbb{P}_{G^t}\,\|\,\mathbb{Q}_{\tilde{G}^t})$. Hence the joint KL over the graph--memory divergence collapses to:
\begin{equation}
\begin{aligned}
\mathcal{D}_{\text{KL}}(\mathbb{P}_{G^{1:T}}\|\,\mathbb{Q}_{\tilde G^{1:T}})
= \sum_{t=1}^T \mathcal{D}_{\text{KL}}(\mathbb{P}_{G^t}\|\,\mathbb{Q}_{\tilde G^t}).
\end{aligned}
\end{equation}

Conversely, if the equality between the joint KL and the sum of marginal KLs holds, then by the chain-rule expansion above, every conditional KL term $\mathcal{D}_{\text{KL}}(\mathbb{P}_{G^t\mid G^{1:t-1}}\|\,\mathbb{Q}_{\tilde G^t\mid \tilde G^{1:t-1}})$ must equal $\mathcal{D}_{\text{KL}}(\mathbb{P}_{G^t}\|\,\mathbb{Q}_{\tilde G^t})$ for all $t$. By the strict convexity of KL divergence, this forces $\mathbb{P}_{G^t\mid G^{1:t-1}}=\mathbb{P}_{G^t}$ and $\mathbb{Q}_{\tilde G^t\mid \tilde G^{1:t-1}}=\mathbb{Q}_{\tilde G^t}$ almost surely, i.e., temporal independence.
\end{proof}

\noindent \textbf{Remark.}
Proposition~\ref{ap-prop:4-3} is a \emph{biconditional}: it characterizes precisely when summing per-domain KL divergences equals the joint KL divergence, namely when and only when the domains (and their crystallized memories) are independent. Aligning the marginal distributions of individual graphs $G^{(t)}$ is therefore sufficient to ensure joint distribution alignment over the given sequence of test samples \emph{if and only if} the memories are independent.
Since real test-time domains are generally correlated under continual drift, this implies that aligning marginal distributions alone is insufficient, motivating us to model the relationships among continuously arriving test samples under covariate shift through the state evolving field over crystallized memories.

%% file: table/ap-dataset-info-tab.tex
\begin{table*}[t]
\caption{The statistics of datasets with distribution shifts.} 

\centering
\resizebox{0.75\textwidth}{!}{ 
\begin{tabular}{c|cccc}
\toprule
\textbf{Category} & \textbf{Datasets}  & \textbf{\#Nodes}  & \textbf{\#Edges}  & \textbf{\#Domains} \\
\midrule
\multirow{2}{*}{\makecell[c]{Regional\\Shifts}}  
& Twitch-explicit    & 1,912 - 9,498  & 31,299 - 153,138  &  7\\
& Facebook-100       & 769 - 41,554   & 33,312 - 2,724,458  & 12\\
\midrule

\multirow{2}{*}{\makecell[c]{Temporal\\Shifts\\}}  
& Elliptic           & 1,089 - 7,880       & 1,168 - 9,164          & 41\\ 
& OGB-Arxiv          & 4,427 - 39,711        & 1,225 - 38,735      & 11\\
\bottomrule
\end{tabular}
}
\label{table_dataset_intro}
\end{table*}

%% file: table/ap-effi.tex
\begin{table*}[t]
\caption{Node scaling efficiency evaluation with runtime (s) and GPU memory usage (MiB).}
  \centering
  \resizebox{\textwidth}{!}{
    \begin{tabular}{c|c|ccccccccc}
    \toprule
    \textbf{Methods} & \textbf{Metric / \#Node} & 1000 & 2000 & 5000 & 10000 & 15000 & 20000 & 25000 & 30000 & 40000 \\
    \midrule
    \multirow{2}[0]{*}{GCAL} & Runtime (s) & 25.41 & 33   & 97.77 & 391.87 & 918.07 & 1694.24 & OOM  & OOM  & OOM \\
         & GPU Memory (MiB)  & 91.11 & 304.42 & 1754.81 & 6892.91 & 15472.69 & 27462.27 & OOM  & OOM  & OOM \\
    \midrule
    \multirow{2}[0]{*}{\textbf{\ourmethod}} & Runtime (s) & \textbf{1.75} & \textbf{2.72} & \textbf{25.89} & \textbf{210.52} & \textbf{407.95} & \textbf{593.20} & \textbf{792.00} & \textbf{1207.47} & \textbf{2213.20} \\
         & GPU Memory (MiB)  & \textbf{20.72} & \textbf{25.06} & \textbf{42.2} & \textbf{81.02} & \textbf{131.6} & \textbf{200.07} & \textbf{ 278.16} & \textbf{366.92} & \textbf{597.48} \\
    \bottomrule
    \end{tabular}%
    }
  \label{tab:eff-1}%
\end{table*}%

\begin{table*}[t]
\caption{Domain scaling efficiency evaluation with runtime (s) and GPU memory usage (MiB).}
  \centering
  \resizebox{\textwidth}{!}{
    \begin{tabular}{c|c|cccccccc}
    \toprule
    \textbf{Methods} & \textbf{Metric / \#Domain} & 1$\times$  & 2$\times$  & 3$\times$  & 4$\times$  & 5$\times$  & 6$\times$  & 7$\times$  & 8$\times$\\
    \midrule
    \multirow{2}[0]{*}{GCAL} & Runtime (s) & 430.04 & 1056.97 & 1713.34 & 2460.46 & OOM  & OOM  & OOM  & OOM \\
         & GPU Memory (MiB)  & 6892.91 & 14239.83 & 21586.75 & 28933.67 & OOM  & OOM  & OOM  & OOM \\
    \midrule
    \multirow{2}[0]{*}{\textbf{\ourmethod}} & Runtime (s) & \textbf{215.35} & \textbf{431.27} & \textbf{646.81} & \textbf{863.62} & \textbf{1079.35} & \textbf{1296.11} & \textbf{1513.47} & \textbf{1730.93} \\
         & GPU Memory (MiB)  & \textbf{81.02} & \textbf{85.92} & \textbf{90.80} & \textbf{95.71} & \textbf{104.52} & \textbf{118.21} & \textbf{131.90} & \textbf{145.59} \\
    \bottomrule
    \end{tabular}%
    }
  \label{tab:eff-2}%

\end{table*}%

%% file: table/ap-lam.tex
\begin{table*}[!t]
\caption{Performance comparison of \ourmethod under different values of $\lambda$ across multiple metrics, with corresponding runtime (s) and GPU memory usage (MiB). `$\uparrow$' indicates higher is better, and `$\downarrow$' indicates lower is better.}

\centering

\begin{minipage}[t]{0.50\textwidth}
\centering

\resizebox{\linewidth}{!}{
\begin{tabular}{c|c|ccc|cc}
\toprule
\textbf{Dataset} & {$\lambda$} & AP ($\uparrow$) & TDG ($\uparrow$) & TDA ($\uparrow$) & Time ($\downarrow$) & GPU ($\downarrow$) \\
\midrule

\multirow{6}{*}{\textbf{Twitch}} 
    & 0.01 & 55.95±0.15 & 55.56±0.13 & 55.83±0.14 & 74.64 & 854.97 \\
         & 0.05 & 54.57±0.15 & 55.46±0.15 & 53.90±0.12 & 74.92 & 854.98 \\
         & 0.1  & 54.57±0.15 & 55.46±0.15 & 53.89±0.13 & 74.90 & 854.97 \\
         & 0.2  & 54.57±0.16 & 55.46±0.15 & 53.90±0.12 & 74.84 & 854.97 \\
         & 0.3  & 54.56±0.16 & 55.46±0.15 & 53.88±0.12 & 75.09 & 854.11 \\
         & 0.5  & 54.59±0.16 & 55.47±0.15 & 53.88±0.12 & 74.67 & 854.97 \\
\midrule
\multirow{6}{*}{\textbf{Facebook}} 
    & 0.01 & 53.10±0.00 & 53.49±0.27 & 52.30±0.04 & 18.84 & 3006.08 \\
         & 0.05 & 53.10±0.00 & 53.45±0.35 & 52.29±0.02 & 18.84 & 3006.08 \\
         & 0.1  & 53.10±0.00 & 53.45±0.35 & 52.30±0.03 & 18.92 & 3006.08 \\
         & 0.2  & 53.10±0.00 & 53.46±0.33 & 52.30±0.04 & 18.92 & 3006.08 \\
         & 0.3  & 53.10±0.00 & 53.36±0.53 & 52.30±0.03 & 18.98 & 3006.08 \\
         & 0.5  & 53.10±0.00 & 53.50±0.24 & 52.30±0.04 & 18.93 & 3006.08 \\

\bottomrule
\end{tabular}
}
\end{minipage}
\hfill
\begin{minipage}[t]{0.48\textwidth}
\centering
\resizebox{\linewidth}{!}{
\begin{tabular}{c|c|ccc|cc}
\toprule
\textbf{Dataset} & {$\lambda$} & AP ($\uparrow$) & TDG ($\uparrow$) & TDA ($\uparrow$) & Time ($\downarrow$) & GPU ($\downarrow$) \\
\midrule

\multirow{6}{*}{\textbf{Elliptic}} 
& 0.01 & 56.76±0.46 & 58.73±0.40 & 55.92±0.31 & 22.76 & 189.36 \\
         & 0.05 & 56.18±0.25 & 58.44±0.24 & 55.96±0.28 & 23.38 & 189.36 \\
         & 0.1  & 56.20±0.24 & 58.48±0.19 & 55.96±0.27 & 23.52 & 189.36 \\
         & 0.2  & 56.23±0.24 & 58.53±0.19 & 55.98±0.23 & 23.19 & 189.36 \\
         & 0.3  & 56.22±0.21 & 58.56±0.20 & 56.02±0.21 & 23.18 & 189.36 \\
         & 0.5  & 56.25±0.24 & 56.03±0.19 & 58.58±0.23 & 22.98 & 189.36 \\
\midrule

\multirow{6}{*}{\textbf{Arxiv}} 
& 0.01 & 46.07±1.70 & 47.11±1.80 & 43.52±1.25 & 24.84 & 391.02 \\
         & 0.05 & 46.06±1.54 & 47.38±1.54 & 47.38±1.54 & 25.54 & 391.05 \\
         & 0.1  & 46.00±1.59 & 47.29±1.62 & 47.29±1.62 & 25.73 & 390.98 \\
         & 0.2  & 46.02±1.63 & 47.20±1.70 & 43.50±1.23 & 25.62 & 391.05 \\
         & 0.3  & 43.91±1.65 & 47.16±1.73 & 43.50±1.23 & 26.12 & 391.05 \\
         & 0.5  & 43.85±1.70 & 47.11±1.79 & 43.48±1.26 & 25.70 & 391.05 \\

\bottomrule
\end{tabular}
}
\end{minipage}

\label{tab:sensi-N}
\end{table*}